\newcommand{\citep}{\cite}
\newcommand{\citet}{\cite}
\newif\ifenablecomments
\title{Limit Theorems for Entropy-Regularized and Distributional Reinforcement Learning}
\title{Convergence Theorems for Entropy-Regularized and Distributional Reinforcement Learning}
\newif\ifgambit
\author{%
Yash Jhaveri\thanks{Equal contribution. Correspondence to \texttt{yash.jhaveri@rutgers.edu}, \texttt{wiltzerh@mila.quebec}.}\\
Rutgers University--Newark\\
\And
Harley Wiltzer\footnotemark[1]\\
Mila--Qu\'ebec AI Institute\\McGill University
\And
Patrick Shafto\\
Rutgers University--Newark\\
\AND
Marc G. Bellemare\thanks{CIFAR AI Chair.}\\
Mila--Qu\'ebec AI Institute\\McGill University
\And
David Meger\\
Mila--Qu\'ebec AI Institute\\
McGill University
}
\begin{document}

\maketitle

\setcounter{footnote}{0} 

\begin{abstract}
In the pursuit of finding an optimal policy, reinforcement learning (RL) methods generally ignore the properties of learned policies apart from their expected return.
Thus, even when successful, it is difficult to characterize which policies will be learned and what they will do.
In this work, we present a theoretical framework for policy optimization that guarantees convergence to a particular optimal policy, via vanishing entropy regularization
\ifgambit
and a \emph{temperature decoupling gambit}.
\else
and a novel \emph{temperature decoupling mechanism}.
\fi
Our approach realizes an interpretable, diversity-preserving optimal policy as the regularization temperature vanishes and ensures the convergence of policy derived objects--value functions and return distributions.
In a particular instance of our method, for example, the realized policy samples all optimal actions uniformly.
Leveraging our temperature decoupling \ifgambit gambit\else mechanism\fi, we present an algorithm that estimates, to arbitrary accuracy, the return distribution associated to its interpretable, diversity-preserving optimal policy.
\end{abstract}

\section{Introduction}
In generic Markov Decision Processes (MDPs), many optimal policies exist.
Thus, while certain policy optimization approaches can ensure convergent approximation to an optimal policy, they do not have control over which states these policies will visit, which actions they will play, or which long-term returns they can achieve.
Indeed, the non-uniqueness of optimal policies renders any discussion of the properties of an optimal policy ambiguous, beyond its expected value.

A partial remedy to this problem is to regularize the RL objective in order to induce uniqueness.
One popular approach to regularization is to penalize the value of a policy according to its $\rm{KL}$ divergence to a reference policy $\piref$.
This branch of RL is known as entropy-regularized RL (ERL).
In ERL, for any positive regularization weight $\temp$ (also known as \emph{temperature}), one and only one policy is optimal.
Moreover, in a tabular MDP, $\temp$-optimal policies and their derived objects (value functions, occupancy measures, and return distributions) converge to classically optimal policies and their derived objects.
However, beyond tabular MDPs, the evolution of $\temp$-optimal quantities, as a function of the temperature, is not well understood. Thus, as we decay the temperature to zero, we are, in some sense, back to where we started: living in ambiguity.

\ifgambit
In this work, we introduce a \emph{temperature decoupling gambit}, through which we can guarantee the convergence of resulting policies and their derived objects in the vanishing temperature limit.
Much like how a gambit in chess sacrifices an immediate and shallow proxy of the objective (e.g., material count) for a long term positional advantage, the temperature decoupling gambit plays notably \emph{suboptimal} policies for the $\temp$-ERL objective to ensure convergence to RL optimality as $\temp\to 0$.
\else
In this work, we introduce a novel \emph{temperature decoupling mechanism} under which we can guarantee convergence of resulting policies and their derived objects in the vanishing temperature limit.
\fi
This scheme entails estimating action-values under a target regularization temperature while playing policies with an amplified temperature.
Furthermore, we characterize this limiting policy as a modification of the reference policy which ``filters out'' suboptimal actions.
Even when $\temp$-optimal policies converge in the vanishing temperature limit (such as in tabular MDPs), the limiting policy produced by the temperature decoupling \ifgambit gambit\else mechanism\fi\ is distinct from the limiting policy found otherwise.
The limiting policy found via our \ifgambit gambit\else mechanism\fi\ preserves, quantifiably, more state-wise action diversity.
Moreover, we show that this limiting policy achieves a notion of \emph{reference-optimality} for RL, characterized by a new Bellman-like equation, whose unique fixed point upper bounds the (RL) performance of $\temp$-optimal policies in general.

Our analysis additionally sheds light on the convergence of return distributions---the central objects of study in distributional RL (DRL) \citep{bellemare2023distributional}.
While optimal policies achieve the same return in expectation, they may vary drastically in other statistics, such as variance.
In safety-critical applications, for example, understanding the distribution over returns is crucial.
DRL provides techniques for estimating return distributions, primarily based on distributional dynamic programming methods which generalize dynamic programming approaches for estimating expected returns.
However, it is well-known that existing distributional methods do not produce convergent iterates in the control setting \citep{bellemare2017distributional}.
Leveraging our convergence results for policies in ERL, we define the first algorithm for accurately estimating a reference-optimal return distribution, the return distribution associated to the interpretable, diverse policy realized by the temperature decoupling \ifgambit gambit\else mechanism\fi.

\section{Preliminaries}
\label{sec: prelims}
Given a Borel set $\setfont{S} \subset \R^n$, for some $n \in \N$, we let $M(\setfont{S})$ and $M_b(\setfont{S})$ denote the space of Borel measurable and bounded Borel measurable functions on $\setfont{S}$ respectively.
We let $\probset{\setfont{S}}$ denoted the space of Borel probability measures on $\setfont{S}$.
From now on, measurability will always be with respect to Borel sets.
Moreover, for any $\rho \in \probset{\setfont{Y}}$ with $\setfont{Y} \subset \R^m$ and any measurable function $f:\setfont{Y}\to\setfont{S}$, the \emph{push-forward of $\rho$ by $f$} is  $f_\#\rho := \rho\circ f^{-1} \in \probset{\setfont{S}}$. 
Here $f^{-1}$ is the preimage of $f$.

We single out two particular functions.
The function $\proj{\setfont{Y}_k}: \setfont{Y}_1 \times \cdots \times \setfont{Y}_n \to\setfont{Y}_k$ defined by $\proj{\setfont{Y}_k}(y_1, \dots, y_k, \dots, y_n) := y_k$ is the \emph{projection function} of $\setfont{Y}^n$ onto $\setfont{Y}_k$.
We note that the push-forward of the projection map is marginalization: $\nu^\mu := \proj{\setfont{Y}}_\# \mu$ is the $\setfont{Y}$-marginal of $\mu \in \probset{\setfont{Y} \times \setfont{Z}}$.
The \emph{bootstrap function} $\bootfn[b]{a} : \R \to \R$ is defined by $\bootfn[b]{a}(z) := a + bz$ from \citet{bellemare2023distributional}.

Our analysis works with conditional distributions, which we formalize as probability kernels, as well as a tensor-product notation constructing product measures and for disintegrating product measures.
For any $\setfont{Y}\subset\R^m$ and $\setfont{Z}\subset\R^n$, the space of (Borel) \emph{probability kernels} from $\setfont{Y}$ to $\setfont{Z}$, denoted $\kernspace{\setfont{Y}}{\setfont{Z}}$, is the set of all indexed measures $\lambda$ for which $y\mapsto\lambda_y(\setfont{S})$ is measurable for each $\setfont{S}\in \borel{\setfont{Z}}$, the Borel subsets of $\setfont{Z}$. 
Given $\lambda\in\kernspace{\setfont{Y}}{\setfont{Z}}$ and $\rho \in \probset{\setfont{Y}}$, the \emph{generalized product measure} $\lambda_{\dummy}\otimes\rho\in\probset{\setfont{Y}\times\setfont{Z}}$ is defined as follows:
\begin{align*}
    \int \phi\,\dif(\lambda_{\dummy}\otimes\rho)
    := \int \bigg[ \int \phi(y, z)\,\dif\lambda_y(z)\bigg]\,\dif\rho(y) \quad\forall \phi \in M(\setfont{Y}\times\setfont{Z}).
\end{align*}
Additionally, we can disintegrate any $\mu \in \probset{\setfont{Y}\times\setfont{Z}}$ as a generalized product between either of its marginals and the induced conditional probabilities:
\[
    \mu = \pi^\mu_\dummy \otimes \nu^\mu \quad\text{where}\quad \nu^\mu := \proj{\setfont{Y}}_\# \mu \quad\text{and}\quad \pi^\mu \in \kernspace{\setfont{Y}}{\setfont{Z}}.
\]
An important subset of $\kernspace{\setfont{Y}}{\setfont{Z}}$ consists of those kernels with bounded $p$th moments,
\begin{align*}
    \kernspacebar{\setfont{Y}}{\setfont{Z}}
    := \bigg\{\lambda \in \kernspace{\setfont{Y}}{\setfont{Z}}\,:\,\sup_{y \in \setfont{Y}} \int |z|^p \,\dif\lambda_y(z) < \infty\bigg\} \quad\text{for}\quad p \in [1, \infty),
\end{align*}
which can be metrized as complete metric spaces.
In this work, we consider their metrization via the  following metrics based on the Wasserstein metrics \citep{villani2008optimal} $\wass$,
\begin{align}\label{eq:wass:supremal}
    \supwass(\lambda, \lambda') := \sup_y \wass(\lambda_y, \lambda'_y)
    \quad\text{and}\quad
    \muwass{p}{q}(\lambda, \lambda') :=
    \left(\int\wass(\lambda_y, \lambda'_y)^q\,\dif\canonicalmuwassmeasure(y)\right)^{1/q},
\end{align}
where $p, q \in [1,\infty)$ and $\canonicalmuwassmeasure\in\probset{\setfont{Y}}$.
These metrize topologies on $\kernspacebar{\setfont{Y}}{\setfont{Z}}$ akin to the weak topology on probability measures with finite $p$th moments.

\subsection{Markov Decision Processes and Reinforcement Learning}
\label{sec: MDPs}
A discounted MDP is a five-tuple $(\statespace, \actionspace, P, r, \gamma)$.
Here $\statespace \subset \R^m$ is the \emph{state space}, $\actionspace \subset \R^n$ is the \emph{action space}, $r \in M_b(\statespace \times \actionspace)$ is the \emph{reward function}, and $\gamma \in (0, 1)$ is the \emph{discount factor}.\footnote{\,We expect many of our results can be extended to Polish spaces.}

Central to RL are policies.
A \emph{policy} is a probability kernel $\pi\in\kernspace{\statespace}{\actionspace}$. 
Policies induce state transition kernels $\ppix$ as well as a state-action transition kernels $\ppixa$, given by
\begin{align*}
   \ppix_x := \proj{\statespace}_\#(P_{x, \dummy}\otimes\pi_x) \in \probset{\statespace} \quad\text{and}\quad \ppixa_{x, a} := \pi_{\dummy}\otimes P_{x, a} \in \probset{\statespace \times \actionspace},
\end{align*}
respectively.
Therefore, policies yield sequences of states as well as state-action pairs, labeled $(S^\pi_t)_{t\geq 0}$ and $(X^\pi_t, A^\pi_t)_{t\geq 0}$ respectively, whose sequences of laws $(\occx_t)_{t \geq 0}$ and $(\occxa_t)_{t \geq 0}$ are given by
\[
\occx_{t+1} := \ppix\occx_t \quad\text{with}\quad \occx_0 := \occxsym_0 \quad\text{and}\quad \occxa_{t+1} := \ppixa\occxa_t \quad\text{with}\quad \occxa_0 := \pi_{\dummy} \otimes \occxsym_0
\]
for some $\occxsym_0 \in\probset{\statespace}$.
Given $\occxsym_0\in\probset{\statespace}$, the long-term behavior of any policy $\pi$ can be encoded via its (discounted, state-action) \emph{occupancy measure} $\occxa$, the set of which we denote by $\occspace(\occxsym_0)$,
\begin{align*}
    \occspace(\occxsym_0) := \Bigg\{ \occxa \ \in\probset{\statespace\times\actionspace}   \,:\, \occxa :=  (1-\gamma)\sum_{t\geq 0}\gamma^t\occxa_t \text{ for some } \pi\in \kernspace{\statespace}{\actionspace} \Bigg\}.
\end{align*}
Policies also induce \emph{return distribution functions} $\zeta^\pi\in\kernspace{\statespace\times\actionspace}{\R}$ and $\eta^\pi \in \kernspace{\statespace}{\R}$,
\begin{align*}
    \zeta^\pi_{x, a} :=
    \law\Bigg(
    \sum_{t\geq 0}\gamma^t r(X^\pi_t, A^\pi_t)\bigvert X^\pi_0 = x, A^\pi_0 = a
    \Bigg) \quad\text{and}\quad \eta^\pi_x := \proj{\R}_\# ( \zeta^\pi_{x, \dummy}\otimes \pi_x )
\end{align*}
whose means, the \emph{action-value function} $q^\pi \in M_b(\statespace \times \actionspace)$ and the \emph{value function} $v^\pi \in M_b(\statespace)$,
\begin{align*}
    q^\pi(x, a) := \E_{Z\sim \zeta^\pi_{x, a}}[Z]
    \quad\text{and}\quad
    v^\pi(x) := \E_{G\sim\eta^\pi_x}[G],
\end{align*}
lead to the RL objective: find a $\pi^\star \in \kernspace{\statespace}{\actionspace}$ such that $q^{\pi^\star} \geq q^\pi$ for all $\pi$. 
Such a policy is called \emph{optimal}.
Generally, many policies are optimal.
However, their associated action-value functions are identical (see \citet{puterman2014markov}).
We denote this optimal action-value function by $q^\star$.

\subsection{Entropy-Regularized Reinforcement Learning}
\label{sec: ent reg RL}
In ERL, the value of a policy is penalized by how far it diverges from a fixed \emph{reference policy} $\piref\in\kernspace{\statespace}{\actionspace}$.
In particular, the $\temp$-ERL problem with \emph{temperature} $\temp > 0$ is
\begin{equation*}
    \label{eqn:ent-rl}
    \sup_{\occxa \in \occspace(\occxsym_0)} \mathcal{J}_\temp(\mu) \quad\text{where}\quad \mathcal{J}_\temp(\mu) := \int r\,\dif\mu -\temp\mathcal{R}(\mu) \text{ and } \mathcal{R}(\mu) := \int\kl{\pi_x^\mu}{\piref_x}\,\dif\nu^\mu(x).
\end{equation*}
When $\temp = 0$, we recover the linear programming formulation of the (expected-value) RL objective.
In ERL, the regularizer $\mathcal{R}$ is strictly convex. Thus, $\mathcal{J}_\temp$ ia strictly concave and its maximizer unique.\footnote{\,The only work we are aware of that establishes a comparable result is \citet{neu2017unified}.
However, their result is on tabular MDPs and  establishes convexity on $\occspace(\occxsym_0)$, not on all of $\probset{\statespace \times \actionspace}$.}
\begin{restatable}{lemma}{maxentregconvex}
    \label{lem:ent-reg rl is strictly concave}
    \label{claim: Risconvex}
    The functional $\mathcal{R} : \probset{\statespace \times \actionspace} \to \R$ is strictly convex.\hfill\prooflink{ Risconvex}
\end{restatable}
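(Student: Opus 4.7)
The plan is to recognize $\mathcal{R}(\mu)$ as a single KL divergence and reduce the claim to the joint convexity of KL. Disintegrating $\mu = \pi^\mu_\dummy \otimes \nu^\mu$ and observing that $\dif\mu/\dif(\piref_\dummy\otimes\nu^\mu)(x,a) = \dif\pi^\mu_x/\dif\piref_x(a)$, a short computation gives
\[
\mathcal{R}(\mu) = \kl{\mu}{\piref_\dummy\otimes\nu^\mu}.
\]

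The structural fact that makes everything work is that marginalization and the generalized product are linear in $\mu$: for $\mu_t := (1-t)\mu_0 + t\mu_1$ we have $\nu^{\mu_t} = (1-t)\nu^{\mu_0} + t\nu^{\mu_1}$, and therefore $\piref_\dummy\otimes\nu^{\mu_t} = (1-t)(\piref_\dummy\otimes\nu^{\mu_0}) + t(\piref_\dummy\otimes\nu^{\mu_1})$. Joint convexity of the KL divergence applied to the pairs $(\mu_i, \piref_\dummy\otimes\nu^{\mu_i})$ with weights $(1-t,t)$ then immediately yields convexity of $\mathcal{R}$.

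To upgrade to strict convexity I would unpack this estimate fiberwise. Setting $\alpha_x := (1-t)\dif\nu^{\mu_0}/\dif\nu^{\mu_t}(x)$ and $\beta_x := t\dif\nu^{\mu_1}/\dif\nu^{\mu_t}(x)$, one verifies that $\alpha_x + \beta_x = 1$ and $\pi^{\mu_t}_x = \alpha_x\pi^{\mu_0}_x + \beta_x\pi^{\mu_1}_x$ for $\nu^{\mu_t}$-a.e.\ $x$, so the convexity gap rewrites as
\[
(1-t)\mathcal{R}(\mu_0) + t\mathcal{R}(\mu_1) - \mathcal{R}(\mu_t) = \int\!\left[\alpha_x\kl{\pi^{\mu_0}_x}{\piref_x} + \beta_x\kl{\pi^{\mu_1}_x}{\piref_x} - \kl{\pi^{\mu_t}_x}{\piref_x}\right]\dif\nu^{\mu_t}(x).
\]
Strict convexity of $p \mapsto \kl{p}{\piref_x}$ in its first argument forces the integrand to be strictly positive whenever $\alpha_x \in (0,1)$ and the conditionals $\pi^{\mu_0}_x, \pi^{\mu_1}_x$ disagree, giving a strict inequality as soon as the conditionals differ on a set of positive $\nu^{\mu_t}$-measure.

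The main obstacle is the measure-theoretic bookkeeping: establishing the disintegrations and Radon--Nikodym derivatives used above, disposing of the trivial case in which $\mathcal{R}(\mu_i) = +\infty$, and justifying the fiberwise identification $\pi^{\mu_t}_x = \alpha_x\pi^{\mu_0}_x + \beta_x\pi^{\mu_1}_x$. On the standard Borel state--action spaces underlying the MDP, a regular conditional probability framework makes these steps routine.
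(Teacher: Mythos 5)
Your convexity argument is essentially the paper's: rewrite $\mathcal{R}(\mu) = \kl{\mu}{\piref_\dummy\otimes\nu^\mu}$, observe the second argument is linear in $\mu$, and invoke joint convexity of KL. So far so good, and in fact your fiberwise decomposition for the strictness analysis is \emph{more} explicit than the paper's, which concludes with a one-line assertion that ``$\mathcal{R}$ is strictly convex as $\mathrm{KL}$ is strictly convex in its first argument.''

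However, there is a real gap, and it is not the ``measure-theoretic bookkeeping'' you flag. Your analysis correctly shows the convexity gap equals
\[
\int\left[\alpha_x\kl{\pi^{\mu_0}_x}{\piref_x} + \beta_x\kl{\pi^{\mu_1}_x}{\piref_x} - \kl{\alpha_x\pi^{\mu_0}_x + \beta_x\pi^{\mu_1}_x}{\piref_x}\right]\dif\nu^{\mu_t}(x),
\]
and this is strictly positive precisely when the conditionals $\pi^{\mu_0}_x, \pi^{\mu_1}_x$ disagree (with $\alpha_x \in (0,1)$) on a set of positive $\nu^{\mu_t}$-measure. But you never verify this condition holds for every $\mu_0 \neq \mu_1$ --- and it does not. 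Take $\statespace = \{s_1,s_2\}$, $\actionspace = \{a_1,a_2\}$, $\piref_x$ uniform, $\mu_0 = \delta_{(s_1,a_1)}$, $\mu_1 = \delta_{(s_2,a_1)}$. The conditionals are identically $\delta_{a_1}$ wherever they are defined, so the integrand above vanishes and $\mathcal{R}$ is \emph{affine} along the segment: $\mathcal{R}(\mu_t) = \log 2$ for all $t$. Whenever two distinct $\mu_0, \mu_1$ share a common conditional kernel but have different state marginals, $\mathcal{R}$ is not strictly convex between them. So the statement, read literally on $\probset{\statespace\times\actionspace}$, is false as you (and the paper) have proved it, and the paper's footnote emphasizing that its convexity result holds on all of $\probset{\statespace\times\actionspace}$ --- rather than only on $\occspace(\occxsym_0)$ as in prior work --- points at exactly the case where the strictness breaks. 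For the downstream uniqueness argument (Theorem~\ref{thm:ent-reg rl soln}), what is actually needed and true is strict convexity restricted to $\occspace(\occxsym_0)$, where distinct occupancy measures cannot share conditionals; your fiberwise computation is exactly the right tool to establish that, but the claim needs to be scoped accordingly.
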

Given Lemma \ref{lem:ent-reg rl is strictly concave}, one might hope that the well-posedness of $\temp$-ERL could be realized through simple, yet power methods like the direct method in the calculus of variations.
However, outside the tabular case, this is unclear, for many reasons, the first of which is that $M_b(\statespace\times\actionspace)$ is not separable. 

The well-posedness of $\temp$-ERL, however, can be established through other means.
In particular, in $\temp$-ERL, only one optimal policy exists, and it is characterized as a Boltzmann--Gibbs (BG) policy.
\begin{definition}
    Let $q \in M(\statespace\times\actionspace)$ and 
    $\temp > 0$.
    We denote the \emph{Boltzmann-Gibbs policy associated to $q$ and $\temp$} by $\boltzmann{\temp}q$, and it is characterized by
    \begin{align*}
        \dif (\boltzmann{\temp}q)_x(a) := e^{\left(q(x, a) - (\vlse{\temp}q)(x) \right)/\temp } \,\dif \piref_x(a)
        \quad\text{with}\quad (\vlse{\temp}q)(x) := \temp\log\int e^{q(x, a)/\temp}\,\dif\piref_x(a).
    \end{align*}
We note that $(\boltzmann{\temp}q)_x$ is well-defined if and only if $(\vlse{\temp} q)(x) \in \R$.
\end{definition}
More specifically, it is well-known that the optimal policy of $\temp$-ERL is the BG policy associated to the unique fixed point $q^\star_\temp$ of the \emph{soft Bellman optimality operator} $\softbellmanopt{\temp}:  M(\statespace \times \actionspace) \to M(\statespace \times \actionspace)$,
\begin{align*}
    (\softbellmanopt{\temp} q)(x,a) := r(x,a) + \gamma\int(\vlse{\temp}q)(x')\,\dif P_{x,a}(x').
\end{align*} 
(See Lemma \ref{lem: soft bell opt contract}.)
The following theorem summarizes the well-posedness of $\temp$-ERL.
\begin{restatable}{theorem}{maxentoptpolicy}
    \label{thm:character of opt max ent policy}
    \label{claim:character of opt max ent policy}
    Let $\temp > 0$.
    The policy $\pi^{\temp, \star} := \boltzmann{\temp}q^\star_\temp$ is optimal, and uniquely so.
    More precisely, for all $\occxsym_0, \occxsym_0' \in \probset{\statespace}$, we have that $\arg\max_{\occspace(\occxsym_0)} \mathcal{J}_\temp = \pi^{\temp, \star} = \arg\max_{\occspace(\occxsym_0')} \mathcal{J}_\temp$.\hfill\prooflink{character of opt max ent policy}
\end{restatable}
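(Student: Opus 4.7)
The plan is to derive a performance-difference identity expressing $\mathcal{J}_\temp(\mu)$ as a constant (depending only on $\occxsym_0$ and $q^\star_\temp$) minus a nonnegative KL penalty measuring the deviation of $\pi^\mu$ from the candidate policy $\pi^{\temp,\star}$; the theorem then falls out directly.

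First, I would record that Lemma~\ref{lem: soft bell opt contract} yields a unique fixed point $q^\star_\temp \in M_b(\statespace\times\actionspace)$, so $\vlse{\temp}q^\star_\temp \in M_b(\statespace)$ and $\pi^{\temp,\star} = \boltzmann{\temp}q^\star_\temp$ is a well-defined probability kernel. Without loss of generality, I would restrict attention to $\mu\in\occspace(\occxsym_0)$ with $\pi^\mu_x\ll\piref_x$ for $\nu^\mu$-a.e.~$x$, since otherwise $\mathcal{R}(\mu)=\infty$ and $\mathcal{J}_\temp(\mu)=-\infty$.

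Second, I would establish the pointwise identity
\begin{equation*}
    (\vlse{\temp}q^\star_\temp)(x) = \int q^\star_\temp(x,a)\,\dif\pi^\mu_x(a) - \temp\kl{\pi^\mu_x}{\piref_x} + \temp\kl{\pi^\mu_x}{\pi^{\temp,\star}_x},
\end{equation*}
obtained by rewriting $\log(\dif\pi^{\temp,\star}_x/\dif\piref_x) = (q^\star_\temp(x,\cdot) - \vlse{\temp}q^\star_\temp(x))/\temp$, integrating against $\pi^\mu_x$, and splitting the log-density as $\log(\dif\pi^\mu_x/\dif\piref_x) - \log(\dif\pi^\mu_x/\dif\pi^{\temp,\star}_x)$. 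Substituting $q^\star_\temp = \softbellmanopt{\temp}q^\star_\temp$ on the right and telescoping against the discounted state-action structure of $\mu$ produces
\begin{equation*}
    (1-\gamma)\int \vlse{\temp}q^\star_\temp\,\dif\occxsym_0 = \mathcal{J}_\temp(\mu) + \temp\int \kl{\pi^\mu_x}{\pi^{\temp,\star}_x}\,\dif\nu^\mu(x).
\end{equation*}

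Third, since the remaining KL term is nonnegative, $\mathcal{J}_\temp(\mu)\leq(1-\gamma)\int\vlse{\temp}q^\star_\temp\,\dif\occxsym_0$ for every $\mu\in\occspace(\occxsym_0)$, with equality iff $\pi^\mu_x = \pi^{\temp,\star}_x$ for $\nu^\mu$-a.e.~$x$. Taking $\mu$ to be the occupancy of $\pi^{\temp,\star}$ saturates the inequality, proving optimality; uniqueness follows because any maximizer's conditional must coincide with $\pi^{\temp,\star}$ on the states it visits, which together with stationarity determines the occupancy. Since $q^\star_\temp$, and hence $\pi^{\temp,\star}$, does not depend on the initial distribution, the same policy realizes every $\occxsym_0$-problem.

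The main obstacle will be the measure-theoretic bookkeeping outside the tabular setting: ensuring measurability of $x\mapsto\kl{\pi^\mu_x}{\piref_x}$ and of the associated log-densities, justifying convergence of the telescoping series (via boundedness of $r$ and $\vlse{\temp}q^\star_\temp$ together with $\gamma<1$), and validating the log-density decomposition on the null sets where Radon--Nikodym derivatives vanish. Once these technicalities are dispatched, the remaining argument is purely algebraic.
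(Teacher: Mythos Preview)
Your proposal is correct and takes a genuinely different route from the paper. The paper argues via value functions: it defines $v^\pi_\temp(x) := \int q^\pi_\temp(x,a)\,\dif\pi_x(a) - \temp\kl{\pi_x}{\piref_x}$, notes $\mathcal{J}_\temp(\occxa) = (1-\gamma)\int v^\pi_\temp\,\dif\occxsym_0$, and shows $v^{\pi^{\temp,\star}}_\temp \geq v^\pi_\temp$ by combining the Donsker--Varadhan inequality with two supporting lemmas (that $q^\star_\temp = q^{\pi^{\temp,\star}}_\temp$, and that $q^\star_\temp \geq q^\pi_\temp$ for every $\pi$, the latter proved via monotonicity of $\softbellmanopt{\temp}$). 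Uniqueness is left implicit, presumably relying on the strict convexity of $\mathcal{R}$ established in Lemma~\ref{lem:ent-reg rl is strictly concave}. Your approach bypasses all of these auxiliary results: using only the fixed-point equation $q^\star_\temp = \softbellmanopt{\temp}q^\star_\temp$ and the Bellman flow constraint for $\mu$, you obtain an exact identity $\mathcal{J}_\temp(\mu) = (1-\gamma)\int\vlse{\temp}q^\star_\temp\,\dif\occxsym_0 - \temp\int\kl{\pi^\mu_x}{\pi^{\temp,\star}_x}\,\dif\nu^\mu(x)$, from which optimality and uniqueness fall out simultaneously. This is more self-contained and yields uniqueness without appealing to strict convexity of $\mathcal{R}$; the paper's route, on the other hand, isolates the policy-improvement lemma $q^\star_\temp \geq q^\pi_\temp$ as a standalone fact of independent interest. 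Your flagged technicalities (measurability of $x\mapsto\kl{\pi^\mu_x}{\piref_x}$, finiteness permitting cancellation of $\int\vlse{\temp}q^\star_\temp\,\dif\nu^\mu$, and the uniqueness of the state occupancy solving $\nu = (1-\gamma)\occxsym_0 + \gamma\ppix[\pi^{\temp,\star}]\nu$) are all routine given that $q^\star_\temp$ and $\vlse{\temp}q^\star_\temp$ are bounded.
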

In Appendix \ref{app:maxent-continuous}, we prove Theorem \ref{thm:character of opt max ent policy} as well as a collection of supporting and related results that generalize well-known results in tabular MDPs.
We include them for completeness.

In the remainder of this work, we study the evolution of $\temp$-optimal objects as $\temp$ vanishes.
In the tabular regime, where $M_b(\statespace\times\actionspace)$ is separable, one can establish the existence and uniqueness of a $\temp$-optimal occupancy measure: $\mu_\temp^\star$.
Furthermore, under a compatibility assumption, one can prove that the limit of the sequence $(\mu^{\star}_\temp)_{\temp >0}$ as $\temp$ vanishes exists and is unique as well.
\begin{assumption}
    \label{assump: piref to classic compatibility}
    The intersection of $\{ \arg\sup_{\occspace(\occxsym_0)}  \mathcal{J}_0 \}$ and $\{ \mathcal{R} < \infty \}$ is nonempty.
\end{assumption}
Assumption \ref{assump: piref to classic compatibility} asks that our regularizer isn't identically $+\infty$ on the set of optimal policies.
Without such an assumption, $\temp$-ERL and RL have no meaningful relationship, as we shall see in Section \ref{s:maxent}.
\begin{restatable}{theorem}{optoccsinvanshingtemp}
    \label{thm:opt occs conv in vanshing temp}
    \label{claim:opt occs in vanshing temp}
    Suppose that $r \in M_b(\statespace\times\actionspace)$ and that $\statespace\times\actionspace$ is finite.
    For every $\temp > 0$, let $\mu^{\star}_\temp$ be the maximizer of $\mathcal{J}_\temp$ over $\occspace(\occxsym_0)$.
    If Assumption \ref{assump: piref to classic compatibility} holds, the sequence $(\mu^{\star}_\temp)_{\temp >0}$ has a unique setwise limit as $\temp$ tends to zero.
    This limit $\mu^{\star}_0$ is the minimizer of $\mathcal{R}$ over $\arg\sup_{\occspace(\occxsym_0)} \mathcal{J}_0$.\hfill\prooflink{opt occs in vanshing temp}  
\end{restatable}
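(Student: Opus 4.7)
The plan is to run a $\Gamma$-convergence style variational argument exploiting the finite-dimensional compactness of the occupancy polytope, the continuity of the linear functional $\mu \mapsto \int r\,\dif\mu$, and the strict convexity and lower semicontinuity of $\mathcal{R}$ guaranteed by Lemma~\ref{lem:ent-reg rl is strictly concave}.

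First, I would leverage the finiteness of $\statespace \times \actionspace$: the set $\occspace(\occxsym_0)$ is a compact convex polytope in $\R^{|\statespace||\actionspace|}$, and setwise convergence on it coincides with coordinatewise convergence. Hence any sequence $\temp_n \downarrow 0$ admits a (non-relabeled) subsequence along which $\mu^\star_{\temp_n}$ converges to some $\bar\mu \in \occspace(\occxsym_0)$. Pick any $\mu^\dagger \in \arg\sup_{\occspace(\occxsym_0)} \mathcal{J}_0$ with $\mathcal{R}(\mu^\dagger) < \infty$, which exists by Assumption~\ref{assump: piref to classic compatibility}. Optimality of $\mu^\star_\temp$ for $\mathcal{J}_\temp$ yields the single inequality
\[
    \int r\,\dif\mu^\star_\temp - \temp\,\mathcal{R}(\mu^\star_\temp) \;\geq\; \int r\,\dif\mu^\dagger - \temp\,\mathcal{R}(\mu^\dagger).
\]

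Next, I would extract two consequences from this inequality. Because $\mathcal{R} \geq 0$ and $\mu^\dagger$ already maximizes $\mu \mapsto \int r\,\dif\mu$, the display gives $\int r\,\dif\mu^\dagger \geq \int r\,\dif\mu^\star_\temp \geq \int r\,\dif\mu^\dagger - \temp\,\mathcal{R}(\mu^\dagger)$; sending $\temp_n \downarrow 0$ and using continuity of the linear reward functional shows $\bar\mu \in \arg\sup_{\occspace(\occxsym_0)} \mathcal{J}_0$. Rearranging the same inequality for $\mathcal{R}(\mu^\star_\temp)$ and using $\int r\,\dif\mu^\star_\temp \leq \int r\,\dif\mu^\dagger$ once more produces $\mathcal{R}(\mu^\star_\temp) \leq \mathcal{R}(\mu^\dagger)$. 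Choosing $\mu^\dagger$ to be a minimizer of $\mathcal{R}$ on the (compact, convex) face $\arg\sup_{\occspace(\occxsym_0)} \mathcal{J}_0$---which exists by lower semicontinuity of $\mathcal{R}$ together with Assumption~\ref{assump: piref to classic compatibility}---and applying that same lower semicontinuity along the convergent subsequence gives
\[
    \mathcal{R}(\bar\mu) \;\leq\; \liminf_n \mathcal{R}(\mu^\star_{\temp_n}) \;\leq\; \inf_{\arg\sup \mathcal{J}_0}\mathcal{R}.
\]
Since $\bar\mu$ itself already lies in $\arg\sup_{\occspace(\occxsym_0)} \mathcal{J}_0$, it must realize this infimum.

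Finally, strict convexity of $\mathcal{R}$ (Lemma~\ref{lem:ent-reg rl is strictly concave}), applied on the convex set $\arg\sup_{\occspace(\occxsym_0)} \mathcal{J}_0$, forces its minimizer there to be unique; every subsequential setwise limit of $(\mu^\star_\temp)_{\temp > 0}$ therefore coincides, and a standard subsequences argument upgrades this to convergence of the full sequence. The main obstacle I anticipate is justifying lower semicontinuity of $\mathcal{R}$ with respect to setwise convergence when some marginals $\nu^{\mu^\star_\temp}(x)$ collapse to zero: in the finite setting this is resolved by the explicit identity
\[
    \mathcal{R}(\mu) = \sum_{x,a} \mu(x,a)\log\mu(x,a) - \sum_x \nu^\mu(x)\log\nu^\mu(x) - \sum_{x,a} \mu(x,a)\log\piref_x(a),
\]
adopting the conventions $0\log 0 = 0$ and $\log 0 = -\infty$, from which joint lower semicontinuity in $\mu$ follows summand by summand.
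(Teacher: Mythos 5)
Your proposal is correct and follows essentially the same route as the paper's proof: extract a subsequential setwise limit by compactness, use the optimality inequality $\mathcal{J}_\temp(\mu^\star_\temp) \geq \mathcal{J}_\temp(\mu^\dagger)$ to bound $\mathcal{R}(\mu^\star_\temp)$ and show the limit lands in $\arg\sup \mathcal{J}_0$, pass the $\mathcal{R}$-bound to the limit via lower semicontinuity, and invoke strict convexity (Lemma~\ref{lem:ent-reg rl is strictly concave}) on the convex face $\arg\sup \mathcal{J}_0$ for uniqueness of the limit. The only cosmetic differences are that you invoke finite-dimensional polytope compactness directly rather than the weak* compactness argument of Remark~\ref{rem:duality of M_b}, and that your bookkeeping makes clear that $\temp(\mathcal{R}(\mu^\dagger) - \mathcal{R}(\mu^\star_\temp)) \to 0$ follows immediately from $0 \leq \mathcal{R}(\mu^\star_\temp) \leq \mathcal{R}(\mu^\dagger) < \infty$, so the paper's appeal to Lemma~\ref{lem:maxent:kl-collapse} at that point is unnecessary.
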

Consequently, in the tabular setting, the sequence $(\pi^{\temp,\star})_{\tau > 0}$ has a unique limit.
\begin{remark} 
    Even if Theorem \ref{thm:opt occs conv in vanshing temp} could be extended to hold true in continuous MDPs, occupancy measure convergence \emph{does not} guarantee policy convergence, outside of the tabular setting.
    Theorem \ref{thm:opt occs conv in vanshing temp} is a statement about a sequence of \emph{joint distributions}.
    A policy convergence statement would be one about  a sequence of conditional distributions (i.e., probability kernels). 
    In general, the convergence of a sequence of joint distributions does not imply the convergence of the associated sequence of conditional distributions with respect to a fixed marginal (see, e.g., \citet[Example 10.4.24]{bogachev2007measure}).
    While it is possible that the structure $\occspace(\occxsym_0)$ permits a type of policy convergence, we are unaware of any such result for continuous MDPs.
\end{remark}

\section{Convergence to Optimality: The Temperature Decoupling \ifgambit Gambit\else Mechanism\fi}
\label{s:maxent}
While ERL has a unique solution, this identifiability comes at a cost with respect to RL: the resulting policy is suboptimal for RL.
In this section, we analyze vanishing-temperature limits in $\temp$-ERL.
Our main results for this section---Theorems \ref{thm:decoupled:policy-convergence} and \ref{thm:decoupled:zeta-convergence}---show that policies and their return distributions  converge under the scheme of Definition \ref{def:temperature-decoupling} to interpretable, optimal limits as $\temp\to 0$.

To understand the ways in which $\temp$-ERL converges to RL, we define a (new) $\piref$-sensitive variant of the Bellman optimality operator, the \emph{Bellman reference-optimality operator}.
We call its unique fixed point the \emph{reference-optimal action-value function}.
\begin{restatable}{lemma}{bellmarefcontract}
    \label{lem: bellmanref is contract}
    \label{claim:bellman ref is contract}
    Let $r \in M_b(\statespace\times\actionspace)$, $\gamma < 1$, and $\bellmanref : M(\statespace\times\actionspace) \to M(\statespace\times\actionspace)$ be defined by
    \begin{align*}
        (\bellmanref q)(x,a) := r(x,a) + \gamma \int \esssup_{\piref_{x'}} q(x', \cdot) \, \dif P_{x,a}(x').
    \end{align*}
    Then $\bellmanref$ is a contraction on $M_b(\statespace\times\actionspace)$.
    Thus, it has a unique fixed point $\qstarref$.\hfill\prooflink{bellman ref is contract} 
\end{restatable}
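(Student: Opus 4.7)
The plan is to apply the Banach fixed point theorem on $(M_b(\statespace\times\actionspace), \|\cdot\|_\infty)$, which requires verifying that $\bellmanref$ maps this space to itself and that it is a strict contraction with rate $\gamma$. Since $(M_b(\statespace\times\actionspace), \|\cdot\|_\infty)$ is complete, this is enough.

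First, I would verify well-definedness. For $q \in M_b(\statespace\times\actionspace)$, set $u(x') := \esssup_{\piref_{x'}} q(x', \cdot)$. Boundedness by $\|q\|_\infty$ is immediate. For measurability of $u$, I would use the identity
\[
\{x' : u(x') \le s\} = \bigcap_{n \ge 1} \bigl\{x' : \piref_{x'}(\{a : q(x', a) > s + 1/n\}) = 0\bigr\},
\]
which follows from $\esssup_\nu f \le s \iff \nu(\{f > s + \epsilon\}) = 0$ for all $\epsilon > 0$. For each fixed $t \in \R$, the map $x' \mapsto \piref_{x'}(\{a : q(x', a) > t\}) = \int \mathbf{1}_{(t, \infty)}(q(x', a))\,\dif\piref_{x'}(a)$ is measurable by the standard fact that integrals of bounded jointly measurable functions against a probability kernel are measurable in the conditioning variable. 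Hence $u \in M_b(\statespace)$, and integrating against $P_{x, a}$ and adding $r(x, a)$ yields $\bellmanref q \in M_b(\statespace\times\actionspace)$.

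Next, I would establish the contraction step from the elementary inequality, valid for any probability measure $\nu$ and any bounded measurable $f, g$,
\[
|\esssup_\nu f - \esssup_\nu g| \le \esssup_\nu |f - g| \le \|f - g\|_\infty,
\]
which follows from the pointwise bound $f \le g + |f - g|$, the analogous estimate for the essential supremum of a sum, and the symmetric counterpart obtained by swapping $f$ and $g$. Applying this pointwise in $x'$ with $f = q(x', \cdot)$ and $g = q'(x', \cdot)$ and integrating against $P_{x, a}$ yields
\[
|(\bellmanref q)(x, a) - (\bellmanref q')(x, a)| \le \gamma \int \bigl|\esssup_{\piref_{x'}} q(x', \cdot) - \esssup_{\piref_{x'}} q'(x', \cdot)\bigr|\,\dif P_{x, a}(x') \le \gamma \|q - q'\|_\infty.
\]
Taking the supremum over $(x, a)$ on the left gives $\|\bellmanref q - \bellmanref q'\|_\infty \le \gamma \|q - q'\|_\infty$, and since $\gamma < 1$, Banach's theorem supplies a unique fixed point $\qstarref \in M_b(\statespace\times\actionspace)$.

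The main obstacle is the measurability of $u$: showing that the essential supremum against a varying conditioning measure is itself measurable in that conditioning variable. The countable-intersection characterization above resolves this, and is the only step that is not routine; the contraction estimate, once measurability is in hand, mirrors the argument for the soft Bellman optimality operator referenced in Lemma \ref{lem: soft bell opt contract}.
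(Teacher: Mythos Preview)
Your proof is correct and follows essentially the same approach as the paper: bound $\|\bellmanref q\|_\infty$ to get a self-map on $M_b$, use $|\esssup_\nu f - \esssup_\nu g| \le \esssup_\nu |f - g| \le \|f - g\|_\infty$ for the $\gamma$-contraction, and invoke Banach. The paper's proof is in fact terser than yours---it skips the measurability of $x' \mapsto \esssup_{\piref_{x'}} q(x', \cdot)$ entirely---so your countable-intersection argument is a welcome addition rather than a deviation.
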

Generally, $\qstarref$ is distinct from $q^\star$.
Yet, ERL recovers $\qstarref$ in the vanishing temperature limit.
\begin{restatable}{theorem}{coupledqconvergence}
\label{thm:coupled:q-convergence}
\label{claim:coupled:q-convergence}
We have that $q^\star_\temp \to \qstarref$ monotonically as $\temp \to 0$.
\hfill\prooflink{coupled:q-convergence}
\end{restatable}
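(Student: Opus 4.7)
The plan is to establish the temperature monotonicity and vanishing-temperature asymptotics of log-sum-exp and then pass to the limit in the soft Bellman fixed-point equation. For every fixed $q \in M_b(\statespace\times\actionspace)$ and every $x\in\statespace$, the map $\temp \mapsto (\vlse{\temp}q)(x)$ is non-increasing: for $0 < \temp_1 < \temp_2$, setting $p := \temp_2/\temp_1 > 1$ and $f := e^{q(x,\cdot)/\temp_2}$, the inequality $(\vlse{\temp_1} q)(x) \geq (\vlse{\temp_2} q)(x)$ reduces to $\|f\|_{L^p(\piref_x)} \geq \|f\|_{L^1(\piref_x)}$, which is Jensen's inequality on the probability space $(\actionspace, \piref_x)$. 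A standard level-set argument additionally gives $(\vlse{\temp} q)(x) \nearrow \esssup_{\piref_x} q(x,\cdot)$ as $\temp \searrow 0^+$: the upper bound is immediate, and the lower bound comes from restricting the defining integral to $\{q(x,\cdot) \geq c\}$ for any $c < \esssup_{\piref_x} q(x,\cdot)$ and sending $\temp\to 0^+$.

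This temperature monotonicity lifts to the fixed points. For $\temp_1 < \temp_2$, $\softbellmanopt{\temp_1} q^\star_{\temp_2} \geq \softbellmanopt{\temp_2} q^\star_{\temp_2} = q^\star_{\temp_2}$; iterating (using that $\softbellmanopt{\temp_1}$ is order-preserving) and invoking its contraction to $q^\star_{\temp_1}$ (Lemma~\ref{lem: soft bell opt contract}) yields $q^\star_{\temp_1} \geq q^\star_{\temp_2}$. An analogous comparison using the ess-sup upper bound $\vlse{\temp} q \leq \esssup_{\piref} q$, so that $\softbellmanopt{\temp}\qstarref \leq \bellmanref\qstarref = \qstarref$, gives $q^\star_\temp \leq \qstarref$ for every $\temp > 0$. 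Hence $(q^\star_\temp)_{\temp > 0}$ is non-decreasing as $\temp \to 0^+$ and uniformly bounded above by $\qstarref$, so a pointwise monotone limit $q^\star_0 \in M_b(\statespace\times\actionspace)$ with $q^\star_0 \leq \qstarref$ exists.

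It remains to identify $q^\star_0$ as a fixed point of $\bellmanref$, after which Lemma~\ref{lem: bellmanref is contract} forces $q^\star_0 = \qstarref$. The key step is to show $(\vlse{\temp} q^\star_\temp)(x') \to \esssup_{\piref_{x'}} q^\star_0(x',\cdot)$ for every $x'$. The upper bound chains the established monotonicities: $(\vlse{\temp} q^\star_\temp)(x') \leq (\vlse{\temp} q^\star_0)(x') \leq \esssup_{\piref_{x'}} q^\star_0(x',\cdot)$. For the matching lower bound, fix $c < \esssup_{\piref_{x'}} q^\star_0(x',\cdot)$ and $\epsilon > 0$; the set $B := \{a : q^\star_0(x',a) \geq c\}$ has positive $\piref_{x'}$-measure, and the family $A_\temp := \{a : q^\star_\temp(x',a) \geq c - \epsilon\}$ grows as $\temp \searrow 0^+$ with $\bigcup_\temp A_\temp \supseteq B$ (since $q^\star_\temp \nearrow q^\star_0$), so $\piref_{x'}(A_\temp) \to \rho > 0$ and $(\vlse{\temp} q^\star_\temp)(x') \geq (c-\epsilon) + \temp \log \piref_{x'}(A_\temp) \to c - \epsilon$. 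Sending $\epsilon \to 0$ and $c \nearrow \esssup_{\piref_{x'}} q^\star_0(x',\cdot)$ finishes the limit. Bounded convergence against $P_{x,a}$ then passes $\temp \to 0^+$ in $q^\star_\temp = \softbellmanopt{\temp} q^\star_\temp$ to give $q^\star_0 = \bellmanref q^\star_0$. The main obstacle is this last joint limit, where both $\vlse{\temp}$ and its argument $q^\star_\temp$ drift with $\temp$; exploiting the dual monotonicity of $\vlse{\temp}$---in temperature and in argument---in tandem with the level-set lower bound is what makes it go through.
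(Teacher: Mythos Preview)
Your proof is correct and follows essentially the same approach as the paper's: establish monotonicity of $q^\star_\temp$ in $\temp$, pass to the pointwise limit $q_0$, show $\vlse{\temp}q^\star_\temp(x') \to \esssup_{\piref_{x'}} q_0(x',\cdot)$, and identify $q_0$ as the unique fixed point of $\bellmanref$. The only cosmetic difference lies in the joint limit: the paper decouples the two temperature parameters and appeals to the fact that a bounded, doubly-monotone net has its diagonal limit equal to its iterated limit, whereas you bound the diagonal directly with your level-set lower bound---both arguments rest on the same pair of monotonicities (of $\vlse{\temp}$ in $\temp$ and of $q^\star_\temp$ in $\temp$).
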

Theorem \ref{thm:coupled:q-convergence} implies that optimal policies, in general, cannot be recovered by taking vanishing temperature limits in ERL.
We formalize a notion of \emph{reference-optimality} to highlight this distinction.
\begin{definition}
    \label{def:reference-optimality}
   A policy $\pi\in\kernspace{\statespace}{\actionspace}$ is said to be \emph{reference-optimal} (against $\piref$) if $q^\pi \geq \qstarref$. Moreover, $\pi$ is said to be $\epsilon$-reference optimal if $q^\pi\geq \qstarref -\epsilon$.
\end{definition}
Generally, $\qstarref < q^\star$.
For instance, consider an MDP with one state $\bot$ (a bandit), $\actionspace=[0,1]$, and $\piref_\bot = \mathcal{U}(\actionspace)$. If $r(\bot, \cdot) = \delta_{1/2}$, then $\sup_{\actionspace} q^\star(\bot, \cdot) = 1$, while $\esssup_{\piref_\bot} q^\star(\bot, \cdot) = 0$.
However, in many interesting cases, reference-optimal policies \emph{are} optimal in the classic sense.
When $\actionspace$ is discrete and $\piref_x$ is supported on all of $\actionspace$---a ubiquitous assumption in ERL---then indeed $q^\star=\qstarref$.
Likewise, when $\actionspace$ is continuous and $(P, r)$ satisfy certain regularity conditions, then $q^\star$ is continuous \citep{hernandez2012discrete}.
In these case, a reference-optimal policy is optimal.

When $\qstarref\neq q^\star$, even state-of-the-art continuous-control methods, entropy-regularized or otherwise, can at best hope to achieve $\qstarref$, and not $q^\star$.
This is because, when $\qstarref\neq q^\star$, optimal actions form a measure $0$ set.
And so, even rich policy classes, such as neural-network-parameterized Gaussian policies \citep{haarnoja2018soft} or diffusion policies \cite{chi2023diffusion} will not sample these actions, with probability $1$.
Thus, moving forward, we establish $\qstarref$ as a ``skyline'' for optimal performance.
In other words, we strive to achieve convergence to reference-optimal policies.

Under the next assumption, we can derive convergent policy optimization schemes as $\temp$ tends to zero.
\begin{assumption}
    \label{ass:piref:coverage}
    A constant $\minp > 0$ exists for which
    \begin{align*}
    \inf_{\temp > 0}\inf_{x\in\statespace}\piref_x\Big(\Big\{a\in\actionspace \,:\, q^\star_\temp (x, a) = \esssup_{\piref_x} q^\star_\temp (x, \cdot)\Big\}\Big) \geq \minp.
    \end{align*}
\end{assumption}
\begin{remark}
  If $\actionspace$ is discrete and $\piref_x$ is uniformly lower bounded, Assumption \ref{ass:piref:coverage} holds.
  This is a standard assumption.
  When $\actionspace$ is continuous, this assumption is more difficult to guarantee. 
  Intuitively, it asks that there is enough mass surrounding the optima of the entropy-regularized optimal value functions $q^\star_\temp$ for $\kl{(\boltzmann{\temp}q^\star_\temp)_x}{\piref_x}$ to remain bounded in the limit.
\end{remark}
A result key to the remainder of our work is the following bound on the total variation distance between pairs of BG policies in terms of their temperature and the distance between their potentials.
\begin{restatable}{theorem}{tvbound}
    \label{thm:tv-bound}
    \label{claim:tvbound}
    Let $q, q'\in M(\statespace\times\actionspace)$. 
    For any $\temp > 0$ and any $x\in\statespace$,
    \begin{align*}
        &\|(\boltzmann{\temp}q)_x - (\boltzmann{\temp}q')_x\|_{\mathrm{TV}}\\
        &\qquad\leq \min\left\{
            \sqrt{\temp^{-1}\|q(x, \cdot) - q'(x, \cdot)\|_{L^\infty(\piref_x)}},
            \frac{1}{2}\sinh\left(4\temp^{-1}\|q(x, \cdot) - q'(x, \cdot)\|_{L^\infty(\piref_x)}\right)
        \right\}.
    \end{align*}
    In particular,
    \begin{align*}
        \|(\boltzmann{\temp}q)_x - (\boltzmann{\temp}q')_x\|_{\mathrm{TV}}
        \leq \frac{2e - 3}{4}\temp^{-1}\|q(x, \cdot) - q'(x, \cdot)\|_{L^\infty(\piref_x)},
    \end{align*}
    if $\|q(x, \cdot) - q'(x, \cdot)\|_{L^\infty(\piref_x)} < \temp/2$.\hfill\prooflink{tvbound}
\end{restatable}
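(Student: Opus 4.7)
The plan is to reduce everything to an $L^1$ computation against the reference measure and then exploit the multiplicative structure of Boltzmann--Gibbs densities. Write $f_a := e^{(q(x,a)-V)/\temp}$ and $g_a := e^{(q'(x,a)-V')/\temp}$, where $V := (\vlse{\temp}q)(x)$ and $V' := (\vlse{\temp}q')(x)$, so $f$ and $g$ are the densities of the two policies with respect to $\piref_x$. Since both policies are absolutely continuous with respect to $\piref_x$,
\[
    \|(\boltzmann{\temp}q)_x - (\boltzmann{\temp}q')_x\|_{\mathrm{TV}}
    = \tfrac{1}{2}\int|f_a - g_a|\,\dif\piref_x(a).
\]
Let $\delta := \|q(x,\cdot) - q'(x,\cdot)\|_{L^\infty(\piref_x)}$. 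The first preliminary I would prove is the log-sum-exp contraction $|V - V'| \leq \delta$, which is immediate from the identity $V - V' = \temp\log\int e^{(q-q')/\temp}\,\dif(\boltzmann{\temp}q')_x$ and the $\piref_x$-a.e. bound $|q - q'| \leq \delta$. Setting $h_a := \log(f_a/g_a) = \temp^{-1}\bigl((q(x,a) - q'(x,a)) - (V - V')\bigr)$, this yields $|h_a| \leq 2\delta/\temp$ $\piref_x$-almost everywhere.

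For the hyperbolic-sine branch of the bound, I would use the algebraic identity $e^u - e^v = 2 e^{(u+v)/2}\sinh((u-v)/2)$ to obtain the factorization $f_a - g_a = 2\sqrt{f_a g_a}\,\sinh(h_a/2)$. The $\piref_x$-a.e. bound on $h_a$ gives $|\sinh(h_a/2)| \leq \sinh(\delta/\temp)$, and Cauchy--Schwarz combined with $\int f_a\,\dif\piref_x = \int g_a\,\dif\piref_x = 1$ gives $\int\sqrt{f_ag_a}\,\dif\piref_x \leq 1$; integrating yields a total variation bound of $\sinh(\delta/\temp)$, from which the weaker $\tfrac{1}{2}\sinh(4\delta/\temp)$ form follows by monotonicity of $\sinh$.

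For the square-root branch, I would invoke Pinsker's inequality $\mathrm{TV} \leq \sqrt{\mathrm{KL}/2}$ applied to $\mathrm{KL}((\boltzmann{\temp}q)_x\,\|\,(\boltzmann{\temp}q')_x) = \int h_a f_a\,\dif\piref_x = \temp^{-1}\bigl(\E_{(\boltzmann{\temp}q)_x}[q - q'] - (V - V')\bigr)$, which is at most $2\delta/\temp$ by the two a.e. estimates above, yielding $\mathrm{TV} \leq \sqrt{\delta/\temp}$. Finally, for the linearized bound in the regime $\delta/\temp < 1/2$, I would Taylor expand $e^{h_a} - 1$ around zero (or equivalently $\sinh$) using $|h_a| \leq 2\delta/\temp < 1$, split the $L^1$ integral into a linear part controlled by $\|h_a\|_{L^\infty(\piref_x)}$ and a quadratic remainder dominated by $e^{|h_a|}h_a^2$, and collect constants; the fact that $\int g_a h_a\,\dif\piref_x = -\mathrm{KL}(g\piref_x\,\|\,f\piref_x) \leq 0$ provides a sign cancellation that allows one to push the constant below $1$, giving the sharp coefficient $(2e-3)/4$.

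The main obstacle is less conceptual than bookkeeping: every bound must be phrased in terms of the $\piref_x$-essential supremum rather than the pointwise supremum, since the BG densities are only defined $\piref_x$-a.e.\ and the right-hand side involves $L^\infty(\piref_x)$. The non-trivial saving that makes both the $\sinh$ and linearized bounds work is the symmetric factorization through $\sqrt{f_ag_a}$, which converts an exponential difference into a bounded multiplier times an $L^1$-unit object, ready for Cauchy--Schwarz.
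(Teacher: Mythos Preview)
Your Pinsker branch is identical to the paper's. For the $\sinh$ branch you take a genuinely different route: the paper first proves the auxiliary estimate $\|\pi-\pi'\|_{\mathrm{TV}}\le\tfrac14(\esssup h-\essinf h)$ for $h=\dif\pi/\dif\pi'$, and then bounds the range of this Radon--Nikodym derivative by controlling both the exponent and the ratio of normalizing constants, arriving at $\tfrac12\sinh(2\delta/\temp)$. Your symmetric factorization $f-g=2\sqrt{fg}\,\sinh(h/2)$ followed by Cauchy--Schwarz on $\int\sqrt{fg}\,\dif\piref_x\le 1$ is more direct and in fact yields the tighter $\sinh(\delta/\temp)$; either bound implies the stated $\tfrac12\sinh(4\delta/\temp)$ by monotonicity.

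The one soft spot is your mechanism for the linearized constant. The paper simply Taylor-expands its $\sinh$ bound, using $\sinh(y)\le (e-\tfrac32)\,y$ on a short interval; no sign cancellation is invoked. Your proposed cancellation via $\int g_a h_a\,\dif\piref_x=-\mathrm{KL}\le 0$ does not obviously help with $\tfrac12\int g_a\lvert e^{h_a}-1\rvert\,\dif\piref_x$, because the absolute value destroys the sign information before you can exploit it. You can of course still recover a linear bound by Taylor-expanding your own $\sinh(\delta/\temp)$ for $\delta/\temp<\tfrac12$, so the gap is cosmetic rather than structural---but the specific route you sketch for reaching the constant $(2e-3)/4$ does not work as described.
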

While $q^\star_\temp$ and $\vlse{\temp}q^\star_\temp$ converge in the zero-temperature limit, whether or not $\temp$-regularized optimal policies $\pi^{\temp,\star}$ converge is still unclear.
Indeed, under Assumption \ref{ass:piref:coverage}, $\|\qstarref - q^\star_\temp\|_\infty\leqsim\temp$ (see Lemma \ref{lem:maxent:q-qsoft}).
However, the log-probabilities of an action $a$ under $\pi^{\temp,\star}$ are amplified by $\temp^{-1}$.
Hence, the total variation difference between the BG policy at temperature $\temp$ and potential $\qstarref$ and $\pi^{\temp, \star}$ may not vanish as $\temp$ vanishes.
Based on this insight, we introduce the \emph{temperature decoupling \ifgambit gambit\else mechanism\fi}.
\begin{definition}
    \label{def:temperature-decoupling}
    Given $\temp > 0$, the \emph{temperature decoupling \ifgambit gambit\else mechanism\fi} specifies an alternate temperature $\alttemp = \alttemp(\temp)$ and constructs $\pi^{\temp,\alttemp} := \boltzmann{\temp}q^\star_{\alttemp}$.
    In particular, it requires that $\alttemp/\temp\to 0$ as $\temp\to 0$.
\end{definition}
At any $\temp>0$,  decoupled-temperature policies $\pi^{\temp,\alttemp}$ are necessarily \emph{not} optimal for the $\temp$-regularized problem.
\ifgambit
Nevertheless, unlike $\pi^{\temp,\star}$, the policies $\pi^{\temp,\alttemp}$ produced by the temperature decoupling gambit realize long-term advantages: they have convergence guarantees in the vanishing temperature limit, and they recover an interpretable reference-optimal policy.
\else
Nevertheless, unlike $\pi^{\temp, \star}$, they have convergence guarantees in the vanishing-temperature limit, and they recover an interpretable reference-optimal policy.
\fi
\begin{definition}
\label{def:pistarref}
Let $q^\star$ denote the optimal action-value function in a given MDP, and let $\piref\in\kernspace{\statespace}{\actionspace}$. 
The \emph{optimality-filtered} reference policy $\pistarref$ is defined by
\begin{align*}
    \pistarref_x \propto \piref_x\odot \characteristic{\optactset(x)} \quad\text{where}\quad \optactset(x) := \{a\in\actionspace\,:\,q^\star(x, a) = \esssup_{\piref_x}q^\star(x, \cdot)\}.
\end{align*}
Here $\characteristic{\setfont{Y}}$ is the characteristic or indicator function for the measurable set $\setfont{Y}$.
\end{definition}
Heuristically, the optimality-filtered reference $\pistarref_x$ is the \emph{restriction} of $\piref_x$ onto the set of expected-value-optimal actions in the state $x$.\footnote{\,This is exact when $q^\star = \qstarref$.}
When $\piref$ is the uniform random policy, that is, $\piref_x = \uniform(\actionspace)$ for all $x \in \statespace$, we see that $\pistarref_x = \uniform(\optactset(x))$---the \emph{uniform policy on optimal actions}.
In a sense, $\pistarref$ is the \emph{most diverse} (reference-)optimal policy; it does not discriminate between optimal actions.

In general, even when $\pi^{\temp,\star}$ does converge as $\temp$ converges to zero, its limit is different from $\pistarref$.
We demonstrate this explicitly in Section \ref{s:maxent:demo}.
On the other hand, our next result proves that the temperature decoupling \ifgambit gambit\else mechanism\fi\ enables convergence to $\pistarref$.\footnote{\,We discuss the benefits of this optimal policy in Appendix \ref{app:value-of-pistarref}.}

\begin{restatable}{theorem}{decoupledpolicyconvergence}
\label{thm:decoupled:policy-convergence}
\label{claim:decoupled:policy-convergence}
Under Assumption \ref{ass:piref:coverage}, if $\alttemp = \alttemp(\temp)$ is such 
that $\lim_{\temp\to 0}\alttemp/\temp = 0$,
then $\pi^{\temp,\alttemp}_x \to \pistarref_x$ as $\temp \to 0$, for all $x \in \statespace$, in $\rm{TV}$ if $\actionspace$ is discrete and weakly if $\actionspace$ is continuous.\hfill\prooflink{decoupled:policy-convergence}
\end{restatable}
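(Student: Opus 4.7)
The plan is to interpose the ``idealized'' Boltzmann--Gibbs policy $\boltzmann{\temp}\qstarref$ and decompose the convergence as
\begin{align*}
\pi^{\temp,\alttemp} = \boltzmann{\temp}q^\star_\alttemp \xrightarrow{\mathrm{TV}} \boltzmann{\temp}\qstarref \xrightarrow{\text{TV or weakly}} \pistarref
\qquad\text{as }\temp \to 0.
\end{align*}
The first arrow is a direct consequence of Theorem \ref{thm:tv-bound}, the a priori rate $\|q^\star_\alttemp - \qstarref\|_\infty \lesssim \alttemp$ (Lemma \ref{lem:maxent:q-qsoft}), and the decoupling hypothesis $\alttemp/\temp \to 0$. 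The second is a Laplace-type concentration of a Gibbs density onto the $\piref_x$-essential maximizers of $\qstarref$, where Assumption \ref{ass:piref:coverage} supplies a non-degenerate normalizer.

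For the bridging step, since $\alttemp/\temp \to 0$, for all $\temp$ sufficiently small one has $\|q^\star_\alttemp(x,\cdot) - \qstarref(x,\cdot)\|_{L^\infty(\piref_x)} < \temp/2$, and the linear branch of Theorem \ref{thm:tv-bound} yields, uniformly in $x$,
\begin{align*}
\|(\boltzmann{\temp}q^\star_\alttemp)_x - (\boltzmann{\temp}\qstarref)_x\|_{\mathrm{TV}}
\leq \tfrac{2e - 3}{4}\,\temp^{-1}\|q^\star_\alttemp - \qstarref\|_\infty
\lesssim \alttemp/\temp \to 0.
\end{align*}
Thus it suffices to analyze the limit of $\boltzmann{\temp}\qstarref$ alone.

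For the concentration step, let $M_x := \esssup_{\piref_x}\qstarref(x,\cdot)$ and $A^\star_x := \{a\in\actionspace : \qstarref(x,a) = M_x\}$. Writing the BG density against $\piref_x$ and pulling $M_x$ out of the exponent,
\begin{align*}
\frac{d(\boltzmann{\temp}\qstarref)_x}{d\piref_x}(a)
= \frac{e^{(\qstarref(x,a) - M_x)/\temp}}{\int e^{(\qstarref(x,a') - M_x)/\temp}\,d\piref_x(a')}.
\end{align*}
The integrand is uniformly bounded by $1$ and tends $\piref_x$-a.e.\ to $\characteristic{A^\star_x}$ as $\temp \to 0$. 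Assumption \ref{ass:piref:coverage}, together with the monotone convergence $q^\star_\temp \to \qstarref$ of Theorem \ref{thm:coupled:q-convergence}, propagates to $\piref_x(A^\star_x) \geq \minp > 0$, so the denominator stays bounded below. Dominated convergence then gives, for every $\phi \in C_b(\actionspace)$,
\begin{align*}
\int \phi\,d(\boltzmann{\temp}\qstarref)_x
\xrightarrow{\temp \to 0}
\frac{\int_{A^\star_x}\phi\,d\piref_x}{\piref_x(A^\star_x)}
= \int \phi\,d\pistarref_x,
\end{align*}
where the last equality uses the identification $A^\star_x = \optactset(x)$ modulo $\piref_x$-null sets. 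This is weak convergence in the continuous case; in the discrete case, applied to indicator test functions, the same argument upgrades to TV.

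The main obstacle is securing this concentration in the continuous setting: without Assumption \ref{ass:piref:coverage}, the normalizer in the BG density could collapse, yielding an ill-defined limit; the assumption precisely prevents this. A secondary subtlety is the identification $A^\star_x = \optactset(x)$ modulo $\piref_x$-null sets, needed to match the Laplace limit to $\pistarref$ as defined via $q^\star$ in Definition \ref{def:pistarref}; this follows from the fact that $q^\star$ and $\qstarref$ produce the same $\piref_x$-essential suprema and essential argmaxes at each state, because the Bellman reference-optimality operator is by construction blind to $\piref_x$-null behavior. Everything else reduces to dominated convergence plus the TV bound of Theorem \ref{thm:tv-bound}.
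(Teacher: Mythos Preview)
Your decomposition is exactly the paper's: interpose $\boltzmann{\temp}\qstarref$, kill the bridging term in total variation via Theorem~\ref{thm:tv-bound} and Lemma~\ref{lem:maxent:q-qsoft} (this is precisely the inequality displayed in the paper's proof), then pass to the Laplace limit of $(\boltzmann{\temp}\qstarref)_x$. The paper does not write out that last step, simply citing the vanishing-temperature limit of a BG policy with fixed potential as ``well-studied''; your dominated-convergence argument supplies the detail the paper omits, and your propagation of Assumption~\ref{ass:piref:coverage} to $\piref_x(A^\star_x)\geq\minp$ is valid (since $q^\star_\alttemp\uparrow\qstarref$ implies $B_\alttemp\subseteq\{\qstarref(x,\cdot)\geq M_\alttemp\}$, and these sets decrease to $A^\star_x$ up to a null set).

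One caveat on the final identification: your claim that the $\piref_x$-essential argmax of $\qstarref(x,\cdot)$ agrees with $\optactset(x)$ (which Definition~\ref{def:pistarref} states in terms of $q^\star$) ``because $\bellmanref$ is blind to $\piref_x$-null behavior'' is not a proof and can fail. The difference $q^\star(x,a)-\qstarref(x,a)=\gamma\int(v^\star-\vstarref)\,\dif P_{x,a}$ varies with $a$ through the transition kernel and may be nonconstant on sets of positive $\piref_x$-measure, so the two essential argmaxes need not coincide even under Assumption~\ref{ass:piref:coverage}. The paper does not address this point either---it asserts $(\boltzmann{\temp}\qstarref)_x\to\pistarref_x$ outright---and in the regimes it emphasizes (discrete $\actionspace$ with fully supported $\piref$, or continuous $q^\star$) one has $q^\star=\qstarref$ and the identification is immediate. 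Under that tacit hypothesis your argument is complete; absent it, what you have actually shown is convergence to the $\qstarref$-analogue of $\pistarref$.
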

At the heart of the proof of Theorem~\ref{thm:decoupled:policy-convergence} is the following inequality (a direct consequence of Theorem \ref{thm:tv-bound} and Lemma \ref{lem:maxent:q-qsoft}), which relates the BG policies at temperature $\temp$ and potentials $q^\star_\alttemp$ and $\qstarref$:
\begin{equation*}
   \lim_{\temp \to 0} \sup_x \|(\boltzmann{\temp} q^\star_\alttemp)_x - (\boltzmann{\temp} \qstarref)_x\|_{\rm TV} \lesssim - \lim_{\temp \to 0} \frac{\alttemp} {\temp}\log \minp.
\end{equation*}
This inequality reduces questions of convergence of $\boltzmann{\temp} q^\star_\alttemp$ to those of $\boltzmann{\temp} \qstarref$ (the vanishing temperature limit of a BG policy with a fixed potential is well-studied).
Note that the smaller the fraction $\alttemp/\temp$ is, the closer these two policies are.
For instance, taking $\alttemp(\temp) = \temp^3$ ensures that $\boltzmann{\temp} q^\star_\alttemp$ is more like $\boltzmann{\temp} \qstarref$ than taking $\alttemp(\temp) = \temp^2$.
In particular, it is from this inequality that the temperature decoupling \ifgambit gambit\else mechanism\fi's requirement that $\alttemp/\temp\to 0$ as $\temp\to 0$ arises.

Beyond enabling policy convergence in the vanishing temperature limit, the temperature decoupling \ifgambit gambit\else mechanism\fi\ also ensures return distribution function convergence.

\begin{restatable}{theorem}{decoupledzetaconvergence}
    \label{thm:decoupled:zeta-convergence}
    \label{claim:decoupled:zeta-convergence}
    Suppose $\actionspace$ is discrete and Assumption \ref{ass:piref:coverage} holds. 
    If $\alttemp=\alttemp(\temp)$ is such that $\alttemp/\temp\to 0$ as $\temp\to 0$, then, for any $p,p'\in[1,\infty)$ and $\canonicalmuwassmeasure\in\probset{\statespace\times\actionspace}$, as $\temp\to 0$, the return distribution functions $\zeta^{\temp,\alttemp}$ of the temperature-decoupled policies $\pi^{\temp,\alttemp}$ satisfy $\muwass{p}{p'}(\zeta^{\temp,\alttemp}, \zeta^{\pistarref})\to 0$.\hfill\prooflink{decoupled:zeta-convergence}
\end{restatable}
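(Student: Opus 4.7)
The plan is to derive a pointwise bound on $\wass_1(\zeta^{\pi^{\temp,\alttemp}}_{x, a}, \zeta^{\pistarref}_{x, a})$, show it vanishes as $\temp\to 0$ for every $(x, a)$, and conclude via dominated convergence against $\canonicalmuwassmeasure$; general $p\in[1,\infty)$ will then be handled by a bounded-support comparison. Since $r$ is bounded, all return distributions lie in $[-D/2, D/2]$ with $D := 2\|r\|_\infty/(1-\gamma)$.

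\textbf{Step 1 (pointwise iteration bound).} Consider the distributional Bellman equation
\begin{align*}
    \zeta^\pi_{x, a} = \int(\bootfn[\gamma]{r(x,a)})_{\#}\Big(\sum_{a'}\pi_{x'}(a')\,\zeta^\pi_{x', a'}\Big)\,\dif P_{x, a}(x');
\end{align*}
the associated operator is a $\gamma$-contraction in $\supwass$ (taken with $\wass_1$) and has $\zeta^\pi$ as its unique fixed point. Iterating this operator on $\zeta^{\pistarref}$ under $\pi^{\temp,\alttemp}$, telescoping the resulting sequence, and invoking the elementary mixture bound $\wass_1(\sum_a p_a\mu_a, \sum_a q_a\mu_a) \leq D\|p-q\|_{\mathrm{TV}}$ (valid for measures on $[-D/2,D/2]$) together with the tower property yields
\begin{align*}
    \wass_1(\zeta^{\pi^{\temp,\alttemp}}_{x, a}, \zeta^{\pistarref}_{x, a}) \leq D\sum_{t=1}^\infty \gamma^t\,\E^{\pi^{\temp,\alttemp}}_{x, a}\!\left[\|\pi^{\temp,\alttemp}_{X_t} - \pistarref_{X_t}\|_{\mathrm{TV}}\right],
\end{align*}
where $(X_t, A_t)_{t\geq 0}$ is the state-action trajectory under $\pi^{\temp,\alttemp}$ starting from $(x, a)$.

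\textbf{Step 2 (termwise vanishing).} By Theorem \ref{thm:decoupled:policy-convergence} (and since TV convergence on a discrete $\actionspace$ agrees with weak convergence), $\|\pi^{\temp,\alttemp}_{x'} - \pistarref_{x'}\|_{\mathrm{TV}} \to 0$ pointwise in $x'$; however, the trajectory law $\law(X_t)^{\pi^{\temp,\alttemp}}_{x, a}$ itself depends on $\temp$. I would first establish by induction on $t$ that $\|\law(X_t)^{\pi^{\temp,\alttemp}}_{x, a} - \law(X_t)^{\pistarref}_{x, a}\|_{\mathrm{TV}} \to 0$, the inductive step combining TV-non-expansivity of the state-action transition operator (to handle the previously-shifted marginal) with dominated convergence of $\int\|\pi^{\temp,\alttemp}_{\cdot} - \pistarref_{\cdot}\|_{\mathrm{TV}}$ against the \emph{fixed} $\pistarref$-marginal (to handle the policy perturbation). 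The triangle estimate
\begin{align*}
    \E^{\pi^{\temp,\alttemp}}_{x, a}[\|\pi^{\temp,\alttemp}_{X_t} - \pistarref_{X_t}\|_{\mathrm{TV}}] \leq \E^{\pistarref}_{x, a}[\|\pi^{\temp,\alttemp}_{X_t} - \pistarref_{X_t}\|_{\mathrm{TV}}] + \|\law(X_t)^{\pi^{\temp,\alttemp}}_{x,a} - \law(X_t)^{\pistarref}_{x,a}\|_{\mathrm{TV}}
\end{align*}
then sends each summand to $0$: the first term via dominated convergence against the fixed $\law(X_t)^{\pistarref}$-measure, the second by the induction just described. A final dominated convergence in $t$ (with majorant $\gamma^t$) delivers $\wass_1(\zeta^{\pi^{\temp,\alttemp}}_{x,a}, \zeta^{\pistarref}_{x,a}) \to 0$ for every $(x, a)$.

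\textbf{Step 3 (conclusion and main obstacle).} The bounded-support inequality $\wass_p(\mu, \nu) \leq D^{(p-1)/p}\wass_1(\mu, \nu)^{1/p}$ promotes pointwise $\wass_1$-convergence to pointwise $\wass_p$-convergence, and since $\wass_p(\zeta^{\pi^{\temp,\alttemp}}_{x,a}, \zeta^{\pistarref}_{x,a}) \leq D$ uniformly, dominated convergence against $\canonicalmuwassmeasure \in \probset{\statespace\times\actionspace}$ delivers $\muwass{p}{p'}(\zeta^{\temp,\alttemp}, \zeta^{\pistarref}) \to 0$. The main obstacle throughout is that Theorem \ref{thm:decoupled:policy-convergence} supplies only \emph{pointwise} policy convergence, ruling out the standard contraction-based stability of the distributional Bellman fixed point in the supremal metric $\supwass$---precisely the mismatch that motivates the paper's choice of $\muwass{p}{p'}$. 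Closing this gap requires the nested induction / dominated-convergence argument in Step 2, which transfers pointwise policy convergence across the infinite discounted horizon while simultaneously controlling the $\temp$-dependence of the trajectory law.
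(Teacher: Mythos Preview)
Your proof is correct, but it takes a different route from the paper's. Both arguments rest on the same ingredients---the distributional Bellman equation, pointwise TV convergence of policies (Theorem~\ref{thm:decoupled:policy-convergence}), boundedness of returns, and dominated convergence against $\canonicalmuwassmeasure$---and they diverge in how the one-step decomposition is arranged. In Step~1 you insert the intermediate comparison through $\pi^{\temp,\alttemp}$, so the recursive Wasserstein term lives under the $\pi^{\temp,\alttemp}$-transition; this is precisely what forces the Step-2 induction on $t$ showing $\law(X_t)^{\pi^{\temp,\alttemp}}_{x,a}\to\law(X_t)^{\pistarref}_{x,a}$ in TV. The paper instead inserts the intermediate through $\pistarref$, obtaining
\[
\wass(\zeta^{\temp,\alttemp}_{x,a},\zeta^{\pistarref}_{x,a})
\;\leq\;
\gamma\!\int\wass(\zeta^{\temp,\alttemp}_{x',a'},\zeta^{\pistarref}_{x',a'})\,\dif\ppixa[\pistarref]_{x,a}(x',a')
+ C\gamma\!\int\|\pi^{\temp,\alttemp}_{x'}-\pistarref_{x'}\|_{\mathrm{TV}}^{1/p}\,\dif P_{x,a}(x'),
\]
so the recursion sits under the \emph{fixed} kernel $\ppixa[\pistarref]$. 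Taking $\iota:=\limsup_{\temp\to 0}\wass(\zeta^{\temp,\alttemp}_{\cdot},\zeta^{\pistarref}_{\cdot})$, reverse Fatou and DCT give $\iota\leq\gamma\int\iota\,\dif\ppixa[\pistarref]$, hence $\sup\iota\leq\gamma\sup\iota$ and $\iota\equiv 0$. This sidesteps your Step-2 trajectory-law induction entirely and works directly for every $p$ (the paper controls the mixture term via \citet[Theorem~6.15]{villani2008optimal} rather than your $\wass_1$-then-promote strategy). In exchange, your route is more elementary and delivers an explicit, non-asymptotic inequality before the limit is taken.
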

While Theorem \ref{thm:decoupled:zeta-convergence} does not yet provide an algorithm for approximating $\zeta^\star$, this result serves as inspiration for such developments in Section \ref{s:dsql}.

\subsection{Numerical Demonstration}
\label{s:maxent:demo}
In this section, we demonstrate that the policies learned via the temperature decoupling \ifgambit gambit\else mechanism\fi\ differ from those learned in ERL, even in the presence of stochastic updates.

Figure \ref{fig:td} shows a given tristate MDP with two actions (blue: $a_1$; green: $a_2$), as well as learned policies $\hat{\pi}^{\temp,\star}$ and $\hat{\pi}^{\temp,\alttemp}$ estimated with soft Q-learning \citep{haarnoja2017reinforcement}.
Here $\piref_x =\mathcal{U}(\actionspace)$ for all $x \in \statespace$ and $\gamma = 0.9$.
As this MDP is tabular, Theorem \ref{thm:opt occs conv in vanshing temp} implies that the policies $\pi^{\temp,\star}$ converge as $\temp\to 0$.
Thus, the temperature decoupling \ifgambit gambit\else mechanism\fi\ is not necessary to guarantee convergence. 
Yet we see different limiting behavior.
As predicted by Theorem \ref{thm:decoupled:policy-convergence}, the estimates $\hat{\pi}^{\temp,\alttemp}$ converge to $\pistarref$, as $\temp \to 0$.
With uniform $\piref$, this is the policy that samples all optimal actions, given a state, with equal probability.
As $\temp\to 0$, the estimates  $\hat{\pi}^{\temp,\star}_{x_0}$ do converge to a different optimal policy.
This difference is in $x_0$, where $\hat{\pi}^{\temp,\star}_{x_0}$ collapse to $\delta_{a_1}$.
We take $\alttemp = \temp^2$, in line with Definition \ref{def:temperature-decoupling}. 
The two optimal policies found emphasize different notions of diversity.
The limit of $\pi^{\temp,\star}$ filters out optimal actions in order to play actions more uniformly on average with respect to state occupancy in the long term, while the limit of $\pi^{\temp,\sigma}$ looks to maximize state-wise action diversity.

\begin{figure}[h]
\centering
\begin{minipage}{0.15\linewidth}
\includegraphics[width=\linewidth]{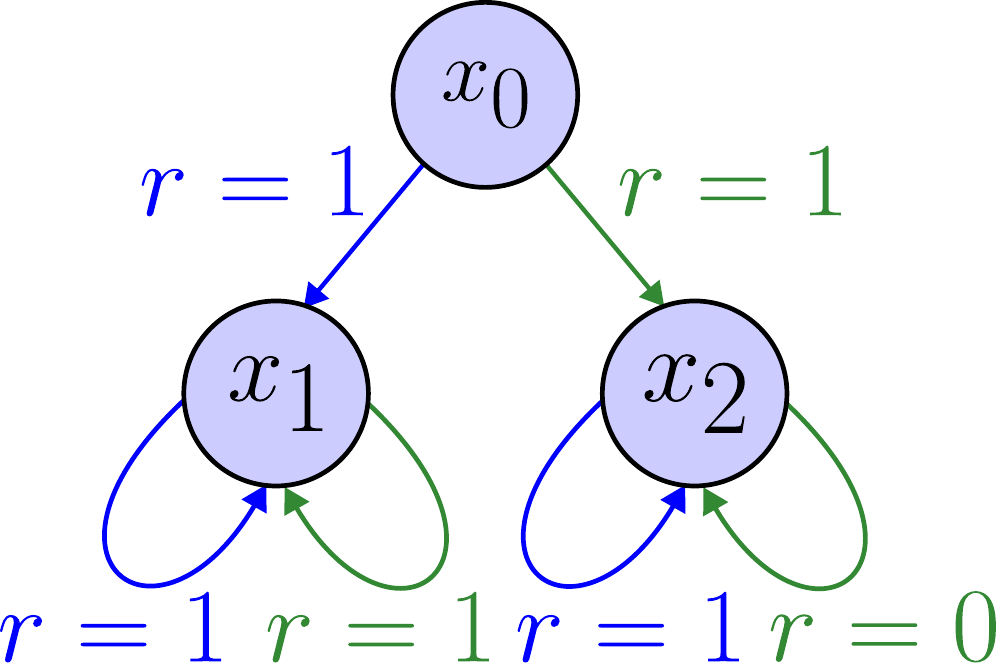}
\end{minipage}
\hfill\vline\hfill
\begin{minipage}{0.41\linewidth}
\includegraphics[width=\linewidth, trim={0 0 2.75cm 0}, clip]{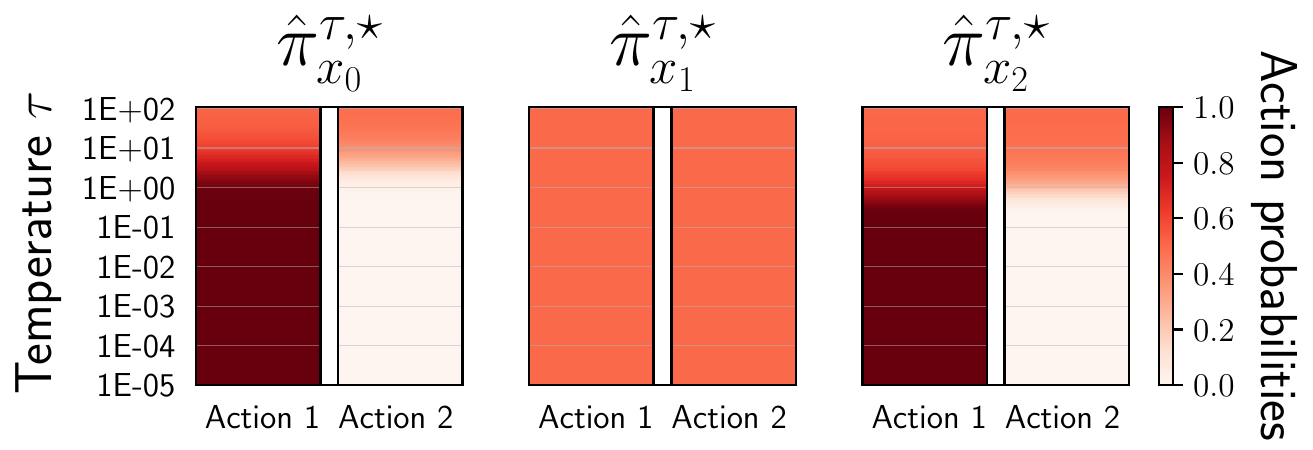}
\end{minipage}
\hfill\vline\hfill
\begin{minipage}{0.40\linewidth}
\includegraphics[width=\linewidth, trim={3cm 0 0 0}, clip]{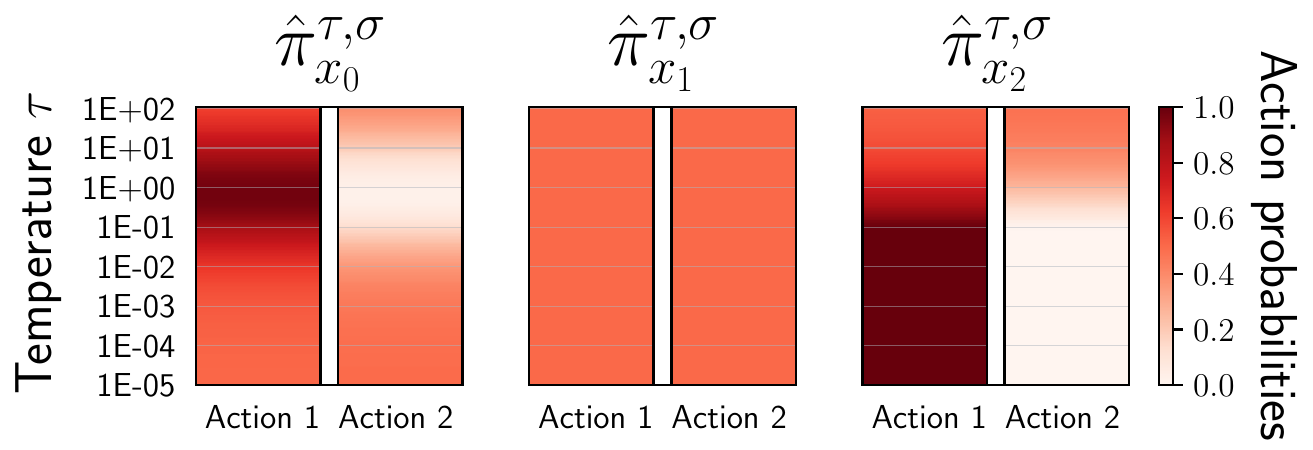}
\end{minipage}
\caption{%
Differences between $\hat{\pi}^{\temp,\star}$ and $\hat{\pi}^{\temp,\alttemp}$, approximated with soft Q-learning.
\textbf{Left}: Graphical model of the MDP; arrow colors encode actions. \textbf{Center}: Depiction of the estimated policies $\hat{\pi}^{\temp,\star}$ at each state, as $\temp\to 0$. \textbf{Right}: Depiction of the estimated policies $\hat{\pi}^{\temp,\alttemp}$ at each state, as $\temp\to 0$. \textbf{Summary}: Learned policies differ in $x_0$, but are otherwise the same.
}
\vspace{-0.35cm}
\label{fig:td}
\end{figure}

\section{Convergent Approximation of Optimal Return Distributions}
\label{s:dsql}
In this section, we formalize a new branch of DRL and introduce distributional ERL (DERL).
\footnote{\,Independently and concurrently, similar results were established by \citet{ma2025dsac} in the fixed-temperature regime, but only with discrete action spaces and $\piref$ being the uniform policy.}
Our main results in this section, Theorems \ref{thm:soft-bellman:control:convergence}, \ref{thm:soft-bellman:control:coupled}, and \ref{thm:soft-bellman:control:decoupled}, establish convergent iterative schemes for approximate (reference-)optimal return distribution estimation.
In Section \ref{s:dsql:derl}, we introduce novel soft distributional Bellman operators, for evaluation and for control, and establish the convergence of their iterates.
The behavior of the resulting return distribution approximations in the vanishing temperature limit is treated in Section \ref{s:dsql:vanishing-temperature}.
To conclude, a simulation is presented in Section \ref{s:dsql:demo} to illustrate the resulting optimal return distribution approximations.

\subsection{Entropy-Regularized Distributional Reinforcement Learning}\label{s:dsql:derl}
\label{s:dsql:soft-bellman-operators}
We begin by defining a \emph{soft distributional Bellman operator}, as an analogue to the distributional Bellman operator \citep{bellemare2017distributional,rowland2019statistics}.
It, under certain conditions, computes
\begin{align*}
    \softzeta{\pi}{\temp}_{x, a}
    :=
    \law\bigg(
        r(X^\pi_0, A^\pi_0) + \sum_{t\geq 1}\gamma^t\left(r(X^\pi_t) - \temp\kl{\pi_{X^\pi_t}}{\piref_{X^\pi_t}}\right)\bigvert X^\pi_0 = x,\,A^\pi_0=a
    \bigg).
\end{align*}

Notationally, for any $\pi\in\kernspace{\statespace}{\actionspace}$, we define $\klfn{\pi}:\statespace\to\R$ via $\klfn{\pi}(x) = \kl{\pi_x}{\piref_x}$.

\begin{definition}
    \label{def:soft-bellman:evaluation}
    For any $\temp>0$, $\gamma < 1$, and $\pi\in\kernspace{\statespace}{\actionspace}$, the \emph{soft distributional Bellman operator} $\softdistbellman{\pi}$ is given by
    \begin{align*}
    (\softdistbellman{\pi}\softzetasym)_{x, a}
        :=
        \left(\bootfn{r(x, a)}\circ\proj{\R} - \gamma\temp\klfn{\pi}\circ\proj{\statespace}\right)_\#\left(\softzetasym_{\dummy,\dummy}\otimes\ppixa_{x, a}\right).
    \end{align*}
\end{definition}

\begin{restatable}{theorem}{softbellmanevaluationcontraction}
    \label{thm:soft-bellman:evaluation:contraction}
    \label{claim:soft-bellman:evaluation:contraction}
    If $r \in M_b(\statespace\times\actionspace)$, $\gamma < 1$, and $\pi\in\kernspace{\statespace}{\actionspace}$ is such that
    \begin{equation}
    \label{assump: pi admissible}
        \sup_{x,a} \|\temp\klfn{\pi}\|_{L^p(P_{x, a})} < \infty,
    \end{equation}
    the soft distributional Bellman operator $\softdistbellman{\pi}$ is a $\gamma$-contraction in $\supwass$ for every $\temp\geq 0$.
    Thus, it has a unique solution to the fixed point equation $\softzetasym = \softdistbellman{\pi}\softzetasym$, which we denote by $\softzeta{\temp}{\pi}$.\hfill\prooflink{soft-bellman:evaluation:contraction}
\end{restatable}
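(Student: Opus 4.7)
The plan is to mimic the classical contraction proof for the distributional Bellman operator, with two additional checks required to accommodate the soft variant: (i) verifying that $\softdistbellman{\pi}$ maps the space $\kernspacebar{\statespace\times\actionspace}{\R}$ into itself, so that $\supwass$ is well-defined on iterates, and (ii) confirming that the $\klfn{\pi}$-dependent shift does not enlarge the per-step Wasserstein distance between images. Once these are established, the usual Banach fixed point argument applies.

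First I would verify stability of the space. By the definition of the pushforward, for any $(x,a)$ the $p$th moment of $(\softdistbellman{\pi}\softzetasym)_{x,a}$ is
\begin{align*}
\int |r(x,a) + \gamma z - \gamma\temp\klfn{\pi}(x')|^p\,\dif(\softzetasym_{\dummy,\dummy}\otimes\ppixa_{x,a})((x',a'),z).
\end{align*}
Using the triangle inequality in $L^p$, bounded reward, assumption \eqref{assump: pi admissible}, and the uniform moment bound on $\softzetasym$, one controls this uniformly in $(x,a)$, so $\softdistbellman{\pi}\softzetasym \in \kernspacebar{\statespace\times\actionspace}{\R}$. This reduces the problem to showing the contraction estimate on this space.

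Next I would establish the contraction. Fix $(x,a)$ and $\softzetasym, \softzetasym' \in \kernspacebar{\statespace\times\actionspace}{\R}$. For each $(x',a')$, let $\Pi_{x',a'}$ be an optimal $\wass$-coupling of $\softzetasym_{x',a'}$ and $\softzetasym'_{x',a'}$, and consider the coupling of $(\softdistbellman{\pi}\softzetasym)_{x,a}$ and $(\softdistbellman{\pi}\softzetasym')_{x,a}$ obtained by drawing $(x',a')\sim\ppixa_{x,a}$ and then $(z,z')\sim \Pi_{x',a'}$, applying the map $u \mapsto r(x,a) + \gamma u - \gamma\temp\klfn{\pi}(x')$ to each coordinate. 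Since this map is affine with slope $\gamma$ and the additive shift is \emph{the same} in both coordinates (it depends only on $(x,a,x')$, not on $\softzetasym$ vs.\ $\softzetasym'$), the induced cost equals $\gamma|z-z'|$. Hence
\begin{align*}
\wass((\softdistbellman{\pi}\softzetasym)_{x,a}, (\softdistbellman{\pi}\softzetasym')_{x,a}) \leq \gamma\int \wass(\softzetasym_{x',a'}, \softzetasym'_{x',a'})\,\dif\ppixa_{x,a}(x',a') \leq \gamma\supwass(\softzetasym, \softzetasym').
\end{align*}
Taking the supremum over $(x,a)$ yields $\supwass(\softdistbellman{\pi}\softzetasym, \softdistbellman{\pi}\softzetasym') \leq \gamma\supwass(\softzetasym, \softzetasym')$. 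Existence and uniqueness of $\softzeta{\temp}{\pi}$ then follow from the Banach fixed point theorem, once one recalls from Section~\ref{sec: prelims} that $(\kernspacebar{\statespace\times\actionspace}{\R}, \supwass)$ is complete.

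The main obstacle I anticipate is the stability step: the entropic shift $-\gamma\temp\klfn{\pi}(x')$ is unbounded in general, so one cannot simply invoke the bounded-reward argument from the non-regularized setting. This is precisely where assumption \eqref{assump: pi admissible} enters, and some care is needed to thread the $L^p(P_{x,a})$ control through the disintegration so that moments remain uniformly bounded across all $(x,a)$. The contraction itself, by contrast, is essentially a free consequence of the identical-shift cancellation in the coupling argument, and does not require any additional assumption on $\pi$ beyond ensuring the iterates live in the metric space.
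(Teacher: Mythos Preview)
Your proposal is correct and follows essentially the same route as the paper: first a stability check that $\softdistbellman{\pi}$ preserves $\kernspacebar{\statespace\times\actionspace}{\R}$ via the $L^p$ triangle inequality together with \eqref{assump: pi admissible}, then a contraction estimate obtained by noting that the identical affine shift cancels and only the $\gamma$-scaling survives, followed by Banach's fixed point theorem on the complete space $(\kernspacebar{\statespace\times\actionspace}{\R},\supwass)$. The only cosmetic difference is that the paper phrases the contraction step by invoking the convexity of the Wasserstein metric (citing \cite{villani2008optimal,bellemare2023distributional}) rather than writing out the explicit coupling you describe, but these are the same argument.
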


Next, we move to \emph{policy improvement}.
In ERL, improving the action-value function $q$ involves policy evaluation with the policy $\boltzmann{\temp}q$.
We leverage this insight to enable control.
\begin{definition}
    \label{def:soft-bellman:control}
    For any $\temp>0$, the \emph{soft distributional optimality operator} $\softdistbellmanopt$ is given by
    \begin{align*}
        (\softdistbellmanopt\softzetasym)_{x,a}
        := (\softdistbellman{\boltzmann{\temp}\qfromzeta\softzetasym}\softzetasym)_{x, a}
        \equiv
        (\bootfn{r(x,a)} \circ \proj{\R} - \gamma \temp \klfn{\boltzmann{\temp}\qfromzeta\softzetasym} \circ \proj{\statespace})_\#(\softzetasym_{\dummy,\dummy}\otimes\ppixa[\boltzmann{\temp}\qfromzeta\softzetasym]_{x, a})
    \end{align*}
    where  $\qfromzeta:\kernspace{\statespace\times\actionspace}{\R}\to M(\statespace\times\actionspace)$ is such that $(\qfromzeta\zeta)(x, a) := \bbfont{E}_{Z\sim\zeta_{x, a}}[Z]$.
\end{definition}
We proceed by establishing a simple, but useful algebraic property.
\begin{restatable}{lemma}{dsqlcommutativity}
\label{lem:dsql:commutativity}
\label{claim:dsql:commutativity}
For any $\temp>0$, $\qfromzeta\softdistbellmanopt = \softbellmanopt{\temp}\qfromzeta$.\hfill\prooflink{dsql:commutativity}
\end{restatable}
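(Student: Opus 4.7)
The identity is essentially a calculation: push a mean through the definitions of $\softdistbellmanopt$ and collapse the resulting expression using the well-known duality $\int q\,\dif(\boltzmann{\temp} q)_x - \temp\kl{(\boltzmann{\temp}q)_x}{\piref_x} = (\vlse{\temp}q)(x)$. I will fix an arbitrary $\softzetasym \in \kernspace{\statespace\times\actionspace}{\R}$, set $q := \qfromzeta\softzetasym$ and $\pi := \boltzmann{\temp}q$, and evaluate $(\qfromzeta \softdistbellmanopt \softzetasym)(x,a)$ at an arbitrary $(x,a)\in\statespace\times\actionspace$.

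\textbf{Step 1: unfold the push-forward.} By Definition \ref{def:soft-bellman:control}, the measure $(\softdistbellmanopt\softzetasym)_{x,a}$ is the push-forward of $\softzetasym_{\dummy,\dummy}\otimes\ppixa[\pi]_{x,a}\in\probset{(\statespace\times\actionspace)\times\R}$ by the map $((x',a'), z)\mapsto r(x,a) + \gamma z - \gamma\temp\klfn{\pi}(x')$. Taking expectations through this push-forward, the change-of-variables formula gives
\begin{align*}
    (\qfromzeta\softdistbellmanopt\softzetasym)(x,a)
    = r(x,a) + \gamma \int \!\!\int z\,\dif\softzetasym_{x', a'}(z)\,\dif\ppixa[\pi]_{x,a}(x',a')
    - \gamma\temp\int \klfn{\pi}(x')\,\dif\ppixa[\pi]_{x,a}(x',a').
\end{align*}

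\textbf{Step 2: reduce to marginals.} Using the disintegration $\ppixa[\pi]_{x,a} = \pi_{\dummy}\otimes P_{x,a}$ and the definition $(\qfromzeta\softzetasym)(x',a') = \int z\,\dif \softzetasym_{x',a'}(z) = q(x',a')$, the inner $\softzetasym$-integral becomes $q(x',a')$, so the bootstrap term equals
\begin{align*}
    \gamma\int\!\!\int q(x',a')\,\dif\pi_{x'}(a')\,\dif P_{x,a}(x').
\end{align*}
The KL term depends only on $x'$ and so reduces to $\gamma\temp\int \klfn{\pi}(x')\,\dif P_{x,a}(x')$.

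\textbf{Step 3: the KL duality.} By the definition of $\boltzmann{\temp}q$, the Radon--Nikodym derivative $\dif\pi_{x'}/\dif\piref_{x'}$ has logarithm $(q(x',\cdot) - (\vlse{\temp}q)(x'))/\temp$. Therefore,
\begin{align*}
    \klfn{\pi}(x')
    = \int \frac{q(x',a')-(\vlse{\temp}q)(x')}{\temp}\,\dif\pi_{x'}(a')
    = \frac{1}{\temp}\int q(x',a')\,\dif\pi_{x'}(a') - \frac{(\vlse{\temp}q)(x')}{\temp}.
\end{align*}
Multiplying by $\temp$ and subtracting from $\int q(x',\cdot)\,\dif\pi_{x'}$ shows that, pointwise in $x'$,
\begin{align*}
    \int q(x',a')\,\dif\pi_{x'}(a') - \temp\klfn{\pi}(x') = (\vlse{\temp}q)(x').
\end{align*}

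\textbf{Step 4: conclude.} Combining the bootstrap and KL terms from Step 2 using the identity of Step 3 and integrating against $P_{x,a}$,
\begin{align*}
    (\qfromzeta\softdistbellmanopt\softzetasym)(x,a)
    = r(x,a) + \gamma \int (\vlse{\temp}q)(x')\,\dif P_{x,a}(x')
    = (\softbellmanopt{\temp}q)(x,a)
    = (\softbellmanopt{\temp}\qfromzeta\softzetasym)(x,a),
\end{align*}
which is the desired identity at the arbitrary point $(x,a)$.

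\textbf{Main obstacle.} There is no real obstacle beyond bookkeeping; the only subtlety is tracking the product/projection structure of $\softzetasym_{\dummy,\dummy}\otimes\ppixa[\pi]_{x,a}$ carefully enough to separate the $z$-integral from the $x'$-integral, and recognizing that the KL penalty and the $q$-expectation combine into $\vlse{\temp}q$. For technical hygiene, one should note that whenever $q = \qfromzeta\softzetasym$ is bounded (e.g.\ when $\softzetasym$ has bounded first moments) both $\vlse{\temp}q$ and $\klfn{\boltzmann{\temp}q}$ are well-defined and finite, so every integral above converges.
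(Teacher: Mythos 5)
Your proof is correct and follows essentially the same route as the paper's: unfold the push-forward in $\softdistbellmanopt$, collapse the inner $\softzetasym$-integral to $q = \qfromzeta\softzetasym$, expand $\kl{(\boltzmann{\temp}q)_{x'}}{\piref_{x'}}$ via the log Radon--Nikodym derivative, and recognize that the $q$-expectation and the KL penalty combine into $\vlse{\temp}q$. Your Step 2, which explicitly disintegrates $\ppixa[\pi]_{x,a} = \pi_{\dummy}\otimes P_{x,a}$, is a slightly more careful presentation of the same bookkeeping the paper does implicitly.
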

Now we prove that iterates of $\softdistbellmanopt$ converge, unlike iterates of ${\cal{T}^\star}$ \citep{bellemare2017distributional}.
\begin{restatable}{theorem}{softbellmancontrolconvergence}
\label{thm:soft-bellman:control:convergence}
\label{claim:soft-bellman:control:convergence}
    For any $\softzetasym\in\kernspacebar{\statespace\times\actionspace}{\R}$ and temperature $\temp > 0$ define the iterates $(\softzetasym^{n})_{n\in \N}$ given by $\softzetasym^{n+1} = \softdistbellmanopt\softzetasym^n$ for $\softzetasym^0=\softdistbellmanopt\softzetasym$.
    Then, for $\softzeta{\star}{\temp} := \softzeta{\pi^{\temp,\star}}{\temp}$,    \begin{align*}
        \supwass(\softzetasym^n, \softzetasym^{\temp,\star}) \leq C_{p, \temp,\gamma}n\gamma^{n/p}\supwass(\softzetasym^0, \softzetasym^{\temp,\star})
        \quad\text{and}\quad
        \supwass[1](\softzetasym^n, \softzetasym^{\temp,\star}) \leq \frac{1}{(1-\gamma)\sqrt{\temp}}Cn\gamma^{n}\,\supwass[1](\softzetasym^0, \softzetasym^{\temp,\star})
        ,
    \end{align*}
    where $C, C_{p,\temp,\gamma}<\infty$ are constants depending on $\|r\|_{\sup}$, $(p, \temp,\gamma, \|r\|_{\sup})$ respectively. \hfill\prooflink{soft-bellman:control:convergence}
\end{restatable}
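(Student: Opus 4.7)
Because the greedy BG policy $\pi^n := \boltzmann{\temp}\qfromzeta\softzetasym^n$ varies with the iterate, $\softdistbellmanopt$ is not itself a Wasserstein contraction. The plan is to reduce control back to evaluation---where Theorem \ref{thm:soft-bellman:evaluation:contraction} does furnish a $\gamma$-contraction---by tracking the induced mean-values $q^n := \qfromzeta\softzetasym^n$ in parallel with $\softzetasym^n$. Lemma \ref{lem:dsql:commutativity} gives $q^{n+1} = \softbellmanopt{\temp}q^n$, so by Lemma \ref{lem: soft bell opt contract} we have $\|q^n - q^\star_\temp\|_\infty \leq \gamma^n \|q^0 - q^\star_\temp\|_\infty$. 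The initial gap is dominated by $\supwass[1](\softzetasym^0, \softzeta{\star}{\temp})$ (differences of means are bounded by $\wass[1]$) and hence by $\supwass[p](\softzetasym^0, \softzeta{\star}{\temp})$ through Jensen. A preliminary check---combining Lemma \ref{lem:dsql:commutativity} with $\softbellmanopt{\temp}q^\star_\temp = q^\star_\temp$ and the fixed-point identity for $\softdistbellman{\pi^{\temp,\star}}$---shows that $\softzeta{\star}{\temp}$ is itself a fixed point of $\softdistbellmanopt$.

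\textbf{Per-step decomposition.} Writing $\pi^{\temp,\star} = \boltzmann{\temp}q^\star_\temp$ and using $\softdistbellmanopt\softzetasym^n = \softdistbellman{\pi^n}\softzetasym^n$, the triangle inequality gives
\begin{align*}
\supwass[p](\softzetasym^{n+1}, \softzeta{\star}{\temp})
&\leq \supwass[p](\softdistbellman{\pi^n}\softzetasym^n, \softdistbellman{\pi^n}\softzeta{\star}{\temp}) + \supwass[p](\softdistbellman{\pi^n}\softzeta{\star}{\temp}, \softdistbellman{\pi^{\temp,\star}}\softzeta{\star}{\temp}) \\
&\leq \gamma\,\supwass[p](\softzetasym^n, \softzeta{\star}{\temp}) + E_n,
\end{align*}
where the first term uses the evaluation contraction of Theorem \ref{thm:soft-bellman:evaluation:contraction} and $E_n$ is the sensitivity of $\softdistbellman{\cdot}\softzeta{\star}{\temp}$ in its policy argument.

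\textbf{Bounding $E_n$, unrolling, and the main obstacle.} For each $(x,a)$, I would construct an explicit coupling of $(\softdistbellman{\pi^n}\softzeta{\star}{\temp})_{x,a}$ and $(\softdistbellman{\pi^{\temp,\star}}\softzeta{\star}{\temp})_{x,a}$: draw $X'\sim P_{x,a}$, then draw $(A',A'')$ from a maximal coupling of $\pi^n_{X'}$ and $\pi^{\temp,\star}_{X'}$. On $\{A'=A''\}$ the two targets differ only by $\gamma\temp|\klfn{\pi^n}(X')-\klfn{\pi^{\temp,\star}}(X')|$; on $\{A'\neq A''\}$, an event of probability $\|\pi^n_{X'}-\pi^{\temp,\star}_{X'}\|_{\rm TV}$, they differ by at most the diameter $D_\temp$ of the support of $\softzeta{\star}{\temp}$, which is finite by $(\ref{assump: pi admissible})$ and $\|r\|_{\sup}<\infty$. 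This yields at each $(x,a)$ a $\wass[p]^p$ bound proportional to $D_\temp^p$ times the expected TV distance plus $(\gamma\temp)^p$ times an $L^p(P_{x,a})$-norm of the KL gap. By Theorem \ref{thm:tv-bound} both are controlled by $\|q^n-q^\star_\temp\|_\infty$: linearly once $\|q^n-q^\star_\temp\|_\infty<\temp/2$, and through the $\sqrt{\cdot}$ branch otherwise. Combined with the $\gamma^n$ geometric decay from Step 1, this furnishes $E_n\leq C_{p,\temp,\gamma}\gamma^{n/p}$ in general (the $p$-th root of TV producing the slower exponent) and $E_n\leq \temp^{-1/2}C\gamma^n$ in the linear regime pertinent to the $\supwass[1]$ bound. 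Setting $a_n:=\supwass[p](\softzetasym^n,\softzeta{\star}{\temp})$ and unrolling $a_{n+1}\leq\gamma a_n+E_n$ gives $a_n\leq \gamma^n a_0+\sum_{k=0}^{n-1}\gamma^{n-1-k}E_k$, which sums to $O(n\gamma^{n/p})$ (resp.\ $O(n\gamma^n/\sqrt{\temp})$ with a $(1-\gamma)^{-1}$ arising from the linear-regime tail when the initial gap exceeds $\temp/2$). The principal obstacle is precisely this policy-sensitivity estimate: the KL penalty $\temp\klfn{\pi^n}$ need not be uniformly bounded in $L^\infty$---only in $L^p(P_{x,a})$ per $(\ref{assump: pi admissible})$---so the coupling must respect $L^p$-integrability while simultaneously producing a diameter $D_\temp$ uniform in $n$, which the initialization $\softzetasym^0:=\softdistbellmanopt\softzetasym$ is arranged to secure by placing the first iterate in a ball of radius $O(\|r\|_{\sup}/(1-\gamma) + \temp D_\temp)$.
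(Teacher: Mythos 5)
Your proposal is correct and follows essentially the same route as the paper's proof: reduce control to evaluation via the triangle inequality on $\softdistbellman{\pi}$ (contraction term plus policy-sensitivity term), control the sensitivity term through the geometric decay $\|q^n - q^\star_\temp\|_\infty \leq \gamma^n \|q^0 - q^\star_\temp\|_\infty$ furnished by Lemma \ref{lem:dsql:commutativity} and the $\gamma$-contractivity of $\softbellmanopt{\temp}$, bound the induced TV distance between $\pi^n$ and $\pi^{\temp,\star}$ via Theorem \ref{thm:tv-bound}, and unroll the resulting recurrence $a_{n+1}\leq\gamma a_n + E_n$. The only cosmetic differences are that the paper uses \citet[Theorem 6.15]{villani2008optimal} in place of your explicit maximal coupling, and solves the recurrence by a generating-function computation rather than your direct unrolling; and, as you correctly anticipate, the "principal obstacle" of bounding the diameter of the return distributions uniformly in $n$ is resolved in the paper (Lemmas \ref{lem:dsql:kl-iterate-error} and \ref{lem:dsql:iterates-optimal-transport}) by noting that BG policies with bounded potentials have uniformly bounded $\temp\klfn{\pi^n}$, so the supports live in a fixed ball of radius $O(\|r\|_{\sup}/(1-\gamma))$.
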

Theorem \ref{thm:soft-bellman:control:convergence} leads to stability in entropy-regularized optimal return distribution estimation.
In Figure \ref{fig:dsql:iterates}, we demonstrate the stability of $\softdistbellmanopt$ and the instability of ${\cal{T}^\star}$.
\begin{figure}[h]
    \includegraphics[width=\textwidth]{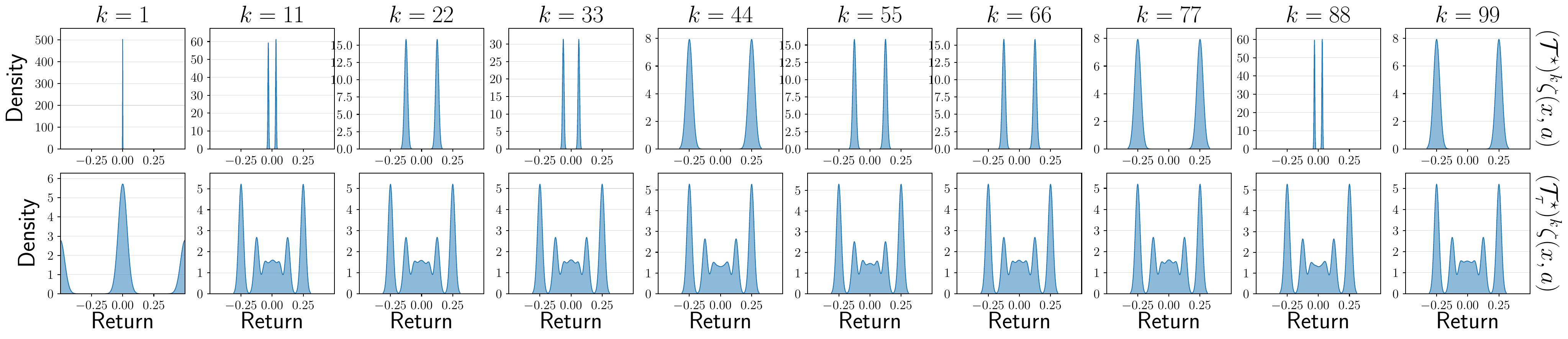}
    \caption{Evolution of the \emph{soft} optimality iterates $(\softdistbellmanopt)^k\zeta (x, a)$ (bottom row) and the iterates of the distributional optimality operator $({\cal{T}^\star})^{k}{\zeta}(x, a)$ (top row). Video of entire iterate sequence is available at \url{https://harwiltz.github.io/assets/stable-return-distributions/}.}
    \label{fig:dsql:iterates}
\end{figure}
The iterates defined in Theorem \ref{thm:soft-bellman:control:convergence} converge to \emph{soft return distributions}, which are influenced by stepwise regularization penalties and correspond to policies that are optimal in ERL.
To estimate \emph{optimal} return distributions, we must consider vanishing temperature limits.

\subsection{Convergent Optimal Return Distribution Estimation in the Vanishing Temperature Limit}\label{s:dsql:vanishing-temperature}
In this section, we instantiate the first methods for computing iterates that approximate reference-optimal return distribution functions in a stable manner.
\begin{restatable}{theorem}{softbellmancontrolcoupled}
\label{thm:soft-bellman:control:coupled} 
\label{claim:soft-bellman:control:coupled} 
    Suppose Assumption \ref{ass:piref:coverage} holds.
    Let $p, p' \in [1, \infty)$ and $\canonicalmuwassmeasure \in \probset{\statespace\times\actionspace}$.
    For any $\epsilon,\delta>0$, there exists a $\temp>0$ for which $\muwass{p}{p'}(\softzeta{\pi^{\temp,\star}}{\temp}, \zeta^{\pi^{\temp,\star}})\leq\delta/2$ and $q^{\pi^{\temp,\star}}$ is $\epsilon/2$-reference-optimal.
    In turn, an $n_{\epsilon,\delta} = n_{\epsilon, \delta}(\temp) \in\N$ exists for which
    \begin{align*}
        \muwass{p}{p'}(\softzetasym^{n}, \zeta^{\pi^{\temp,\star}})\leq\delta \quad\text{and}\quad
        \boltzmann{\temp}\qfromzeta\softzetasym^n\ \text{is $\epsilon$-reference-optimal} \quad \forall n \geq n_{\epsilon, \delta}
    \end{align*}
    where $\softzetasym^{n+1} = \softdistbellmanopt\softzetasym^n$ and $\softzetasym^0=\softdistbellmanopt\softzetasym$ for any $\softzetasym\in\kernspacebar{\statespace\times\actionspace}{\R}$. 
    \hfill\prooflink{soft-bellman:control:coupled}
\end{restatable}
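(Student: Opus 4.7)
The plan is to combine three ingredients: the vanishing-temperature behavior of the regularized fixed point from Theorem \ref{thm:coupled:q-convergence}, the fixed-temperature contraction of $\softdistbellmanopt$ from Theorem \ref{thm:soft-bellman:control:convergence}, and the TV stability of Boltzmann--Gibbs policies from Theorem \ref{thm:tv-bound}. First, I would pick $\temp$ small enough to handle both the policy and distributional conditions at level $\epsilon/2$ and $\delta/2$. The gap $q^{\pi^{\temp,\star}} - q^\star_\temp$ telescopes to a geometric series in $\gamma \temp \klfn{\pi^{\temp,\star}}$ (using the identity $\vlse{\temp}q^\star_\temp = \int q^\star_\temp\, \dif \pi^{\temp,\star} - \temp \klfn{\pi^{\temp,\star}}$), so $\|q^{\pi^{\temp,\star}} - q^\star_\temp\|_\infty \leq \gamma \temp \|\klfn{\pi^{\temp,\star}}\|_\infty / (1-\gamma)$. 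Under Assumption \ref{ass:piref:coverage} the KL $\klfn{\pi^{\temp,\star}}$ stays uniformly bounded in $\temp$, and Theorem \ref{thm:coupled:q-convergence} gives $q^\star_\temp \to \qstarref$; together these yield $q^{\pi^{\temp,\star}} \to \qstarref$, so $\epsilon/2$-reference-optimality holds for small $\temp$. The same boundedness enables a trajectory-coupling argument: under identical sample paths of $\pi^{\temp,\star}$, the random variables realizing $\softzeta{\pi^{\temp,\star}}{\temp}_{x,a}$ and $\zeta^{\pi^{\temp,\star}}_{x,a}$ differ pointwise by at most $\gamma \temp \|\klfn{\pi^{\temp,\star}}\|_\infty / (1-\gamma)$, so $\supwass(\softzeta{\pi^{\temp,\star}}{\temp}, \zeta^{\pi^{\temp,\star}})$ is $O(\temp)$ and, via $\muwass{p}{p'} \leq \supwass$, drops below $\delta/2$ for small enough $\temp$.

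Fixing such a $\temp$, I would next apply Theorem \ref{thm:soft-bellman:control:convergence}: since $\softdistbellmanopt$ is a contraction (the admissibility condition \ref{assump: pi admissible} for $\pi^{\temp,\star}$ is again supplied by Assumption \ref{ass:piref:coverage}), its iterates $\softzetasym^n$ converge to $\softzeta{\pi^{\temp,\star}}{\temp}$ in both $\supwass$ and $\supwass[1]$. Choosing $n$ large enough to bring the $\supwass$ error below $\delta/2$ and combining with the previous step through the triangle inequality yields $\muwass{p}{p'}(\softzetasym^n, \zeta^{\pi^{\temp,\star}}) \leq \delta$.

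For the $\epsilon$-reference-optimality of $\boltzmann{\temp}\qfromzeta\softzetasym^n$, Lemma \ref{lem:dsql:commutativity} identifies $\qfromzeta\softzeta{\pi^{\temp,\star}}{\temp}=q^\star_\temp$, and because taking means is $1$-Lipschitz in $\wass[1]$, $\|\qfromzeta\softzetasym^n - q^\star_\temp\|_\infty \leq \supwass[1](\softzetasym^n, \softzeta{\pi^{\temp,\star}}{\temp})\to 0$. Shrinking this further below $\temp/2$ if necessary, the linear regime of Theorem \ref{thm:tv-bound} gives $\sup_x \|(\boltzmann{\temp}\qfromzeta\softzetasym^n)_x - \pi^{\temp,\star}_x\|_{\mathrm{TV}} \lesssim \temp^{-1}\|\qfromzeta\softzetasym^n - q^\star_\temp\|_\infty$, which a standard simulation-lemma bound (using $r\in M_b$) converts into $\|q^{\boltzmann{\temp}\qfromzeta\softzetasym^n} - q^{\pi^{\temp,\star}}\|_\infty \leq \epsilon/2$. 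Combined with Step 1, this gives $\epsilon$-reference-optimality. The main obstacle is threading the coupled dependence between the two quantifier scales: the $\supwass[1]$ prefactor from Theorem \ref{thm:soft-bellman:control:convergence} degrades like $\temp^{-1/2}$ and the TV bound of Theorem \ref{thm:tv-bound} contributes another $\temp^{-1}$, so the $n_{\epsilon,\delta}$ needed must be chosen in terms of the (now fixed) small $\temp$ and grows as $\temp$ shrinks; one must verify that at this chosen $\temp$ all invoked constants remain finite before selecting $n_{\epsilon,\delta}$.
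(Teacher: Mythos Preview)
Your proposal is correct, and in two places it takes a genuinely different route from the paper. For the bound $\muwass{p}{p'}(\softzeta{\pi^{\temp,\star}}{\temp},\zeta^{\pi^{\temp,\star}})\leq\delta/2$, you couple trajectories directly and use that Assumption~\ref{ass:piref:coverage} forces $\klfn{\pi^{\temp,\star}}\leq -\log\minp$ uniformly, yielding a clean quantitative $O(\temp)$ bound on $\supwass$; the paper instead introduces a reward-shaped intermediate return distribution, invokes the distributional successor measure machinery of \citet{wiltzer2024distributional}, and closes with Lemma~\ref{lem:maxent:kl-collapse} plus dominated convergence, which is heavier and only gives a qualitative $\to 0$. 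For the $\epsilon$-reference-optimality of $\boltzmann{\temp}\qfromzeta\softzetasym^n$, you go through the TV bound and a simulation-lemma step to compare $q^{\boltzmann{\temp}\qfromzeta\softzetasym^n}$ with $q^{\pi^{\temp,\star}}$; the paper instead argues directly from $\|\qfromzeta\softzetasym^n-\qstarref\|_{\sup}\leq\epsilon$, but the final line connecting this to $q^{\boltzmann{\temp}\qfromzeta\softzetasym^n}\geq\qstarref-\epsilon$ is not spelled out (and as written contains a typo), so your route is the more complete one. One small imprecision on your side: Theorem~\ref{thm:coupled:q-convergence} alone gives only monotone pointwise convergence of $q^\star_\temp$ to $\qstarref$; the uniform rate you need comes from Lemma~\ref{lem:maxent:q-qsoft} under Assumption~\ref{ass:piref:coverage}, which is exactly what the paper cites for this step. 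With that correction, your argument goes through and is arguably more elementary than the paper's.
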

Theorem \ref{thm:soft-bellman:control:coupled} is the first example of a convergent iterative scheme for approximating the return distribution of a (reference-)optimal policy.
While it ensures convergence to a $\epsilon$-reference-optimal return distribution, it is still not possible a priori to characterize which return distribution will be learned.
As $\epsilon\to 0$, there may be no stable trend in the return distribution that will be estimated because $\pi^{\temp, \star}$ may not converge.
To achieve (characterizable) convergence to a reference-optimal return distribution, we turn back to the temperature decoupling \ifgambit gambit\else mechanism\fi.
\begin{restatable}{theorem}{softbellmancontroldecoupled}
\label{thm:soft-bellman:control:decoupled}
\label{claim:soft-bellman:control:decoupled}
Suppose Assumption \ref{ass:piref:coverage} holds and $\actionspace$ is discrete.
Let $p,p'\in[1,\infty)$ and $\canonicalmuwassmeasure\in\probset{\statespace\times\actionspace}$.
For any $\epsilon,\delta>0$ and $\softzetasym^0\in\kernspacebar{\statespace\times\actionspace}{\R}$, there exists $\temp>0$, a \emph{decoupled} $\alttemp_\temp>0$ and $\nsoftopt,\nsofteval\in\N$ such that
\begin{align*}
    \muwass{p}{p'}(\hat{\zeta}^{\nsofteval}, \zeta^{\pistarref})\leq\delta
    \quad\text{and}\quad
    \boltzmann{\temp}\qfromzeta\hat{\zeta}^{\nsofteval}\ \text{is $\epsilon$-reference-optimal}
\end{align*}
where $\softzetasym^{n+1} = \softdistbellmanopt[\alttemp]\softzetasym^n$, $\hat{\pi}^{\temp,\alttemp} = \boltzmann{\temp}\softzetasym^{\nsoftopt}$, and $\hat{\zeta}^{n+1} = \softdistbellman{\hat{\pi}^{\temp,\alttemp}}\hat{\zeta}^n$, for $\hat{\zeta}^0 = \softzetasym^{\nsoftopt}$.\hfill\prooflink{soft-bellman:control:decoupled}
\end{restatable}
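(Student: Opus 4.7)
The plan is to split the target distributional error via a four-piece triangle inequality,
\[
\muwass{p}{p'}(\hat\zeta^{\nsofteval}, \zeta^{\pistarref})
\leq
\muwass{p}{p'}(\hat\zeta^{\nsofteval}, \softzeta{\hat\pi^{\temp,\alttemp}}{\temp})
+ \muwass{p}{p'}(\softzeta{\hat\pi^{\temp,\alttemp}}{\temp}, \softzeta{\pi^{\temp,\alttemp}}{\temp})
+ \muwass{p}{p'}(\softzeta{\pi^{\temp,\alttemp}}{\temp}, \zeta^{\pi^{\temp,\alttemp}})
+ \muwass{p}{p'}(\zeta^{\pi^{\temp,\alttemp}}, \zeta^{\pistarref}),
\]
and to select, in order, $\temp$, $\alttemp=\alttemp_\temp$, $\nsoftopt$, and then $\nsofteval$ so that each summand is at most $\delta/4$. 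Step 1 fixes $\temp$ and $\alttemp$: Theorem \ref{thm:decoupled:zeta-convergence} (which needs $\alttemp/\temp\to 0$) shrinks the fourth term below $\delta/4$, and because Assumption \ref{ass:piref:coverage} implies $\sup_x\klfn{\pi^{\temp,\alttemp}}(x)\leq|\log\minp|$, the cumulative discounted regularization penalty is $O(\temp|\log\minp|/(1-\gamma))$, which makes the third term at most $\delta/4$ after possibly shrinking $\temp$.

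Step 2 picks $\nsoftopt$. Theorem \ref{thm:soft-bellman:control:convergence} at temperature $\alttemp$ yields $\softzetasym^n\to\softzeta{\pi^{\alttemp,\star}}{\alttemp}$ in $\supwass$, and Lemma \ref{lem:dsql:commutativity} identifies $\qfromzeta\softzetasym^n$ with the classical soft Q-iterates, so $\qfromzeta\softzetasym^n\to q^\star_\alttemp$ uniformly. I choose $\nsoftopt$ large enough that $\|\qfromzeta\softzetasym^{\nsoftopt}-q^\star_\alttemp\|_\infty<\temp/2$, putting me in the linear regime of Theorem \ref{thm:tv-bound} and rendering $\sup_x\|\hat\pi^{\temp,\alttemp}_x-\pi^{\temp,\alttemp}_x\|_{\rm TV}$ arbitrarily small. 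A standard policy-perturbation bound for the soft evaluation operator (apply $\softdistbellman{\pi_1}-\softdistbellman{\pi_2}$ at the fixed point $\softzeta{\pi_2}{\temp}$, then use the $\gamma$-contraction of Theorem \ref{thm:soft-bellman:evaluation:contraction}) converts this TV gap into a $\muwass{p}{p'}$-bound on $\softzeta{\hat\pi^{\temp,\alttemp}}{\temp}$ versus $\softzeta{\pi^{\temp,\alttemp}}{\temp}$, giving the second term.

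Step 3 picks $\nsofteval$. Assumption \ref{ass:piref:coverage} guarantees $\hat\pi^{\temp,\alttemp}$ satisfies the admissibility condition \eqref{assump: pi admissible}, so by Theorem \ref{thm:soft-bellman:evaluation:contraction} the iterates $\hat\zeta^n$ contract to $\softzeta{\hat\pi^{\temp,\alttemp}}{\temp}$ in $\supwass$, which dominates $\muwass{p}{p'}$; this controls the first term and concludes the distributional estimate. For the $\epsilon$-reference-optimality of $\tilde\pi:=\boltzmann{\temp}\qfromzeta\hat\zeta^{\nsofteval}$, I use Kantorovich--Rubinstein to turn the distributional closeness just proven into uniform closeness of $\qfromzeta\hat\zeta^{\nsofteval}$ to $\softq{\hat\pi^{\temp,\alttemp}}{\temp}$; the latter lies within $O(\temp|\log\minp|/(1-\gamma))$ of $q^{\hat\pi^{\temp,\alttemp}}$, which, in turn, is close to $q^{\pistarref}=\qstarref$ by a simulation-lemma bound together with Theorem \ref{thm:decoupled:policy-convergence}. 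One last application of Theorem \ref{thm:tv-bound} yields TV-closeness of $\tilde\pi$ to $\pistarref$, and a performance-difference bound concludes that $q^{\tilde\pi}\geq\qstarref-\epsilon$.

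The main obstacle is the interlocking of temperature scales. The useful linear branch of Theorem \ref{thm:tv-bound} is only available when $\|\qfromzeta\softzetasym^{\nsoftopt}-q^\star_\alttemp\|_\infty<\temp/2$; outside this regime the $\sinh(4\temp^{-1}\cdot)$ branch activates and introduces a potentially catastrophic $\exp(\temp^{-1})$ dependence. Since the soft-optimality contraction of Theorem \ref{thm:soft-bellman:control:convergence} gives only a rate $\gamma^{n/p}$ (with a polynomial $n$-prefactor), the required $\nsoftopt$ grows like $\log(\temp^{-1})$ and must be chosen \emph{after} both $\temp$ and $\alttemp$ are fixed, and the whole schedule must remain compatible with Definition \ref{def:temperature-decoupling}'s requirement $\alttemp/\temp\to 0$. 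Once this sub-threshold regime is secured, the remaining arguments are routine chainings of contractions with continuity of the return-distribution functional under TV perturbations of the policy, using that $\actionspace$ is discrete so that total-variation bounds propagate cleanly to trajectory laws.
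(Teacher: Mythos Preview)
Your proposal is correct and follows essentially the same architecture as the paper's proof: both chain Theorem~\ref{thm:decoupled:zeta-convergence} (for $\zeta^{\pi^{\temp,\alttemp}}\to\zeta^{\pistarref}$), a soft-vs-hard return gap bound, a policy-perturbation estimate driven by Theorem~\ref{thm:tv-bound}, and the $\supwass$-contraction of Theorem~\ref{thm:soft-bellman:evaluation:contraction}. The differences are in the slicing. You split the distributional error into four pieces, comparing $\hat\zeta^{\nsofteval}$ first to the \emph{fixed point} $\softzeta{\hat\pi^{\temp,\alttemp}}{\temp}$ and then perturbing the fixed point in the policy; the paper splits into three pieces and instead compares the \emph{iterates} $(\softdistbellman{\hat\pi^{\temp,\alttemp}})^{\nsofteval}\hat\zeta^{\alttemp,\star}$ and $(\softdistbellman{\pi^{\temp,\alttemp}})^{\nsofteval}\hat\zeta^{\alttemp,\star}$ directly via a dedicated lemma (Lemma~\ref{lem: soft policy eval different policies iterated q}). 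Your handling of the third term is also more elementary: you observe that Assumption~\ref{ass:piref:coverage} gives the uniform bound $\klfn{\pi^{\temp,\alttemp}}\leq|\log\minp|$, so coupling trajectories yields $\supwass(\softzeta{\pi^{\temp,\alttemp}}{\temp},\zeta^{\pi^{\temp,\alttemp}})\leq\gamma\temp|\log\minp|/(1-\gamma)$, whereas the paper invokes distributional successor measures and dominated convergence. For $\epsilon$-reference-optimality, the paper argues directly that $\|\qfromzeta\hat\zeta^{\nsofteval}-\qstarref\|_{\sup}\leq\epsilon$ via a chain of $Q$-function triangle inequalities; your route through TV-closeness of $\tilde\pi$ to $\pistarref$ and a performance-difference bound is slightly longer but arguably more transparent about why the final policy is near-optimal. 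Both approaches are valid, and your parameter-selection order ($\temp$, $\alttemp$, then $\nsoftopt$ into the linear regime of Theorem~\ref{thm:tv-bound}, then $\nsofteval$) matches the paper's logic.
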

Theorem \ref{thm:soft-bellman:control:decoupled} outlines an algorithm for estimating $\zeta^{\pistarref}$.
First, approximate $\softzetasym^{\alttemp, \star}$ via $\nsoftopt$ applications of $\softdistbellmanopt[\alttemp]$ (control).
Second, extract the mean: $\hat{q}^\star_\alttemp \approx q^\star_\alttemp$.
Finally, apply $\softdistbellman{\pi}$ $\nsofteval$ times, with $\pi = {\boltzmann{\temp}\hat{q}^\star_\alttemp}$ (evaluation).
If $\temp\ll1$, $\alttemp = \temp^2$, for example, and $\nsoftopt,\nsofteval \gg 1$, then the resulting return distribution is as desired.
This ensures convergence, (reference)-optimality, and interpretability of the final iterate.
\subsection{Numerical Demonstration}
\label{s:dsql:demo}
\label{sec: num dem return dist}
\begin{wrapfigure}{r}{0.50\textwidth}
\vspace{-0.5cm}
\begin{center}
    \includegraphics[width=\linewidth]{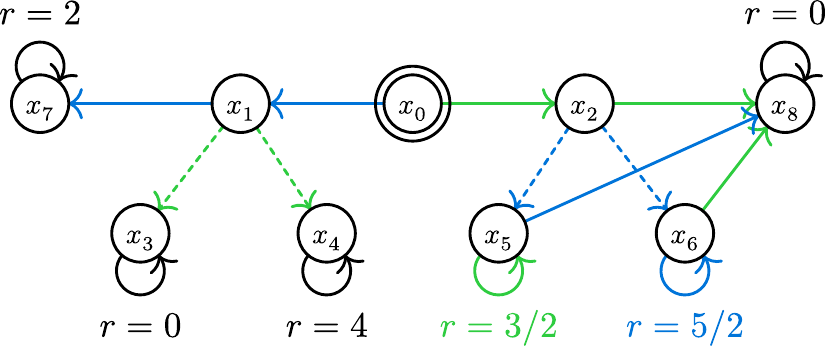}
\end{center}
\caption{An illustrative MDP.}
\label{f:dsql-vanishing:mdp}
\end{wrapfigure}
Here we validate that $\softzetasym^{\temp,\alttemp}$ approximates $\zeta^{\pistarref}$.
We consider the MDP given in Figure \ref{f:dsql-vanishing:mdp}.
Arrow colors correspond to different actions.
Dashed lines represent transitions that occur with probability $1/2$.
In this MDP, different optimal policies have distinct return distributions.
From $x_1$, the blue action yields return of $2\gamma(1-\gamma)^{-1}$, while the green action achieves return $4\gamma(1-\gamma)^{-1}\mathsf{Bernoulli}(1/2)$. 
In Figures \ref{f:dsql-vanishing:64} and \ref{f:dsql-vanishing:32}, we compute estimates $\hat{\zeta}^{\temp, \star}\approx\softzetasym^{\temp,\star}$ and $\hat{\zeta}^{\temp,\alttemp}\approx\softzetasym^{\temp,\alttemp}$ by (soft) distributional dynamic programming using 64-bit precision and 32-bit precision respectively.
32-bit precision is the default in many scientific computing libraries, such as Jax \citep{jax2018github}.
Here $\gamma=1/2$, $\piref_x = \mathcal{U}(\actionspace)$ for all $x \in \statespace$, and $\alttemp=\temp^2$.
We consider $\temp \in \{10^{-(2m+1) } : m = 0, 1, 2, 3, 4\}$.
Our simulation is a practical implementation of Theorem \ref{thm:soft-bellman:control:decoupled}.
\begin{figure}[h]
    \includegraphics[width=\linewidth]{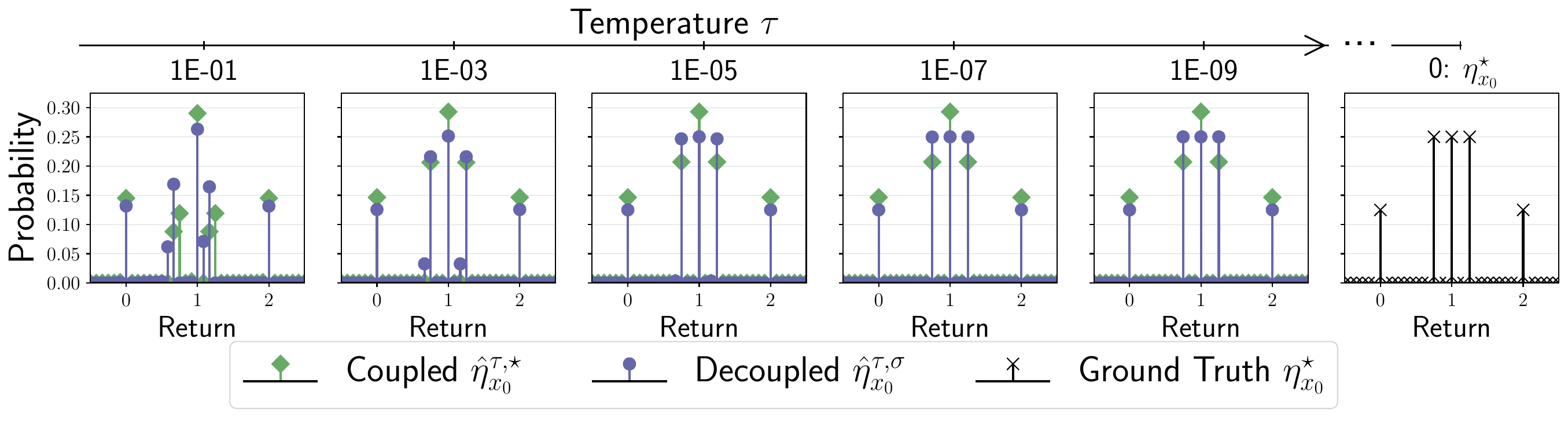}
    \caption{%
        Estimates of return distributions via soft distributional dynamic programming---$\hat{\eta}^{\temp,\alttemp}$ using the temperature-decoupling \ifgambit gambit\else mechanism\fi\ and $\hat{\eta}^{\temp,\star}$ without---as $\temp\to 0$.
        As the temperature vanishes, $\eta^{\temp,\alttemp}$ recovers the return distribution of $\pistarref$, shown on the right.
    }
    \label{f:dsql-vanishing:64}
\end{figure}
First, we approximate $\nsoftopt = 1000$ iterative applications of our soft Bellman optimality operator at $\temp$ (control).
Then, we extract $\hat{q}^\star_\temp$, an approximation of $q^\star_\temp$, and construct two policies:  the BG policy at $\temp$ and the BG policy at $\temp^{1/2}$, both with potential  $\hat{q}^\star_\temp$.
Next we approximate $\nsofteval = 1000$ iterative applications of our soft Bellman operator (policy evaluation) at temperature $\temp$ with the first policy and at temperature $\temp^{1/2}$ with the second policy.
These yield approximations of $\softzetasym^{\temp,\star}$ and $\softzetasym^{\temp,\alttemp}$, respectively.
\begin{figure}[h]
    \includegraphics[width=\linewidth]{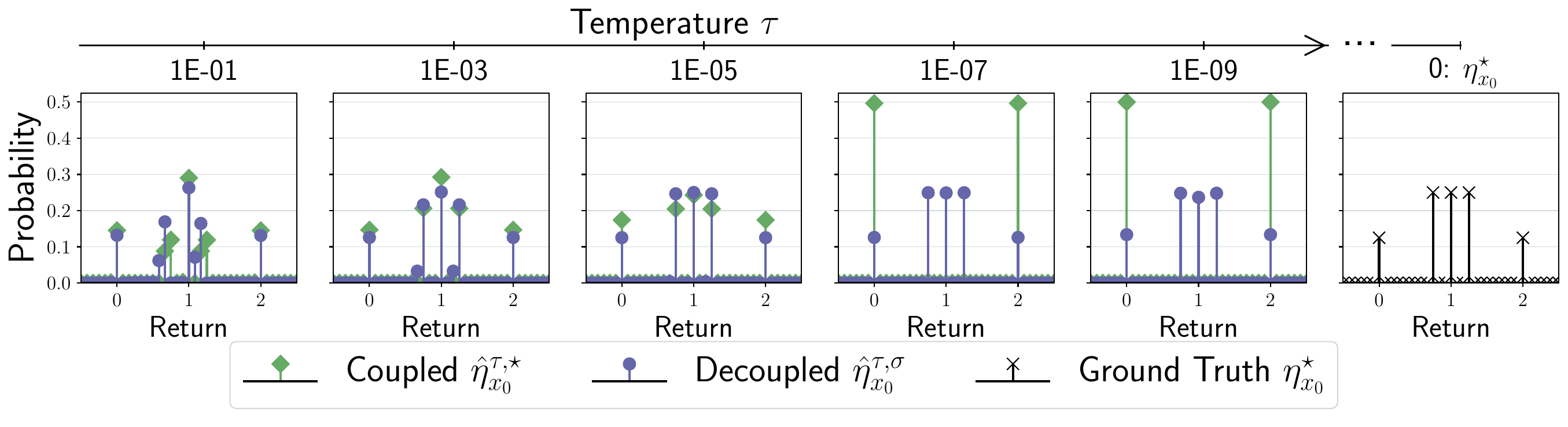}
    \caption{Return distribution estimation with vanishing temperature using soft distributional dynamic programming, with 32-bit floating point precision.}
    \label{f:dsql-vanishing:32}
\end{figure}
Figures \ref{f:dsql-vanishing:64} and \ref{f:dsql-vanishing:32} depict the policy-averaged return distributions  $\hat{\eta}^{\temp,\star}_{x_0}$ and  $\hat{\eta}^{\temp,\alttemp}_{x_0}$ compared to the baseline $\eta^\star_{x_0} := \proj{\R}_\#(\zeta^\star_{x_0, \dummy}\otimes\pistarref_{x_0})$.
The iterates are approximated via categorical representations \citep{bellemare2017distributional,rowland2018analysis} supported on 121 uniformly-spaced atoms on $[-2, 8]$, and MMD projections \citep{wiltzer2024foundations} with the energy distance kernel $\mathcal{E}_{3/2}$.
In both figures, we see that the sequence of temperature-decoupled return distribution estimates approximate the return distribution associated to $\pistarref$ (right).
Return distributions estimates of $\softzetasym^{\temp,\star}$ also converge to those of optimal policies, as predicted by Theorem \ref{thm:soft-bellman:control:coupled}, but we find reach \emph{different} return distributions in each case.
While the temperature-decoupling \ifgambit gambit\else mechanism\fi\ is not impervious to precision issues, it stabilizes BG policy estimation.

\section{Related Work}
Entropy regularization in RL was introduced by \citet{ziebart2008maximum} for \emph{inverse} RL, where it is necessary to disambiguate optimal policies and identify the most likely reward function to explain demonstrated behavior.
ERL with $\piref$ as the uniform policy---termed \emph{maximum entropy} or \emph{MaxEnt} RL, has been highly influential in  deep reinforcement learning.
Heuristically, MaxEnt RL encourages policies to be more uniform, thereby enhancing exploration, sample-efficiency, behavioral diversity \citep{o2016combining,haarnoja2017reinforcement,garg2023extreme}, as well as robustness \citep{fox2015taming,ahmed2019understanding,eysenbach2019if,eysenbach2021maximum}.
Heuristic approaches to adaptive temperature schemes in deep MaxEnt RL have been effective in practice \citep{haarnoja2018soft,xu2021target}.
Policy optimization in MaxEnt RL has been shown to be equivalent to a form of inference, conditional on a notion of behavioral optimality, in a certain graphical model \citep{levine2018reinforcement,fellows2020virelvariationalinferenceframework}, and further characterizations of MaxEnt RL have lead to principled algorithms for efficient exploration \citep{o2020making,tarbouriech2023probabilistic}. 
Alternative forms of regularized RL objectives and optimizers have been proposed and analyzed \citep{littman1996generalized,peters2010relative,schulman2015trust,asadi2017alternativesoftmaxoperatorreinforcement,song2019revisitingsoftmaxbellmanoperator,Fox2019EQL}.

Policy optimization algorithms for entropy-regularization in general are presented and analyzed by \citet{neu2017unified}---these methods apply to tabular MDPs and fixed nonzero temperature.
\citet{mei2020global} provide improved convergence rates for entropy-regularized policy optimization.
They also derive convergence results in the vanishing temperature limit, but only in the bandit setting.
Exceptionally, \citet{leahy2022convergence}, based on the work of \citet{agazzi2020global}, studies global convergence of policy gradient methods in continuous entropy-regularized MDPs, for fixed and vanishing temperature, with neural network policies via mean-field analysis.
However, their analysis requires an extra regularization term to a distribution over neurons, precluding convergence to an optimum of RL.
To the best of our knowledge, our work is the first to introduce a convergent policy optimization scheme for general MDPs in the vanishing temperature limit.

Entropy regularization in DRL is largely unexplored. 
\citet{hu2021countbased} experimented with an adaptation of Rainbow \citep{hessel2018rainbow} to MaxEntRL, but without analysis or formalism.
The concurrent work of \citet{ma2025dsac} also introduced soft distributional Bellman operators, but did not study vanishing temperature limits, and did not establish convergence rates for iterates of $\softdistbellmanopt$ even for fixed $\temp$.
Moreover, the work of \citet{ma2025dsac} established convergence only in the case of discrete $\actionspace$, and only for a uniform reference policy.
Works have investigated the challenges of estimating optimal return distributions \citep{bellemare2017distributional,wiltzer2024action}, and more generally, the influence of particular tractable distribution representations on learning dynamics and fixed point accuracy \citep{wiltzer2022distributional,wu2023distributional,wiltzer2024foundations,alhosh2025tractable}.
In \citep{bellemare2023distributional}, the authors show that distributional analogues of $\bellmanopt$ produce iterates that converge when there is a unique (deterministic) optimal policy.
The interplay between policy optimization stability and return distributions was studied in \citet{rahn2023policy}.
Their empirical study found that distributions of returns following stochastic policy gradient updates tend to have long left tails, and called for methods to guide policies into smoother regions (``quiet'' neighborhoods) of the \emph{return landscape}, the manifold of policy returns across parameters.
This study focused primarily on deterministic policy gradient methods.

\section{Discussion}\label{s:conclusion}
In this work, we have investigated policy and return distribution convergence as the temperature vanishes in ERL.
Our findings motivate iterative schemes for achieving convergence results beyond expected returns.
However, they come with several limitations.
In particular, while we have established policy convergence via the temperature-decoupling \ifgambit gambit\else mechanism\fi, this convergence qualitative.
As a consequence, our ability to derive approximation algorithms for $\zeta^{\pi}$ with $\pi = \pistarref$ is limited; it is a priori unclear which temperatures are required for $\zeta^{\temp,\alttemp}$ to be an $\epsilon$-approximation of $\zeta^{\pi}$ with $\pi = \pistarref$ in $\muwass{p}{p}$ and, therefore, to deploy for iterative applications of $\softdistbellmanopt$ or $\softdistbellman{\pi}$ with $\pi = \boltzmann{\temp}q_\alttemp^\star$.
At the moment, however, our results ensure that by progressively annealing $\temp$, the scheme discussed in Theorem \ref{thm:soft-bellman:control:decoupled} will approach $\zeta^{\pi}$ with $\pi = \pistarref$.
Nevertheless, quantifying Theorem \ref{thm:decoupled:zeta-convergence} is an exciting direction for future work.
Another exciting direction for future work is to try to incorporate the temperature-decoupling \ifgambit gambit\else mechanism\fi\ into the many algorithms in ERL/RL.

\section*{Acknowledgments and Disclosure of Funding}
The authors wish to thank Wesley Chung, Mark Rowland, Jesse Farebrother, Arnav Kumar Jain, Siddarth Venkatraman, Athanasios Vasileiadis, Aditya Mahajan, and Doina Precup for helpful comments and discussions.
HW was supported by the National Sciences and Engineering Research Council of Canada (NSERC) and the Fonds de Recherche du Qu\'ebec. MGB was supported by the Canada CIFAR AI Chair program and NSERC.
This work was supported in part by DARPA HR0011-23-9-0050.

\bibliographystyle{abbrv}
\bibliography{sources.bib}

\newpage
\appendix
\section{Entropy-Regularized RL in Continuous MDPs}
\label{app:maxent-continuous}

Here we prove Theorem \ref{thm:character of opt max ent policy} as well as a collection of supporting and related results that generalize well-known results in tabular MDPs.

We start with a characterization the geometry of the space of occupancy measures. 
The following result extends the well-known counterpart in tabular MDPs \citep{feinberg2012handbook,syed2008apprenticeship,dadashi2019value} to continuous MDPs. 
While certain parts of this result are proved by \citet{hernandez2012discrete}, not all connections are made, which we state here for the first time.

\begin{theorem}
\label{thm:occ-geometry}
    Let $\occspace(\occxsym_0) = \{\occxa\,:\,\pi\in \kernspace{\statespace}{\actionspace}\}$ the space of all occupancy measures under the initial state distribution $\occxsym_0\in\probset{\statespace}$. 
    Then $\occspace(\occxsym_0)$ is equivalent to the space of all $\occxasym\in\probset{\statespace\times\actionspace}$ that satisfy
    \begin{equation}\label{eq:bellman-flow}
        \proj{\statespace}_\#\occxasym(\setfont{E})
        = (1-\gamma)\occxsym_0(\setfont{E})
        + \gamma\int P_{x, a}(\setfont{E})\,\dif\occxasym(x, a) \quad\forall \setfont{E}\subset\statespace\ \text{Borel}.
    \end{equation}
    The space $\occspace(\nu_0)$ is convex, it is closed under setwise convergence.
\end{theorem}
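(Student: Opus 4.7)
The plan is to establish the three assertions in order. The Bellman-flow characterization is the main technical step; once in hand, convexity and setwise closure follow almost directly from the linearity of~(\ref{eq:bellman-flow}) and from passing to limits in it.

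For the forward inclusion of the characterization, take $\occxa = (1-\gamma)\sum_{t\geq 0}\gamma^t\occxa_t \in \occspace(\occxsym_0)$. Using $\ppixa_{x,a}=\pi_\dummy\otimes P_{x,a}$ and $\occxa_{t+1}=\ppixa\occxa_t$, one has $\proj{\statespace}_\#\occxa_{t+1}(\setfont{E}) = \int P_{x,a}(\setfont{E})\,\dif\occxa_t(x,a)$; multiplying by $(1-\gamma)\gamma^{t+1}$, summing in $t$, re-indexing by one, and using $\proj{\statespace}_\#\occxa_0=\occxsym_0$ produces exactly~(\ref{eq:bellman-flow}). For the reverse inclusion, given $\occxasym$ satisfying~(\ref{eq:bellman-flow}), I would disintegrate $\occxasym = \pi^{\occxasym}_\dummy\otimes\proj{\statespace}_\#\occxasym$ and take $\pi=\pi^{\occxasym}$. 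Let $\tilde\mu$ denote the occupancy measure of $\pi$ from $\occxsym_0$. Both $\tilde\mu$ and $\occxasym$ disintegrate over the \emph{same} conditional $\pi$, so it suffices to show that their state marginals agree. Each of these marginals is a fixed point of the affine operator
\[
\Phi(\nu) := (1-\gamma)\occxsym_0 + \gamma\,\ppix\nu,
\]
and $\Phi$ is a $\gamma$-contraction in total variation on finite signed measures, since $\|\ppix\sigma\|_{\mathrm{TV}}\leq\|\sigma\|_{\mathrm{TV}}$ for every signed $\sigma$. Uniqueness of the fixed point then forces the marginals to coincide, whence $\occxasym = \tilde\mu \in \occspace(\occxsym_0)$.

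Convexity is immediate: a convex combination of two solutions to~(\ref{eq:bellman-flow}) is again a probability measure on $\statespace\times\actionspace$ and still satisfies~(\ref{eq:bellman-flow}) by linearity in $\occxasym$. For setwise closure, take $\mu_n\to\mu$ setwise with each $\mu_n\in\occspace(\occxsym_0)$. The Vitali--Hahn--Saks theorem guarantees that $\mu$ is countably additive, and testing on $\statespace\times\actionspace$ shows it is a probability measure. The left-hand side $\proj{\statespace}_\#\mu_n(\setfont{E}) = \mu_n(\setfont{E}\times\actionspace)$ converges by definition of setwise convergence. For the right-hand side, $(x,a)\mapsto P_{x,a}(\setfont{E})$ is bounded and measurable, and any bounded measurable function is a uniform limit of simple functions, so setwise convergence delivers $\int P_{x,a}(\setfont{E})\,\dif\mu_n(x,a) \to \int P_{x,a}(\setfont{E})\,\dif\mu(x,a)$. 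Hence $\mu$ satisfies~(\ref{eq:bellman-flow}).

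The main obstacle lies in the reverse inclusion of the characterization: one must verify that the candidate policy $\pi = \pi^{\occxasym}$ genuinely realizes $\occxasym$ as its occupancy measure, and not merely some other solution of~(\ref{eq:bellman-flow}). Unlike the tabular setting, where finite-dimensional linear algebra settles the matter, the continuous setting demands the TV-contractivity of $\Phi$ on signed measures to pin down the state marginal, and care with disintegrations and push-forwards (particularly that $\tilde\mu$ shares the conditional $\pi$ built from $\occxasym$) is the finicky part of the argument.
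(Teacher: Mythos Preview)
Your proposal is correct, and your treatment of the forward inclusion, convexity, and setwise closure essentially matches the paper's. The genuine difference is in the reverse inclusion $\bellmanflowspace(\occxsym_0)\subset\occspace(\occxsym_0)$. The paper disintegrates $\occxasym = \pi^\occxasym_\dummy\otimes\nu^\occxasym$ and then \emph{iteratively expands} the Bellman-flow constraint: testing against a bounded measurable $\phi_0$, it writes $\int\phi_0\,\dif\nu^\occxasym$ via~(\ref{eq:bellman-flow}), defines $\phi_{k+1}(x)=\int\phi_k\,\dif\ppix[\pi^\occxasym]_x$, and unrolls $n$ times to obtain $\int\phi_0\,\dif\nu^\occxasym = (1-\gamma)\sum_{k=0}^n\gamma^k\int\phi_0\,\dif\occx[\pi^\occxasym]_k + O(\gamma^n)$, concluding $\nu^\occxasym=\occx[\pi^\occxasym]$ in the limit. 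You instead recast both state marginals as fixed points of the affine map $\Phi(\nu)=(1-\gamma)\occxsym_0+\gamma\ppix\nu$ and invoke its $\gamma$-contractivity in total variation on the Banach space of finite signed measures. Your route is shorter and conceptually cleaner, trading an explicit telescoping computation for a one-line uniqueness argument; the paper's route is more hands-on and makes the series representation of the occupancy measure visible. Both rely on the same ``finicky'' fact you flag---that the occupancy measure of $\pi$ disintegrates with conditional $\pi$---which the paper isolates as a separate proposition before the main proof. Your addition of Vitali--Hahn--Saks in the closure step is a nice touch that the paper leaves implicit.
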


Before proceeding with the proof of Theorem \ref{thm:occ-geometry}, we recall the \emph{state occupancy measures} $\occx$, given by
\[
    \occx := (1-\gamma) \sum_{t\geq 0}\gamma^t\occx_t,
\]
where $(\occx_t)_{t\geq 1}$ is the sequence of laws generated by $\ppix$ starting at $\nu_0$.

\begin{proposition}
    \label{prop: compatibility}
    Let $\occx_t$ and $\occxa_t$, for $t\geq 1$ denote the laws generated by $\ppix$ and $\ppixa$ starting at $\nu_0$ and $\occxa_0 = \pi_\dummy \otimes \occxsym_0$.
    Then $\occxa_t = \pi_\dummy \otimes \occx_t$, for all $t \geq 1$.
    Hence, given $\pi$ and $\occxsym_0$, the state marginal of the associated occupancy measure $\occxa$ is the associated state occupancy measure $\occx$.
\end{proposition}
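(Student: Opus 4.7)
The plan is a straightforward induction on $t$, exploiting the fact that the state-action transition kernel factorizes as $\ppixa_{x,a} = \pi_\dummy \otimes P_{x,a}$. The base case is immediate: by assumption $\occxa_0 := \pi_\dummy \otimes \occxsym_0 = \pi_\dummy \otimes \occx_0$, since $\occx_0 = \occxsym_0$ by definition.

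For the inductive step, I would assume $\occxa_t = \pi_\dummy \otimes \occx_t$ and test $\occxa_{t+1} = \ppixa \occxa_t$ against an arbitrary bounded measurable $\phi$ on $\statespace \times \actionspace$. Unfolding the generalized product in $\ppixa_{x,a}$ and then substituting the inductive hypothesis yields
\begin{align*}
\int \phi\,\dif\occxa_{t+1}
= \int\!\!\int\!\!\int\!\!\int \phi(x', a')\,\dif\pi_{x'}(a')\,\dif P_{x,a}(x')\,\dif\pi_x(a)\,\dif\occx_t(x).
\end{align*}
Fubini then lets me pull the $\pi_{x'}$-integral out and recognize the three inner integrals against $(P_{x,a}, \pi_x, \occx_t)$ as $\int(\cdot)\,\dif(\ppix \occx_t) = \int(\cdot)\,\dif\occx_{t+1}$, via the definition $\ppix_x = \proj{\statespace}_\#(P_{x,\dummy} \otimes \pi_x)$. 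This collapses the expression to $\int\bigl[\int \phi(x', a')\,\dif\pi_{x'}(a')\bigr]\,\dif\occx_{t+1}(x') = \int \phi\,\dif(\pi_\dummy \otimes \occx_{t+1})$, closing the induction since $\phi$ was arbitrary.

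The final claim about the occupancy measure follows by linearity and monotone convergence: push-forward by $\proj{\statespace}$ commutes with the convex combination $\occxa = (1-\gamma)\sum_{t\geq 0}\gamma^t \occxa_t$, so $\proj{\statespace}_\# \occxa = (1-\gamma)\sum_{t\geq 0}\gamma^t \proj{\statespace}_\# \occxa_t = (1-\gamma)\sum_{t\geq 0}\gamma^t \occx_t = \occx$. Uniqueness of the disintegration then gives $\occxa = \pi_\dummy \otimes \occx$.

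There is no genuine obstacle here; the proof is essentially bookkeeping with the generalized-product and push-forward notation. The only care needed is verifying that Fubini applies at each swap of integrals, which is immediate because all measures involved are probability measures and $\phi$ may be taken bounded (then extended to the relevant function classes by standard approximation). The potential pitfall in the continuous setting -- namely, that joint distribution convergence need not imply conditional distribution convergence -- does not arise here because I am working with exact equalities of measures at every step, not limits.
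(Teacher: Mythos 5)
Your proposal is correct and follows essentially the same approach as the paper: the paper splits the argument into a lemma identifying the conditional probabilities of each $\occxa_t$ as $\pi_\dummy$ and a separate computation showing the state marginal of $\occxa_1$ is $\occx_1$, whereas you fold both into one induction showing $\occxa_t = \pi_\dummy\otimes\occx_t$ directly; the underlying Fubini-type manipulations are identical. One small remark on the closing step: once you have $\occxa_t = \pi_\dummy\otimes\occx_t$ for all $t$, the equality $\occxa = \pi_\dummy\otimes\occx$ follows directly by linearity, exactly because $\alpha(\pi_\dummy\otimes\nu_1) + \beta(\pi_\dummy\otimes\nu_2) = \pi_\dummy\otimes(\alpha\nu_1 + \beta\nu_2)$, so the appeal to uniqueness of disintegration is an unnecessary detour -- the marginal identity alone does not fix the conditional, but the direct factoring does, which is how the paper finishes.
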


The proof of this proposition will use the following lemma.

\begin{lemma}
    \label{lem: conditionals of SA process}
    Under the hypotheses of Proposition \ref{prop: compatibility}, for every $t \geq 1$, the conditional probabilities of $\occxa_t$ with respect to its state marginal are $\pi_\dummy$.
\end{lemma}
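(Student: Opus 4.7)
The plan is to establish the stronger pointwise disintegration $\occxa_{t+1} = \pi_\dummy \otimes \proj{\statespace}_\#\occxa_{t+1}$ for every $t \geq 0$, from which the lemma follows for all $t \geq 1$ by the uniqueness of disintegration. Notably, this is really a single-step fact about the structure of $\ppixa$ and does not require induction on $t$: the recurrence $\occxa_{t+1} = \ppixa\occxa_t$ already guarantees that the new action was sampled from $\pi$ conditional on the new state, regardless of the joint law of the previous pair.

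I would unfold the recurrence on a measurable rectangle $E \times F \subset \statespace \times \actionspace$. Using $\ppixa_{x, a} = \pi_\dummy \otimes P_{x, a}$, one computes
\begin{align*}
    \occxa_{t+1}(E \times F)
    = \int \ppixa_{x, a}(E \times F)\,\dif\occxa_t(x, a)
    = \int \int_E \pi_{x'}(F)\,\dif P_{x, a}(x')\,\dif\occxa_t(x, a).
\end{align*}
Taking $F = \actionspace$ identifies the state marginal $\proj{\statespace}_\# \occxa_{t+1}$ as the measure $E \mapsto \int P_{x, a}(E)\,\dif\occxa_t(x, a)$ on $\statespace$. Substituting back into the display and reorganizing by Fubini then yields
\begin{align*}
    \occxa_{t+1}(E \times F)
    = \int_E \pi_{x'}(F)\,\dif\proj{\statespace}_\#\occxa_{t+1}(x')
    = \bigl(\pi_\dummy \otimes \proj{\statespace}_\#\occxa_{t+1}\bigr)(E \times F).
\end{align*}
A standard monotone class (or $\pi$--$\lambda$) argument extends this equality from rectangles to all Borel subsets of $\statespace \times \actionspace$, and uniqueness of disintegration then identifies $\pi_\dummy$ as the conditional of $\occxa_{t+1}$ on its state marginal.

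The main obstacle is largely bookkeeping: verifying joint measurability of $(x, a, x') \mapsto \pi_{x'}(F)$ and legitimately applying Fubini, both of which are handled by the measurability built into the definition of $\kernspace{\statespace}{\actionspace}$. The other subtle point to flag is the uniqueness of disintegration on Borel subsets of $\R^m$, which holds in our setting by standard regularity results and should be cited rather than reproved.
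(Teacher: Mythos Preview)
Your proposal is correct and follows essentially the same approach as the paper: both arguments exploit the factorization $\ppixa_{x,a} = \pi_\dummy \otimes P_{x,a}$ to identify the state marginal of $\occxa_{t+1}$ and then show that $\occxa_{t+1} = \pi_\dummy \otimes \proj{\statespace}_\#\occxa_{t+1}$. The only cosmetic difference is that the paper tests against arbitrary bounded measurable $\varphi$ directly, whereas you test on rectangles and invoke a monotone class argument; both are standard and equivalent.
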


\begin{proof}
    It suffices to prove that the conditional probabilities of $\occxa_1$ are $\pi_\dummy$.
    Let $\occxsym_1$ denote the state marginal of $\occxa_1$.
    By definition,
    \[
        \int \psi(x') \, \dif \occxsym_1(x') = \int \psi(x') \, \dif \occxa_1(x',a') =  \int \bigg[ \int \psi(x') \, \dif P_{x, a}(x') \bigg]\, \dif \occxa_0(x, a).
    \]
    Thus, for any $\varphi \in M_b(\statespace\times\actionspace)$, with $\psi(x') := \int \varphi(x',a') \, \dif \pi_{x'}(a')$, observe that
    \begin{align*}
        \int \bigg[\int \varphi(x',a') \, \dif \pi_{x'}(a') \bigg]\,\dif \occxsym_1(x') &= \int \psi(x') \,\dif \occxsym_1(x')
        \\
        &=  \int \bigg[ \int \psi(x') \, \dif P_{x, a}(x') \bigg]\, \dif \occxa_0(x, a)
        \\
        &= \int \bigg[ \int \bigg[ \int \varphi(x',a') \, \dif \pi_{x'}(a') \bigg] \, \dif P_{x, a}(x') \bigg]\, \dif \occxa_0(x, a)
        \\
        &= \int \bigg[ \int  \varphi(x',a') \, \dif \ppixa_{x, a}(x') \bigg]\, \dif \occxa_0(x, a)
        \\
        &= \int  \varphi(x,a) \, \dif \occxa_1(x, a).
    \end{align*}
    So the conditional probabilities of $\occxa_1$ with respect to $\occxsym_1$ are $\pi_x$, as desired.
\end{proof}

\begin{proof}[Proof of Proposition \ref{prop: compatibility}]
    By Lemma \ref{lem: conditionals of SA process}, it suffices to show that the state marginal of $\occxa_1$ is $\occx_1$.
    This holds:
        \begin{align*}
            \int \psi(x') \, \dif \occx_1(x') &= \int \bigg[ \int \psi(x') \, \dif \ppix_{x}(x') \bigg] \, \dif \occxsym_0(x)
            \\
            &= \int \bigg[ \int \bigg[ \int \psi(x') \,\dif P_{x, a}(x') \bigg] \, \dif \pi_{x}(a) \bigg]\, \dif \occxsym_0(x)
            \\
            &= \int \bigg[  \int \psi(x') \,\dif P_{x, a}(x') \bigg]\, \dif (\pi_x \otimes \occxsym_0)(x, a)
            \\
            &= \int \bigg[  \int \psi(x') \,\dif P_{x, a}(x') \bigg]\, \dif \occxa_0(x, a).
        \end{align*}

    By this computation and Lemma \ref{lem: conditionals of SA process} applied successively to each pair $(\occxa_{t+1}, \occxa_t)$ for every $t \geq 1$, we deduce that $\occxa_t = \pi_\dummy \otimes \occx_t$, for all $t \geq 1$.
    Finally, by the linearity of the integral, we conclude.
    Indeed,
    \begin{align*}
        \occxa :=  (1-\gamma)\sum_{t\geq 0}\gamma^t \occxa_t =  (1-\gamma)\sum_{t\geq 0}\gamma^t (\pi_\dummy \otimes \occx_t) = \pi_\dummy \otimes (1-\gamma) \sum_{t\geq 0}\gamma^t\occx_t =: 
         \pi_\dummy \otimes \occx.
    \end{align*}
\end{proof}

\begin{proof}[Proof of Theorem \ref{thm:occ-geometry}]
    We prove this theorem in three steps.
    
    \smallskip
    \noindent{{\bf Step 1: $\occspace(\nu_0) = \bellmanflowspace(\nu_0)$.}}
    First, recall that $\proj{\statespace}_\#\occxa = \occx$ for any policy $\pi$, by Proposition \ref{prop: compatibility}.
    Thus, we have that for any $\pi$ and any Borel $\setfont{E}\subset\statespace$,
    \begin{align*}
        \proj{\statespace}_\#\occxa(\setfont{E})
        = \occx(\setfont{E})
        &= (1-\gamma)\occxsym_0(\setfont{E}) + \gamma(1-\gamma)\sum_{t\geq 0}\gamma^t\occx_{t+1}(\setfont{E})\\
        &= (1-\gamma)\occxsym_0(\setfont{E}) + \gamma(1-\gamma)\sum_{t\geq 0}\gamma^t\int P_{x, a}(\setfont{E})\,\dif\occxa_t(x, a)\\
        &= (1-\gamma)\occxsym_0(\setfont{E}) + \gamma\int P_{x, a}(\setfont{E})\,\dif\occxa(x, a).
    \end{align*}
    This shows that $\occspace(\nu_0)\subset\bellmanflowspace(\nu_0)$. It remains to show that $\bellmanflowspace(\nu_0)\subset\occspace(\nu_0)$.
    Let $\occxasym\in\bellmanflowspace(\nu_0)$, and let $\pi^\occxasym$ denote its conditional action probabilities with respect to its state marginal $\occxsym^\occxasym$---that is, $\mu = \pi^\occxasym_{\dummy} \otimes \nu^\mu$. 
    Moreover, let $\phi_0$ be any bounded measurable function. By the definition of $P$, we note that \eqref{eq:bellman-flow} can be written as
    \begin{align*}
        \int\phi_0(x_0)\,\dif\occxsym^\occxasym(x_0)
        = (1-\gamma)\int\phi_0(x_0)\,\dif\occxsym_0(x_0)
        + \gamma\int \bigg[\int \phi_0(x_1)\,\dif\ppix[\pi^\occxasym]_{x_0}(x_1)\bigg]\, \dif\occxsym^\occxasym(x_0).
    \end{align*}

    Defining $\phi_1(x) = \int_\statespace\phi_0(x')\,\dif\ppix[\pi^\occxasym]_{x}(x')$, the rightmost term $\int_\statespace\phi_1(x_0)\,\dif\occxsym^\occxasym(x_0)$ can be again expanded via \eqref{eq:bellman-flow},
    \begin{align*}
        \int\phi_0(x_0)\,\dif\occxsym^\occxasym(x_0)
        = (1&-\gamma)\int(\phi_0(x_0) + \gamma\phi_1(x_0))\,\dif\occxsym_0(x_0)\\
        &+ \gamma^2\int \bigg[ \int\phi_1(x_0)\,\dif\ppix[\pi^\occxasym]_{x_0}(x_1) \bigg]\,\dif\occxsym^\occxasym(x_0).
    \end{align*}
    Continuing, we define $\phi_{n+1}(x) = \int_\statespace\phi_{n}(x')\,\dif\ppix[\pi^\occxasym]_x(x')$, which is bounded and measurable for each $n\in\N$, yielding
    \begin{align*}
        \int\phi_0(x_0)\,\dif\occxsym^\occxasym(x_0)
        &= \underbrace{(1-\gamma)\int\sum_{k=0}^n\gamma^k\phi_k(x_0)\,\dif\occxsym_0(x_0)}_{\mathrm{I}_n}
        \\
        &\hspace{1.0cm}+\underbrace{\gamma^{n-1}\int\bigg[ \int\phi_n(x_0)\,\dif\ppix[\pi^\occxasym]_{x_0}(x_1)\bigg] \,\dif\occxsym^\occxasym(x_0)}_{\mathrm{II}_n}.
    \end{align*}
    By the definition of $\phi_n$, we have that
    \begin{align*}
        \mathrm{I}_n
        &= (1-\gamma)\int\phi_0(x_0)\,\dif\occxsym_0(x_0) + (1-\gamma)\gamma\int\bigg[\int \phi_0(x_1)\,\dif\ppix[\pi^\occxasym]_{x_0}(x_1)\bigg]\,\dif\occxsym_0(x_0) + \dots\\
        &= (1-\gamma)\int\phi_0(x_0)\,\dif\occxsym_0(x_0) + (1-\gamma)\gamma\int\phi_0(x_0)\,\dif\occx[\pi^\occxasym]_1(x_0) + \dots\\
        &= \int\phi_0(x_0)(1-\gamma)\sum_{k=0}^n\gamma^k\dif\occx[\pi^\occxasym]_k(x_0).
    \end{align*}
    Moreover, by the boundedness of $\phi_n$, we deduce that $\mathrm{II}_n\to 0$.
    Substituting, we have
    \begin{align*}
        \int\phi_0(x_0)\,\dif\occxsym^\occxasym(x_0)
        &= \lim_{n\to\infty}\mathrm{I}_n + \lim_{n\to\infty}\mathrm{II}_n\\
        &= \int\phi_0(x_0)\lim_{n\to\infty}(1-\gamma)\sum_{k=0}^n\gamma^k\,\dif\occx[\pi^\occxasym](x_0)\\
        &= \int\phi_0(x_0)\dif\occx[\pi^\occxasym](x_0).
    \end{align*}
    Since $\phi_0$ was an arbitrary bounded and measurable function, it follows that $\occxsym^\occxasym = \occx[\pi^\occxasym]$. 
    Thus, $\mu = \pi_\dummy\otimes\occxsym^\occxasym = \occxa[\pi^\occxasym]$---the occupancy measure for the policy $\pi^\occxasym$. Consequently, any $\mu\in\bellmanflowspace(\occxsym_0)$ is a member of $\occspace(\occxsym_0)$.

    \smallskip
    \noindent{{\bf Step 2: $\occspace(\occxsym_0)$ is convex.}} 
    The convexity of $\occspace(\occxsym_0)$ follows immediately from the structure of $\bellmanflowspace(\occxsym_0)$. 
    Consider any $\occxasym_0, \occxasym_1\in\occspace(\occxsym_0)$ any $\alpha\in [0,1]$, and define $\occxasym_\alpha = \alpha\occxasym_0 + (1-\alpha)\occxasym_1$. 
    For any Borel $\setfont{E}\subset\statespace$, we have that
    \begin{align*}
        \proj{\statespace}_\#\occxasym_\alpha(\setfont{E})
        &= \alpha\proj{\statespace}_\#\occxasym_0(\setfont{E}) + (1-\alpha)\proj{\statespace}_\#\occxasym_1(\setfont{E})
    \end{align*}
    Since $\occxasym_0, \occxasym_1\in\bellmanflowspace(\occxsym_0)$, they solve \eqref{eq:bellman-flow}, so we expand the RHS,
    \begin{align*}
        \proj{\statespace}_\#\occxasym_\alpha(\setfont{E})
        &= \alpha(1-\gamma)\occxsym_0(\setfont{E}) + \alpha\gamma\int P_{x, a}(\setfont{E})\,\dif\occxasym_0(x, a)\\
        &\quad + (1-\alpha)(1-\gamma)\occxsym_0(\setfont{E}) + (1-\alpha)\gamma\int P_{x, a}(\setfont{E})\,\dif\occxasym_1(x, a)\\
        &= (1-\gamma)\occxsym_0(\setfont{E}) + \gamma\int P_{x, a}(\setfont{E})(\alpha\,\dif\occxasym_0(x, a) + (1-\alpha)\,\dif\occxasym_1(x, a))\\
        &= (1-\gamma)\occxsym_0(\setfont{E}) + \gamma\int P_{x, a}(\setfont{E})\,\dif\occxasym_\alpha(x, a).
    \end{align*}

    So $\occxasym_\alpha\in\bellmanflowspace(\occxsym_0) = \occspace(\occxsym_0)$, as desired.

    \smallskip
    \noindent{ {\bf Step 3: $\occspace(\occxsym_0)$ is closed under setwise convergence.}}
    Let $(\occxasym_k)_{k\in\N}\subset\bellmanflowspace(\occxsym_0)$ be a sequence that converges setwise to $\occxasym$. 
    Since $(x, a)\mapsto P_{x, a}(\setfont{E})$ is bounded and measurable for any Borel $\setfont{E}\subset\statespace$, 
    \begin{equation}\label{eq:bellman-flow:rhs:setwise}
        \int P_{x, a}(\setfont{E})\,\dif\occxasym_k(x, a)\to\int P_{x, a}(\setfont{E})\,\dif\occxasym(x, a).
    \end{equation}
    Likewise,
    \begin{equation}\label{eq:bellman-flow:lhs:setwise}
        \proj{\statespace}_\#\occxasym_k(\setfont{E})
        = \occxasym_k(\setfont{E}\times\actionspace)
        \to \occxasym(\setfont{E}\times\actionspace)
        = \proj{\statespace}_\#\occxasym(\setfont{E}),
    \end{equation}
    as $\occxasym_k\to\occxasym$ setwise.
    Consequently, we have that
    \begin{align*}
        \proj{\statespace}\occxasym(\setfont{E})
        &= \lim_{k\to\infty}\proj{\statespace}\occxasym(\setfont{E})\\
        &= \lim_{k\to\infty}\left[(1-\gamma)\occxsym_0(\setfont{E}) + \gamma\int P_{x, a}(\setfont{E})\,\dif\occxasym_k(x, a)\right]\\
        &= (1-\gamma)\occxsym_0(\setfont{E}) + \gamma\int P_{x, a}(\setfont{E})\,\dif\occxasym(x, a),
    \end{align*}
    where the first equality follows from \eqref{eq:bellman-flow:lhs:setwise}, the second follows as $\occxasym_k\in\bellmanflowspace(\occxsym_0)$, and the final equality follows from \eqref{eq:bellman-flow:rhs:setwise}.
    Thus, we see that $\mu\in\bellmanflowspace(\occxsym_0) = \occspace(\occxsym_0)$.
\end{proof}

Now we prove Lemma \ref{lem:ent-reg rl is strictly concave}.

\maxentregconvex*
\label{proof: Risconvex}
\begin{proof}
Observe that
\begin{align*}
    \mathcal{R}(\mu) = \kl{\mu}{\bar{\pi}_{\dummy} \otimes \nu^\mu}.
\end{align*}
We prove this in two steps. 
First, for every Borel $f : \statespace \times \actionspace \to [0, \infty)$, we have that
\begin{align*}
    \int f(x,a)\, \frac{\dif \pi^\mu_x}{\dif \piref_x}(a)\, \dif (\piref_\dummy \otimes \nu^\mu)(x,a) &= \int \bigg[ \int f(x,a) \frac{\dif \pi^\mu_x}{\dif \piref_x} \, \dif \piref_x(a) \bigg] \, \dif \nu^\mu(x) \\
    &= \int \bigg[ \int f(x,a) \, \dif \pi^\mu_x(a) \bigg] \, \dif \nu^\mu(x) \\
    &= \int f(x,a)\, \dif (\pi^\mu_\dummy \otimes \nu^\mu)(x,a).
\end{align*}
Hence, $\mu = \pi^\mu_\dummy \otimes \nu^\mu \ll \piref_\dummy \otimes \nu^\mu$ if $\pi^\mu_x \ll \piref_x$ for $\nu^\mu$-almost every $x$, and 
\begin{align*}
    \frac{\dif \mu}{\dif (\piref_\dummy \otimes \nu^\mu)}(x,a) = \frac{\dif \pi^\mu_x}{\dif \piref_x}(a).
\end{align*}
Second, $\mu = \pi^\mu_\dummy \otimes \nu^\mu \ll \piref_\dummy \otimes \nu^\mu$ implies that $\pi^\mu_x \ll \piref_x$ for $\nu^\mu$-almost every $x$.
Indeed, suppose that a set $\setfont{S} \subset \statespace$ exists such that $\nu^\mu(\setfont{S}) > 0$ and for each $x \in \setfont{S}$, we have that
\begin{align*}
    \pi^\mu_x(\setfont{B}_x) > 0 \quad\text{but}\quad \piref(\setfont{B}_x) = 0.
\end{align*}
Let
\begin{align*}
    \setfont{E} := \bigcup_{x \in \setfont{S}} \, \{ x\} \times \setfont{B}_x.
\end{align*}
Then,
\begin{align*}
    (\piref_\dummy \otimes \nu^\mu)(\setfont{E}) = \int_{\setfont{S}} \piref_x(\setfont{B}_x) \, \dif \nu^\mu(x) = 0 \quad\text{and}\quad
    (\pi^\mu_\dummy \otimes \nu^\mu)(\setfont{E}) = \int_{\setfont{S}}  \pi^\mu_x(\setfont{B}_x) \, \dif \nu^\mu(x) > 0.
\end{align*}
This is a contradiction.
And so,
\begin{align*}
    \mathcal{R}(\mu) &=  \int \bigg[ \int \log \bigg(\frac{\dif \pi_x^\mu}{\dif \piref_x}(a) \bigg) \, \dif \pi^\mu_x(a)\bigg]\,\dif\nu^\mu(x) \\
    &=  \int \bigg[ \int \log \bigg(\frac{\dif \mu}{\dif (\piref_\dummy \otimes \nu^\mu)}(x,a) \bigg) \, \dif \pi^\mu_x(a)\bigg]\,\dif\nu^\mu(x) \\
    &=  \int  \log \bigg(\frac{\dif \mu}{\dif (\piref_\dummy \otimes \nu^\mu)}(x,a)\bigg) \,\dif \mu(x,a) \\
    &= \kl{\mu}{\bar{\pi}_{\dummy} \otimes \nu^\mu},
\end{align*}
as desired.

Now recall that
\begin{align*}
    \kl{t \mu_1 + (1-t)\mu_0}{t \mu_1' + (1-t)\mu_0'} \leq t\kl{\mu_1}{\mu_1'} + (1-t)\kl{\mu_0}{\mu_0'}.
\end{align*}
Moreover, note that
\begin{align*}
    \nu^{t \mu_1 + (1-t)\mu_0 } = t\nu^{\mu_1} + (1-t)\nu^{\mu_0}.
\end{align*}
In turn,
\begin{align*}
    \mathcal{R}(t \mu_1 + (1-t)\mu_0) &= \kl{t \mu_1 + (1-t)\mu_0}{\piref_{\dummy} \otimes \nu^{t \mu_1 + (1-t)\mu_0}} \\
    &= \kl{t \mu_1 + (1-t)\mu_0}{\piref_{\dummy} \otimes (t\nu^{ \mu_1} + (1-t)\nu^{\mu_0})}\\
    &= \kl{t \mu_1 + (1-t)\mu_0}{t (\piref_{\dummy} \otimes \nu^{ \mu_1}) + (1-t)(\piref_{\dummy} \otimes \nu^{\mu_0})}\\
    &\leq t \kl{\mu_1 }{\piref_{\dummy} \otimes \nu^{ \mu_1}} + (1-t)\kl{\mu_0}{\piref_{\dummy} \otimes \nu^{\mu_0}}\\
    &= t\mathcal{R}(\mu_1) + (1-t)\mathcal{R}(\mu_0).
\end{align*}
Thus, $\mathcal{R}$ is convex.
In particular, $\mathcal{R}$ is strictly convex as $\mathrm{KL}$ is strictly convex in its first argument.
\end{proof}

With Theorem \ref{thm:occ-geometry} and Lemma \ref{lem:ent-reg rl is strictly concave} in hand, we use the direct method from the Calculus of Variations to prove the well-posedness of $\temp$-ERL, in the tabular setting.

\begin{remark}
    \label{rem:duality of M_b}
    The space $M_b(\statespace \times \actionspace)$ endowed with the supnorm is a Banach space.
    Note that $M_b(\statespace \times \actionspace)^* \cong ba(\statespace \times \actionspace)$,
    where $ba(\statespace \times \actionspace)$ denotes the set of finitely additive set functions on  $\borel{\statespace \times \actionspace}$ equipped with the total variation norm.
    Note that the set of probability measures on $\statespace \times \actionspace$ is a subset of the closed unit ball in $ba(\statespace \times \actionspace)$, which is weak* compact, by Banach--Alaoglu.
    The duality pairing for any $\mu \in \probset{\statespace \times \actionspace}$ and for any $\varphi \in M_b(\statespace \times \actionspace)$ is given by integration: $\langle \mu, \varphi \rangle := \int \varphi \, \dif \mu$.
    In other words, weak* convergence is setwise convergence when $\probset{\statespace \times \actionspace}$ is considered as a subset of the dual of $ba(\statespace \times \actionspace)$.  
\end{remark}

\begin{theorem}
    \label{thm:ent-reg rl soln}
    Suppose that $r \in M_b(\statespace\times\actionspace)$, $\statespace\times\actionspace$ is finite, and let $\occxsym_0 \in \probset{\statespace}$. 
    A $\mu^{\star}_\temp \in \occspace(\occxsym_0)$ that achieves the supremum in \eqref{eqn:ent-rl} exists.
    Moreover, no other occupancy measure does so.
\end{theorem}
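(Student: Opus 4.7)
The plan is to apply the direct method of the calculus of variations, leveraging the structural results already established in the paper. Since $\statespace \times \actionspace$ is finite, $\probset{\statespace \times \actionspace}$ is identified with the standard simplex in $\R^{|\statespace\times\actionspace|}$ and setwise convergence coincides with convergence in this finite-dimensional space, so topological subtleties collapse.

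First, I would verify that the problem is nondegenerate: the sup is finite and $> -\infty$. Since $r \in M_b$ and $\mathcal{R} \geq 0$, we have $\mathcal{J}_\temp \leq \|r\|_{\sup}$. Conversely, taking $\mu = \piref_\dummy \otimes \occx[\piref] \in \occspace(\occxsym_0)$ yields $\mathcal{R}(\mu) = 0$ by the identity $\mathcal{R}(\mu) = \kl{\mu}{\piref_\dummy \otimes \nu^\mu}$ derived in the proof of Lemma~\ref{lem:ent-reg rl is strictly concave}, so $\mathcal{J}_\temp(\mu) \geq -\|r\|_{\sup}$. Next I would take a maximizing sequence $(\mu_k)_{k \in \N} \subset \occspace(\occxsym_0)$ with $\mathcal{J}_\temp(\mu_k) \to \sup_{\occspace(\occxsym_0)} \mathcal{J}_\temp$. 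By Theorem~\ref{thm:occ-geometry}, $\occspace(\occxsym_0)$ is convex and closed under setwise convergence, and in the finite tabular setting it is a closed subset of the compact simplex, hence compact. Passing to a subsequence, $\mu_k \to \mu^\star_\temp \in \occspace(\occxsym_0)$ setwise.

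The main obstacle---and the only substantive analytical step---is establishing upper semicontinuity of $\mathcal{J}_\temp$ under setwise convergence. The term $\mu \mapsto \int r \, \dif \mu$ is continuous since $r$ is bounded and measurable. For the entropic term, I would invoke the representation
\begin{equation*}
    \mathcal{R}(\mu) = \kl{\mu}{\piref_\dummy \otimes \nu^\mu}
\end{equation*}
from the proof of Lemma~\ref{lem:ent-reg rl is strictly concave}. The marginalization map $\mu \mapsto \nu^\mu$ is continuous under setwise convergence (as $\nu^\mu(\setfont{E}) = \mu(\setfont{E} \times \actionspace)$), so $\mu \mapsto \piref_\dummy \otimes \nu^\mu$ is continuous. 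Combined with the joint lower semicontinuity of $\mathrm{KL}$ with respect to setwise convergence (a classical fact, e.g., from Donsker--Varadhan variational characterization), this yields that $\mathcal{R}$ is lower semicontinuous, so $\mathcal{J}_\temp$ is upper semicontinuous. Therefore,
\begin{equation*}
    \mathcal{J}_\temp(\mu^\star_\temp) \geq \limsup_{k \to \infty} \mathcal{J}_\temp(\mu_k) = \sup_{\occspace(\occxsym_0)} \mathcal{J}_\temp,
\end{equation*}
proving existence.

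For uniqueness, suppose $\mu_0, \mu_1 \in \occspace(\occxsym_0)$ both attain the supremum and $\mu_0 \neq \mu_1$. By convexity of $\occspace(\occxsym_0)$ (Theorem~\ref{thm:occ-geometry}), the midpoint $\tfrac{1}{2}(\mu_0 + \mu_1)$ lies in $\occspace(\occxsym_0)$. By the linearity of $\mu \mapsto \int r \, \dif \mu$ and the strict convexity of $\mathcal{R}$ (Lemma~\ref{lem:ent-reg rl is strictly concave}), $\mathcal{J}_\temp$ is strictly concave, so
\begin{equation*}
    \mathcal{J}_\temp\bigl(\tfrac{1}{2}(\mu_0 + \mu_1)\bigr) > \tfrac{1}{2}\mathcal{J}_\temp(\mu_0) + \tfrac{1}{2}\mathcal{J}_\temp(\mu_1) = \sup_{\occspace(\occxsym_0)} \mathcal{J}_\temp,
\end{equation*}
which is a contradiction. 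Hence the maximizer is unique.
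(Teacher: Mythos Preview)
Your proposal is correct and follows essentially the same direct-method approach as the paper: take a maximizing sequence, extract a convergent subsequence, pass to the limit using the representation $\mathcal{R}(\mu) = \kl{\mu}{\piref_\dummy \otimes \nu^\mu}$ and the joint lower semicontinuity of $\mathrm{KL}$, then invoke strict convexity for uniqueness. The only cosmetic difference is that you obtain compactness directly from the finite-dimensional simplex, whereas the paper phrases it via Banach--Alaoglu and separability of $M_b(\statespace\times\actionspace)$; in the tabular setting these are the same thing.
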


\begin{proof}
    Let the supremum in \eqref{eqn:ent-rl} be denoted by $\vartheta^{\star}_\temp$ and $(\mu_k)_{k \in \N}  \subset \occspace(\occxsym_0)$ be such that
    \begin{align*}
        \vartheta^{\star}_\temp - \frac{1}{k} < \mathcal{J}_\temp(\mu_k) \leq \vartheta^{\star}_\temp.
    \end{align*}
    In other words, let $(\mu_k)_{k \in \N} \subset \occspace(\occxsym_0)$ be a maximizing sequence.
    By Remark \ref{rem:duality of M_b}, owing to the fact that $M_b(\statespace\times\actionspace)$ is separable (since $\statespace\times\actionspace$ is finite), let $(\mu_{k_\ell})_{\ell \in \N}$ be a weakly* convergent subsequence, with weak* limit $\mu_\infty$.
    In particular, $\mu_{k_\ell} \to \mu_\infty$ setwise.
    As $\occspace(\occxsym_0)$ is closed under setwise convergence, by Theorem \ref{thm:occ-geometry}, we have that $\mu_\infty \in \occspace(\occxsym_0)$.
    Furthermore, $\piref_\dummy \otimes \nu^{\mu_{k_\ell}} \to \piref_\dummy \otimes \nu^{\mu_\infty}$ setwise as well.
    As setwise convergence implies weak convergence and as the $\kl{\mu}{\mu'}$ is lower-semicontinuous in the pair $(\mu, \mu')$ in the weak topology, we find that
    \begin{align*}
        \vartheta^{\temp, \star} &\leq  \limsup_{\ell \to \infty} \int r \, \dif \mu_{k_\ell} - \temp \liminf_{\ell \to \infty}  \kl{\mu_{k_\ell}}{\piref_{\dummy} \otimes \nu^{\mu_{k_\ell}}}\\
        &\leq \limsup_{\ell \to \infty} \int r \, \dif \mu_{k_\ell} - \temp \kl{\mu_{\infty}}{\piref_{\dummy} \otimes \nu^{\mu_\infty}} \\
        &= \int r \, \dif \mu_{\infty} - \temp \kl{\mu_{\infty}}{\piref_{\dummy} \otimes \nu^{\mu_\infty}}
        \\
        &=  \mathcal{J}_\temp(\mu_{\infty}).
    \end{align*}
    The penultimate equality uses that $r$ is bounded.
    Thus, $\mathcal{J}_\temp(\mu_{\infty}) = \vartheta^{\star}_\temp$.
    The previous argument applies to any sub-sequential weak* limit of our maximizing sequence.
    But as $\mathcal{R}$ is strictly convex, by Lemma \ref{lem:ent-reg rl is strictly concave}, and $\occspace(\occxsym_0)$ is convex, by Theorem \ref{thm:occ-geometry}, only one such limit exists.
\end{proof}

We now move to prove Theorem \ref{thm:character of opt max ent policy}.
To do so, we state and prove some helpful results.
We begin with policy evaluation.

For any $\pi \in \kernspace{\statespace}{\actionspace}$, define $q^\pi_{\temp} : \statespace \times \actionspace \to \R \cup \{ -\infty \}$ by
\[
    q^\pi_{\temp}(x,a)
        :=
        \E\bigg[
            r(X^\pi_0, A^\pi_0) + \sum_{t\geq 1}\gamma^t
            \Big(
                r(X^\pi_t, A^\pi_t) - \temp\kl{\pi_{X^\pi_t}}{\piref_{X^\pi_t}}
            \Big) \bigvert (X^\pi_0, A^\pi_0) = (x,a)
            \bigg].
\]
By the tower property of condition expectation, we have that
\[
    q^\pi_\temp(x,a) = r(x,a) + \gamma \int q^\pi_\temp(x',a')  - \temp \kl{\pi_{x'}}{\piref_{x'}} \, \dif \ppixa_{x,a}(x',a').
\]

It is convenient to be able to evaluate a policy  $\pi$ (find $q^\pi_\temp$) in an iterative fashion.
This can be done via the \emph{soft Bellman operator} $\softbellman{\temp}{\pi} : M(\statespace \times \actionspace) \to  M(\statespace \times \actionspace)$ defined by
\begin{align*}
    (\softbellman{\temp}{\pi} q)(x,a) &:= r(x, a) + \gamma \int q(x', a') - \temp \kl{\pi_{x'}}{\piref_{x'}} \, \dif \ppixa_{x, a}(x', a'),
\end{align*}
but only on a restricted collection of policies.

\begin{lemma}
\label{lem: soft policy eval}
    If $r \in M_b(\statespace\times\actionspace)$, $\gamma < 1$, and $\pi$ is such that \eqref{assump: pi admissible} holds with $p = 1$, then the $\softbellman{\temp}{\pi}$ is contractive on $M_b(\statespace \times \actionspace)$ endowed with the supnorm.
    Its unique fixed point is $q^\pi_\temp$.
\end{lemma}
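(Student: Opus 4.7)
The plan is to apply Banach's fixed-point theorem to $\softbellman{\temp}{\pi}$ on the complete metric space $(M_b(\statespace\times\actionspace), \|\cdot\|_{\sup})$ to obtain a unique fixed point in $M_b$, and then verify that $q^\pi_\temp$ both satisfies the fixed-point equation and lies in $M_b$.

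First, I would check that $\softbellman{\temp}{\pi}$ maps $M_b(\statespace\times\actionspace)$ into itself. Measurability of $(x,a)\mapsto\int q(x',a')\,\dif\ppixa_{x,a}(x',a')$ and of $(x,a)\mapsto\int\temp\klfn{\pi}(x')\,\dif P_{x,a}(x')$ follows from standard kernel arguments. Boundedness uses $r\in M_b$, $\|q\|_{\sup}<\infty$, and the hypothesis \eqref{assump: pi admissible} with $p=1$, which ensures $\sup_{x,a}\int\temp\klfn{\pi}(x')\,\dif P_{x,a}(x') < \infty$.

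Second, contractivity is essentially immediate. Since $\temp\klfn{\pi}$ does not depend on the input of the operator, for any $q,q'\in M_b$,
\begin{align*}
    |(\softbellman{\temp}{\pi}q)(x,a) - (\softbellman{\temp}{\pi}q')(x,a)|
    = \gamma\left|\int (q-q')\,\dif\ppixa_{x,a}\right|
    \leq \gamma\|q - q'\|_{\sup}.
\end{align*}
Taking the supremum in $(x,a)$ yields $\|\softbellman{\temp}{\pi}q - \softbellman{\temp}{\pi}q'\|_{\sup}\leq\gamma\|q-q'\|_{\sup}$, so $\softbellman{\temp}{\pi}$ is a $\gamma$-contraction, and Banach's fixed-point theorem produces a unique fixed point in $M_b(\statespace\times\actionspace)$.

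Third, I would identify this fixed point with $q^\pi_\temp$. The recursion $q^\pi_\temp = \softbellman{\temp}{\pi}q^\pi_\temp$ is precisely the tower-property identity stated in the paragraph immediately preceding the lemma, so the remaining task is to verify $q^\pi_\temp\in M_b$. Measurability follows since each term in the defining series is measurable and the series converges normally. For boundedness, I would iteratively apply the tower property: denoting $M := \sup_{x',a'}\|\temp\klfn{\pi}\|_{L^1(P_{x',a'})}$, one gets $\E[\temp\klfn{\pi}(X^\pi_t)\mid X^\pi_0=x,A^\pi_0=a]\leq M$ for every $t\geq 1$, whence
\begin{align*}
    |q^\pi_\temp(x,a)| \leq \frac{\|r\|_{\sup}}{1-\gamma} + \frac{\gamma M}{1-\gamma} < \infty
\end{align*}
uniformly in $(x,a)$.

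The only genuine subtlety here is the last boundedness step: assumption \eqref{assump: pi admissible} with $p=1$ gives the one-step bound on $\E[\temp\klfn{\pi}(X^\pi_1)]$, and one must propagate it through arbitrary iterates of the Markov chain to conclude $q^\pi_\temp\in M_b$. Everything else—the contraction inequality and the application of Banach's theorem—is routine because the regularization term is additive and $q$-independent.
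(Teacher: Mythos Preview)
Your proof is correct and follows essentially the same approach as the paper: establish that $\softbellman{\temp}{\pi}$ maps $M_b$ to itself using \eqref{assump: pi admissible}, observe that the $\klfn{\pi}$ term cancels in the difference to give a $\gamma$-contraction, and invoke Banach's fixed-point theorem. Your argument is in fact more thorough than the paper's terse proof, which omits the verification that $q^\pi_\temp\in M_b$ (your iterated tower-property bound on $\E[\temp\klfn{\pi}(X^\pi_t)\mid X^\pi_0,A^\pi_0]$) and simply records the two displayed inequalities.
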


\begin{proof}
    Observe that 
    \[
        \|\softbellman{\temp}{\pi}q \|_{\sup} \leq \|r\|_{\sup} + \gamma\|q\|_{\sup} + \gamma \sup_{x,a} \|\temp \kl{\pi_{\dummy}}{\piref_{\dummy}}\|_{L^1(P_{x, a})} < \infty,
    \]
    by \eqref{assump: pi admissible}, and 
    \[
        \|\softbellman{\temp}{\pi}q - \softbellman{\temp}{\pi}q'\|_{\sup} \leq \gamma \|\vlse{\temp}q-\vlse{\temp}q'\|_{\sup} \leq \gamma \|q-q'\|_{\sup}.
    \]
\end{proof}

Next, we proceed with policy improvement.
\begin{lemma}
    \label{lem: soft bell opt contract}
    If $r \in M_b(\statespace\times\actionspace)$ and $\gamma < 1$, then the soft Bellman optimality operator is a contraction on $M_b(\statespace\times\actionspace)$ endowed with the supremum norm.
    Thus, it has a unique fixed point $q^\star_\temp$.
\end{lemma}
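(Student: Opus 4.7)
The plan is to verify the two standard hypotheses of the Banach fixed point theorem for $\softbellmanopt{\temp}$: that it maps $M_b(\statespace\times\actionspace)$ into itself, and that it is a $\gamma$-contraction in the supremum norm. Measurability of $\softbellmanopt{\temp}q$ is immediate from Fubini/Tonelli applied to the joint measurability of $(x,a,a')\mapsto e^{q(x',a')/\temp}$ together with the composition with $\log$. The boundedness control reduces to checking that $\vlse{\temp}q \in M_b(\statespace)$ whenever $q\in M_b(\statespace\times\actionspace)$; this is easy because
\begin{equation*}
    -\|q\|_{\sup} \leq (\vlse{\temp}q)(x) \leq \|q\|_{\sup}
\end{equation*}
since $\piref_x$ is a probability measure and $\temp\log$ is monotone. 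Combined with $\|r\|_{\sup}<\infty$ this yields $\|\softbellmanopt{\temp}q\|_{\sup} \leq \|r\|_{\sup} + \gamma\|q\|_{\sup} < \infty$.

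The heart of the argument is the contraction step, which rests on the $1$-Lipschitz property of $\vlse{\temp}$ under the supnorm. I would prove this by the standard ``translation'' trick: if $q,q'\in M_b(\statespace\times\actionspace)$ and $c := \|q-q'\|_{\sup}$, then $q(x,\cdot) \leq q'(x,\cdot) + c$ pointwise, so that after dividing by $\temp$, exponentiating, and integrating against $\piref_x$,
\begin{equation*}
    \int e^{q(x,a)/\temp}\,\dif\piref_x(a)
    \leq e^{c/\temp}\int e^{q'(x,a)/\temp}\,\dif\piref_x(a),
\end{equation*}
and taking $\temp\log$ gives $(\vlse{\temp}q)(x) - (\vlse{\temp}q')(x) \leq c$. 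The symmetric inequality follows by swapping $q$ and $q'$, yielding $\|\vlse{\temp}q - \vlse{\temp}q'\|_{\sup} \leq \|q-q'\|_{\sup}$.

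With this in hand, the contraction bound follows directly: for any $(x,a)\in\statespace\times\actionspace$,
\begin{equation*}
    |(\softbellmanopt{\temp}q)(x,a) - (\softbellmanopt{\temp}q')(x,a)|
    \leq \gamma \int |(\vlse{\temp}q)(x') - (\vlse{\temp}q')(x')|\,\dif P_{x,a}(x')
    \leq \gamma\|q-q'\|_{\sup},
\end{equation*}
so taking the supremum over $(x,a)$ shows $\softbellmanopt{\temp}$ is a $\gamma$-contraction. Since $(M_b(\statespace\times\actionspace), \|\cdot\|_{\sup})$ is a Banach space, Banach's fixed point theorem delivers a unique fixed point, which we name $q^\star_\temp$. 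I don't expect any real obstacles here; the only delicate point is the measurability of $\vlse{\temp}q$, which is handled by Tonelli, and the sharpness of the Lipschitz constant, which the translation trick handles cleanly.
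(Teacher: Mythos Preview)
Your proposal is correct and follows essentially the same approach as the paper: show $\softbellmanopt{\temp}$ maps $M_b$ into itself via $\|\softbellmanopt{\temp}q\|_{\sup}\leq\|r\|_{\sup}+\gamma\|q\|_{\sup}$, then reduce the contraction to the $1$-Lipschitz property of $\vlse{\temp}$ and invoke Banach's fixed point theorem. The paper's proof simply asserts $\|\vlse{\temp}q-\vlse{\temp}q'\|_{\sup}\leq\|q-q'\|_{\sup}$ without justification, whereas you spell out the translation-trick argument explicitly; otherwise the proofs are identical.
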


\begin{proof}
    Observe that 
    \[
        \|\softbellmanopt{\temp}q\|_{\sup} \leq \|r\|_{\sup} + \gamma \|q\|_{\sup} < \infty
    \]
    and 
    \[
        \|\softbellmanopt{\temp}q - \softbellmanopt{\temp}q'\|_{\sup} \leq \gamma \|\vlse{\temp}q-\vlse{\temp}q'\|_{\sup} \leq \gamma \|q-q'\|_{\sup}.
    \]
\end{proof}

\begin{lemma}
    \label{lem: qstar is soft q for its bolt policy}
    The following equality holds true: $q^{\boltzmann{\temp} q^\star_\temp}_\temp = q^\star_\temp$.
\end{lemma}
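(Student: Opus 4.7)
The plan is to verify that $q^\star_\temp$ is itself a fixed point of the soft policy evaluation operator $\softbellman{\temp}{\pi}$ with $\pi := \boltzmann{\temp} q^\star_\temp$, and then invoke the uniqueness of such a fixed point guaranteed by Lemma \ref{lem: soft policy eval} to conclude $q^\pi_\temp = q^\star_\temp$.

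The algebraic heart of the argument is the following identity, valid for any $q \in M_b(\statespace\times\actionspace)$ and $x' \in \statespace$. Writing $\pi := \boltzmann{\temp}q$, the Radon--Nikodym derivative $\dif \pi_{x'}/\dif \piref_{x'}$ equals $\exp(\temp^{-1}(q(x', \cdot) - (\vlse{\temp} q)(x')))$, so integrating its logarithm against $\pi_{x'}$ and rearranging yields
\begin{align*}
\int q(x', a') \, \dif \pi_{x'}(a') - \temp \kl{\pi_{x'}}{\piref_{x'}} = (\vlse{\temp} q)(x').
\end{align*}
Taking $q = q^\star_\temp$, this identity also gives admissibility of $\pi := \boltzmann{\temp} q^\star_\temp$ for Lemma \ref{lem: soft policy eval}: both $q^\star_\temp$ and $\vlse{\temp} q^\star_\temp$ are bounded in absolute value by $\|q^\star_\temp\|_{\sup}$, so $\temp \kl{\pi_{x'}}{\piref_{x'}} \leq 2\|q^\star_\temp\|_{\sup}$ uniformly in $x'$, verifying \eqref{assump: pi admissible} with $p = 1$.

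The fixed-point check is then a direct computation. Disintegrating $\ppixa_{x,a} = \pi_{\dummy}\otimes P_{x,a}$ inside the definition of $\softbellman{\temp}{\pi}$, and using that $\kl{\pi_{x'}}{\piref_{x'}}$ is constant in $a'$, we obtain
\begin{align*}
(\softbellman{\temp}{\pi} q^\star_\temp)(x,a)
&= r(x,a) + \gamma \int \left[\int q^\star_\temp(x', a')\, \dif \pi_{x'}(a') - \temp \kl{\pi_{x'}}{\piref_{x'}}\right] \dif P_{x,a}(x') \\
&= r(x,a) + \gamma \int (\vlse{\temp} q^\star_\temp)(x') \, \dif P_{x,a}(x') = (\softbellmanopt{\temp} q^\star_\temp)(x,a) = q^\star_\temp(x,a),
\end{align*}
where the last equality uses that $q^\star_\temp$ is the fixed point of $\softbellmanopt{\temp}$ from Lemma \ref{lem: soft bell opt contract}. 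Uniqueness of the fixed point from Lemma \ref{lem: soft policy eval} then yields $q^\pi_\temp = q^\star_\temp$. There is no genuine obstacle here; the proof is essentially bookkeeping around the variational identity that characterizes the Boltzmann--Gibbs policy as the maximizer realizing the pointwise soft-max as an expectation-minus-KL.
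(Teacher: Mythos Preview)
Your proof is correct and follows essentially the same approach as the paper: verify that $q^\star_\temp$ is a fixed point of $\softbellman{\temp}{\pi}$ for $\pi = \boltzmann{\temp} q^\star_\temp$ via the identity $\int q(x',a')\,\dif\pi_{x'}(a') - \temp\kl{\pi_{x'}}{\piref_{x'}} = (\vlse{\temp}q)(x')$, then invoke uniqueness from Lemma~\ref{lem: soft policy eval}. You are slightly more explicit than the paper in checking the admissibility condition \eqref{assump: pi admissible}, but otherwise the arguments coincide.
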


\begin{proof}
    Observe  that
    \begin{align*}
        \softbellman{\temp}{\boltzmann{\temp} q^\star_\temp} q^\star_\temp &= r(x,a) + \gamma \int  q^\star_\temp  - \temp \kl{(\boltzmann{\temp} q^\star_\temp)_{\dummy}}{\piref_{\dummy}} \, \dif \ppixa[\boltzmann{\temp} q^\star_\temp]_{x,a}
        \\
        &= r(x,a) + \gamma \int \vlse{\temp} q^\star_\temp \, \dif P_{x,a}
        \\
        &= \softbellmanopt{\temp} q^\star_\temp 
        \\
        &= q^\star_\temp.
    \end{align*}
    In words, $q^\star_\temp$ is a fixed point of the soft Bellman (policy evaluation) operator with $\pi = \boltzmann{\temp} q^\star_\temp$.
    As $\boltzmann{\temp} q^\star_\temp$ is a Boltzmann--Gibbs policy with a bounded potential, by Lemma \ref{lem: soft policy eval} and the preceding note, this operator is a contraction with a unique fixed point.
    Hence,
    \[
        q^\star_\temp = q^{\boltzmann{\temp} q^\star_\temp}_\temp,
    \]
    the unique fixed point of $\softbellman{\temp}{\pi}$ with $\pi = \boltzmann{\temp} q^\star_\temp$, as desired.
\end{proof}

\begin{lemma}
\label{lem: Bstar improvement}
For every $\pi \in \kernspace{\statespace}{\actionspace}$, we have that 
\[
q^\star_\temp \geq q_\temp^\pi.
\]
\end{lemma}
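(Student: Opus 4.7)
The plan is to prove the lemma by establishing that the soft Bellman optimality operator dominates every soft Bellman evaluation operator pointwise, and then to compare the respective fixed points via monotone iteration. The cornerstone is the Donsker--Varadhan / Gibbs variational representation of the log-sum-exp: for any $q \in M(\statespace\times\actionspace)$, any $\pi \in \kernspace{\statespace}{\actionspace}$, and any $x \in \statespace$,
\begin{align*}
(\vlse{\temp}q)(x) \;\geq\; \int q(x,a)\,\dif \pi_x(a) \;-\; \temp\, \kl{\pi_x}{\piref_x},
\end{align*}
which is a direct rearrangement of $\kl{\pi_x}{(\boltzmann{\temp}q)_x} \geq 0$ using the explicit form of the Boltzmann--Gibbs policy. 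Integrating this inequality against $P_{x,a}$, multiplying by $\gamma$, and adding $r(x,a)$ yields the pointwise operator inequality $\softbellmanopt{\temp}q \geq \softbellman{\temp}{\pi}q$ for every $q$ and every $\pi$.

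With this in hand, first consider the case where $\pi$ satisfies the integrability assumption \eqref{assump: pi admissible} with $p=1$, so that Lemma \ref{lem: soft policy eval} gives $q^\pi_\temp \in M_b(\statespace\times\actionspace)$ as the unique fixed point of $\softbellman{\temp}{\pi}$. Applying the operator inequality at $q^\pi_\temp$ and using that $\softbellman{\temp}{\pi} q^\pi_\temp = q^\pi_\temp$, I obtain $\softbellmanopt{\temp} q^\pi_\temp \geq q^\pi_\temp$. Next, I would observe that $\softbellmanopt{\temp}$ is monotone on $M_b(\statespace\times\actionspace)$: this is immediate from the monotonicity of the exponential and the Lebesgue integral in the definition of $\vlse{\temp}$. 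Iterating this monotone inequality gives $(\softbellmanopt{\temp})^{n+1} q^\pi_\temp \geq (\softbellmanopt{\temp})^n q^\pi_\temp$ for all $n \geq 0$. Since $\softbellmanopt{\temp}$ is a $\gamma$-contraction on $M_b(\statespace\times\actionspace)$ with unique fixed point $q^\star_\temp$ (Lemma \ref{lem: soft bell opt contract}), the iterates $(\softbellmanopt{\temp})^n q^\pi_\temp$ converge uniformly to $q^\star_\temp$; passing to the limit in the monotone chain delivers $q^\star_\temp \geq q^\pi_\temp$.

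For a policy $\pi$ that does not satisfy \eqref{assump: pi admissible}, $q^\pi_\temp(x,a)$ is defined directly as a (possibly extended-real valued) expectation. At any $(x,a)$ where $q^\pi_\temp(x,a) = -\infty$, the inequality is trivial since $q^\star_\temp$ is bounded. At the remaining points the same argument applies, since the pointwise operator inequality above holds in complete generality; alternatively, one may approximate $\pi$ by admissible policies and pass to the limit using the monotonicity of the evaluation operator.

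The main obstacle I anticipate is the careful handling of the non-admissible case, where one has to be sure that the monotone iteration and the $\gamma$-contraction limit remain compatible; concretely, one must check that $(\softbellmanopt{\temp})^n q^\pi_\temp$ stays inside $M_b$ (which follows from $\|q^\pi_\temp\|_{\sup} \leq \|r\|_{\sup}/(1-\gamma)$ when finite, so $\softbellmanopt{\temp}$ is well-defined on it) and that the pointwise comparison passes to the uniform limit. Everything else is mechanical once the Gibbs variational inequality is in hand.
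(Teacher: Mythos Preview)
Your proposal is correct and follows essentially the same route as the paper: both arguments use the Donsker--Varadhan variational inequality to get $\softbellmanopt{\temp}q^\pi_\temp \geq q^\pi_\temp$, then iterate using monotonicity and the contraction of $\softbellmanopt{\temp}$ on $M_b(\statespace\times\actionspace)$ to reach $q^\star_\temp$. The one place where the paper is slightly cleaner is in handling the general (possibly non-admissible) policy: rather than splitting into cases, the paper replaces $q^\pi_\temp$ by $q^\pi_{\temp,0} := \max\{q^\pi_\temp, 0\}$, which is automatically in $M_b$ (since $q^\pi_\temp$ is always bounded above by $\|r\|_{\sup}/(1-\gamma)$), and iterates from there---this sidesteps the boundedness issue you flag in your last paragraph in one stroke.
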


\begin{proof}
First, we prove that 
\begin{equation}
    \label{eqn: soft opt op increases value}
    \softbellmanopt{\temp}q_\temp^\pi \geq q_\temp^\pi.
\end{equation}
By definition and the Donsker--Varadhan variational principle,
\begin{align*}
    q^\pi_\temp(x, a) &= r(x, a) + \gamma \int \bigg[ \int q^\pi_\temp(x',a')\, \dif \pi_{x'}(a') - \temp \kl{\pi_x'}{\piref_{x'}} \bigg]\, \dif P_{x, a}(x')    
    \\
    &\leq r(x, a) + \gamma \int (\vlse{\temp} q_\temp^\pi)(x') \, \dif P_{x, a}(x')
    \\
    &= (\softbellmanopt{\temp}q_\temp^\pi)(x,a).
\end{align*}

Now we conclude.
Let $q_{\temp,0}^\pi := \max \{ q_{\temp}^\pi, 0 \}$.
By \eqref{eqn: soft opt op increases value} and since $\softbellmanopt{\temp}$ is a monotone operator,
\[
    q^\pi_\temp \leq \softbellmanopt{\temp}q_\temp^\pi \leq \softbellmanopt{\temp} q_{\temp,0}^\pi \leq \softbellmanopt{\temp}(\softbellmanopt{\temp} q_{\temp,0}^\pi) \leq \cdots \leq \lim_{n \to \infty} (\softbellmanopt{\temp})^n q_{\temp,0}^\pi = q^\star_\temp,
\]
where the final equality holds by Lemma \ref{lem: soft bell opt contract}, noting that $\|q_{\temp,0}^\pi\|_{\sup} < \infty$.
\end{proof}

Finally, we prove Theorem \ref{thm:character of opt max ent policy}.

\maxentoptpolicy*\label{proof:character of opt max ent policy}
\begin{proof}
For any $\pi \in \kernspace{\statespace}{\actionspace}$, let
\[
v^\pi_\temp(x) := \int q^\pi_\temp(x,a) \, \dif \pi_x(a) - \temp \kl{\pi_x}{\piref_x}.  
\]
Note that
\[
    \mathcal{J}_\temp(\occxa) = (1-\gamma) \int v^\pi_\temp \, \dif \occxsym_0
\]
if $\occxa \in \occspace(\occxsym_0)$.
Hence, it suffices to show that
\begin{equation}
    \label{eqn: max value ineq}
    v^{\boltzmann{\temp}q^\star_\temp}_\temp \geq \sup_\pi v^\pi_\temp.
\end{equation}
Observe, by Lemma \ref{lem: qstar is soft q for its bolt policy},
\[
 v^{\boltzmann{\temp}q^\star_\temp}_\temp(x) = \int q^\star_\temp(x,a) \, \dif (\boltzmann{\temp}q^\star_\temp)_x(a) - \temp \kl{(\boltzmann{\temp}q^\star_\temp)_x}{\piref_x} = (\vlse{\temp}q^\star_\temp)(x).
\]
Thus, by the Donsker--Varadhan variational principle,
\begin{align*}
    v^{\boltzmann{\temp}q^\star_\temp}_\temp(x) - v^\pi_\temp(x) &= (\vlse{\temp}q^\star_\temp)(x)
   - \int q^\pi_\temp(x,a) \, \dif \pi_x(a) + \temp \kl{\pi_x}{\piref_x}
   \\
   &\geq (\vlse{\temp}q^\star_\temp)(x) - (\vlse{\temp} q^\pi_\temp)(x).
\end{align*}
Finally, by Lemma \ref{lem: Bstar improvement}, we have that
\[
    \vlse{\temp}{q^\star_\temp} - \vlse{\temp}{q^\pi_\temp} \geq 0,
\]
for all $\pi$, as desired.
\end{proof}

To conclude this section, we prove Theorem \ref{thm:opt occs conv in vanshing temp}.
\optoccsinvanshingtemp*
\label{proof:opt occs in vanshing temp}
\begin{proof}
    Let $\mu^\star \in \{ \arg\sup_{\occspace(\occxsym_0)} \mathcal{J}_0 \} \cap \{ \mathcal{R} < \infty \}$.
    Then,
    \begin{align*}
        0 &\leq \mathcal{J}_0(\mu^\star) - \mathcal{J}_0(\mu^{\star}_\temp) 
        \leq \temp (\mathcal{R}(\mu^\star) - \mathcal{R}( \mu^{\star}_\temp )) < \infty.
    \end{align*}
    In turn, for all $\temp > 0$, we deduce that $\mathcal{R}( \mu^{\star}_\temp ) \leq \mathcal{R}(\mu^\star)$.
    
    Now let $\mu_0$ be any limit of any setwise convergent subsequence of $(\mu^{\star}_\temp)_{\temp >0}$
    (cf. the proof of Theorem \ref{thm:ent-reg rl soln} and Remark \ref{rem:duality of M_b}).
    As $\mathcal{R}$ is weakly lower semi-continuous  we find that 
    \begin{align*}
        \mathcal{R}(\mu_0) \leq \liminf_{\temp \to 0} \mathcal{R}( \mu^{\star}_\temp)  \leq \mathcal{R}(\mu^\star).
    \end{align*}
    Moreover, since $\mathcal{R}(\mu^\star) < \infty$, by Lemma \ref{lem:maxent:kl-collapse}, and as $r \in M_b(\statespace \times \actionspace)$, we deduce that
    \begin{align*}
        \lim_{\temp \to 0} \temp (\mathcal{R}(\mu^\star) - \mathcal{R}( \mu^{\star}_\temp )) = 0 \quad\text{and}\quad \mathcal{J}_0(\mu_0) = \mathcal{J}_0(\mu^\star).
    \end{align*}
    Therefore, $\mu_0\in \arg\sup_{\occspace(\occxsym_0)} \mathcal{J}_0$ and minimizes $\mathcal{R}$ over $\arg\sup_{\occspace(\occxsym_0)} \mathcal{J}_0$.
    
    Since $\mathcal{R}$ is strictly convex, by Lemma \ref{lem:ent-reg rl is strictly concave}, and the set $\arg\sup_{\occspace(\occxsym_0)} \mathcal{J}_0$ is convex, $\mathcal{R}$ has at most one minimizer among this set.
    In turn, only one such limit $\mu_0$ exists, call it $\mu^{\star}_0$.
    Hence, $\mu^{\star}_\temp\to \mu^{\star}_0$ setwise, as desired.
\end{proof}

\section{Proofs for Section \ref{s:maxent}}
\label{app:proofs:maxent:notation}

Before proving the results from Section \ref{s:maxent}, we introduce some helpful notation.
For any $q:\statespace\times\actionspace\to\R$,  we define
\begin{align*}
    (\boltmean q)(x) := \int q(x, \cdot)\,\dif(\boltzmann{\temp}q)_x.
\end{align*}
Additionally, we will define $M_\temp:L^\infty(\statespace\times\actionspace)\to\R \cup \{ \infty \}$ according to
\begin{align*}
    M_\temp(q) &:= \sup_x\{\esssup_{\piref_x} q(x, \cdot)  - \vlse{\temp}q(x)\}.
\end{align*}

We start by proving that $\bellmanref$ is contractive on $M_b(\statespace \times \actionspace)$.
\bellmarefcontract*
\label{proof:bellman ref is contract}
\begin{proof}
First, observe that
\begin{align*}
    \|\bellmanref q\|_{\sup} \leq \|r\|_{\sup} + \gamma \|q\|_{\sup}.
\end{align*}
Second,
\begin{align*}
    \|\bellmanref q - \bellmanref q'\|_{\sup} &\leq \gamma \sup_{x'} |\esssup_{\piref_{x'}} q(x', \cdot) - \esssup_{\piref_{x'}} q'(x', \cdot)| 
    \\
    &\leq  \gamma \sup_{x'} (\esssup_{\piref_{x'}}| q(x', \cdot) - q'(x',\cdot) |)
    \\
    &\leq \gamma \| q - q'\|_{\sup}.
\end{align*}
The lemma follows by the Banach fixed point theorem.
\end{proof}

Next we prove value function convergence.
\coupledqconvergence*
\label{proof:coupled:q-convergence}
\begin{proof}
    Since $q^\star_\temp$ is bounded (as the fixed point of a contractive operator on $M_b(\statespace\times\actionspace$), there exists $q_0:\statespace\times\actionspace\to\R$ such that $q^\star_\temp\to q_0$ monotonically and pointwise as $\temp\to 0$, as a direct consequence of Lemma \ref{lem:soft-q:monotone}.
    Therefore, by the monotone convergence theorem,
    \begin{align*}
        \lim_{\alttemp\to 0}\vlse{\temp}q^\star_\alttemp(x)
        = \lim_{\alttemp\to 0}\log\|\exp(q^\star_\alttemp(x, \cdot))\|_{L^{1/\temp}(\piref_x)}
        = \log\|\exp(q_0(x, \cdot))\|_{L^{1/\temp}(\piref_x)}.
    \end{align*}
    Consequently,
    \begin{align*}
        \lim_{\temp\to 0}\lim_{\alttemp\to 0}\vlse{\temp}q^\star_\alttemp(x)
        &= \lim_{\temp\to 0}\log\|\exp(q_0(x, \cdot))\|_{L^{1/\temp}(\piref_x)}\\
        &= \log\|\exp(q_0(x, \cdot))\|_{L^{\infty}(\piref_x)}\\
        &= \esssup_{\piref_x} q_0(x, \cdot).
    \end{align*}
    The second step holds since for any $f\in L^\infty$, $\|f\|_p$ converges up to $\|f\|_\infty$ as $p\to\infty$.
    So, since the sequence $(\vlse{\temp}q^\star_{\alttemp}(x))_{\temp,\alttemp\geq 0}$ is monotone and bounded, its limit exists, and coincides with that computed above:
    \begin{align*}
        \lim_{\temp\to 0}\vlse{\temp}q^\star_\temp (x) = \esssup_{\piref_x} q_0(x, \cdot).
    \end{align*}
    Since $q^\star_\temp$ is the unique fixed point of $\softbellmanopt{\temp}$, by the monotone convergence theorem, we have
    \begin{align*}
        q_0(x, a)
        &= \lim_{\temp\to 0}q^\star_\temp(x, a)\\
        &= \lim_{\temp\to 0}({\softbellmanopt{\temp}} q^\star_\temp)(x, a)\\
        &= r(x, a) + \gamma\int\lim_{\temp\to 0}\vlse{\temp}q^\star_\temp(x')\,\dif P_{x, a}(x')\\
        &= r(x, a) + \gamma\int \esssup_{\piref_{x'}} q_0(x', \cdot)\,\dif P_{x, a}(x'),
    \end{align*}
    so that $q_0$ is a fixed point of $\bellmanref$. 
    Since the $\bellmanref$ has a fixed point $\qstarref$, it follows that $q_0 = \qstarref$.
\end{proof}

Now we prove our core estimate.
\tvbound*
\label{proof:tvbound}
\begin{proof}
    Let $\pi := \boltzmann{\temp}q, \pi' := \boltzmann{\temp}q'$.
    By Lemma \ref{lem:tv-bound:pinsker}, we have
    \begin{align*}
        \|\pi_x - \pi'_x\|_{\mathrm{TV}}
        &\leq\sqrt{\temp^{-1}\|q(x, \cdot) - q'(x, \cdot)\|_{L^\infty(\piref)}}.
    \end{align*}
    Moreover, by Lemma \ref{lem:tv-bound:sinh},
    \begin{align*}
        \|\pi_x - \pi'_x\|_{\mathrm{TV}}
        &\leq \frac{1}{2}\sinh\left(4\temp^{-1}\|q(x, \cdot) - q'(x, \cdot)\|_{L^\infty(\piref_x)}\right).
    \end{align*}

    This concludes the proof of the first claim.
    Next, we recall that
    \begin{align*}
        \sinh(y) = \sum_{k=0}^\infty \frac{y^{2k + 1}}{(2k + 1)!},
    \end{align*}
    which is convergent for any $y\in\bbfont{C}$.
    Therefore, for $y\in(0,1)$, we have
    \begin{align*}
        \sinh(y)
        &\leq y + \frac{y^3}{3!} + \frac{y^5}{5!} + \dots\\
        &\leq y\left(1 + e^y - \frac{5}{2}\right)\\
        &\leq y\left(e - \frac{3}{2}\right).
    \end{align*}
    So, when $\|q(x, \cdot) - q'(x, \cdot)\|_{L^\infty(\piref)} < \temp/2$, it follows that
    \begin{align*}
        \|\pi_x - \pi'_x\|_{\mathrm{TV}}
        &\leq \frac{1}{2}\left(e - \frac{3}{2}\right)\left(4\temp^{-1}\|q(x, \cdot) - q'(x, \cdot)\|_{L^\infty(\piref_x)}\right)\\
        &= \frac{2e - 3}{4}\temp^{-1}\|q(x, \cdot) - q'(x, \cdot)\|_{L^\infty(\piref_x)}.
    \end{align*}
\end{proof}

Finally, we prove policy and return distribution convergence. 

\decoupledpolicyconvergence*
\label{proof:decoupled:policy-convergence}
\begin{proof}
    Recall $\pi^{\temp,\alttemp} := \boltzmann{\temp}q^\star_{\alttemp}$.
    By Theorem \ref{thm:tv-bound} and Lemma \ref{lem:maxent:q-qsoft},  
    \begin{equation}
    \label{eq:maxent:policy-ratio}
       \lim_{\temp \to 0} \sup_x \|(\boltzmann{\temp} q^\star_\alttemp)_x - (\boltzmann{\temp} \qstarref)_x\|_{\rm TV} \lesssim - \lim_{\temp \to 0} \frac{\alttemp} {\temp}\log \minp.
    \end{equation}
    Consequently,  $\pi^{\temp,\alttemp}_x\to\pistarref_x$ if and only if $(\boltzmann{\temp} \qstarref)_x \to \pistarref_x$, and in whatever sense the later convergence occurs.
    In particular, if $\actionspace$ is continuous, this is in the weak sense.
    While, if $\actionspace$ is discrete, this in total variation.
\end{proof}

\decoupledzetaconvergence*\label{proof:decoupled:zeta-convergence}
\begin{proof}
    By the distributional Bellman equation \cite{bellemare2017distributional}, we have that
    \begin{align*}
        &\wass(\zeta^\star_{x, a}, \zeta^{\temp,\alttemp}_{x, a})\\
        &\quad\leq \int\wass\left((\bootfn{r(x, a)} \circ\proj{\R})_\#(\zeta^\star_{x',\dummy}\otimes\pistarref_{x'}), (\bootfn{r(x, a)}\circ\proj{\R})_\#(\zeta^{\temp,\alttemp}_{x',\dummy}\otimes\pi^{\temp,\alttemp}_{x'})\right)\,\dif P_{x,a}(x')
        \\
        &\quad= \int\wass\left((\bootfn{0}\circ\proj{\R})_\#(\zeta^\star_{x',\dummy}\otimes\pistarref_{x'}), (\bootfn{0}\circ\proj{\R})_\#(\zeta^{\temp,\alttemp}_{x',\dummy}\otimes\pi^{\temp,\alttemp}_{x'})\right)\,\dif P_{x,a}(x')
        \\
        &\quad=\gamma\int\mathrm{I}_{\temp,\alttemp}\,\dif P_{x, a}
    \end{align*}
    where
    \begin{align*}
        \mathrm{I}_{\temp,\alttemp}(x')
        =
        \wass\left(
            (\bootfn[1]{0} \circ \proj{\R})_\#(\zeta^\star_{x',\dummy}\otimes\pistarref_{x'}),
            (\bootfn[1]{0}\circ \proj{\R})_\#(\zeta^{\temp,\alttemp}_{x',\dummy}\otimes\pi^{\temp,\alttemp}_{x'})
        \right).
    \end{align*}

    We now derive a bound on $\mathrm{I}_{\temp,\alttemp}$.
    Starting with a triangle inequality,
    \begin{align*}
        \mathrm{I}_{\temp,\alttemp}(x')
        &\leq
        \wass\left(
            (\bootfn[1]{0}\circ \proj{\R})_\#(\zeta^\star_{x',\dummy}\otimes\pistarref_{x'}),
            (\bootfn[1]{0}\circ \proj{\R})_\#(\zeta^{\temp,\alttemp}_{x',\dummy}\otimes\pistarref_{x'})
        \right)\\
        &\qquad+\wass\left(
            (\bootfn[1]{0}\circ \proj{\R})_\#(\zeta^{\temp,\alttemp}_{x',\dummy}\otimes\pistarref_{x'}), 
            (\bootfn[1]{0}\circ \proj{\R})_\#(\zeta^{\temp,\alttemp}_{x',\dummy}\otimes\pi^{\temp,\alttemp}_{x'})
        \right)\\
        &\overset{(a)}{\leq}
        \int\wass(\zeta^\star_{x', a'}, \zeta^{\temp,\alttemp}_{x', a'})\,\dif\pistarref_{x'}(a')\\
        &\qquad+ \wass\left(
            (\bootfn[1]{0}\circ \proj{\R})_\#(\zeta^{\temp,\alttemp}_{x',\dummy}\otimes\pistarref_{x'}),
            (\bootfn[1]{0}\circ \proj{\R})_\#(\zeta^{\temp,\alttemp}_{x',\dummy}\otimes\pi^{\temp,\alttemp}_{x'})
        \right)\\
        &\overset{(b)}{\leq}
        \int\wass(\zeta^\star_{x', a'}, \zeta^{\temp, \alttemp}_{x', a'})\,\dif\pistarref_{x'}(a')
        \\
        &\hspace{1.0cm}+ 2^{\frac{p-1}{p}}\left(
            \int \bigg[\int |z|^p\,\dif\zeta^{\temp,\alttemp}_{x', a'}(z)\bigg]\,\dif|\pistarref_{x'} - \pi^{\temp,\alttemp}_{x'}|(a')
        \right)^{1/p}\\
        &\overset{(c)}{\leq}
        \int\wass(\zeta^\star_{x', a'}, \zeta^{\temp, \alttemp}_{x', a'})\,\dif\pistarref_{x'}(a')
        + \frac{2^{\frac{p-1}{p}}}{1-\gamma}\|r\|_{\sup} \|\pistarref_{x'} - \pi^{\temp,\alttemp}_{x'}\|_{\mathrm{TV}}^{1/p},
    \end{align*}
    where $(a)$ follows by the convexity of the Wasserstein metrics \citep{villani2008optimal,bellemare2023distributional}, $(b)$ applies \citet[Theorem 6.15]{villani2008optimal}, and $(c)$ leverages that the support $\zeta^\pi_{x', a'}$ lives in a ball of radius $\|r\|_{\sup}/(1-\gamma)$, for any $\pi$ and $(x',a') \in \statespace \times \actionspace$.

    So, thus far, we have shown that
    \begin{align*}
        \wass(\zeta^\star_{x, a}, \zeta^{\temp,\alttemp}_{x, a})
        \leq\gamma\int  \wass(\zeta^\star_{x', a'}, \zeta^{\temp,\alttemp}_{x', a'})\,\dif \ppixa[\pistarref]_{x, a}(x',a') + C\gamma\int \|\pistarref_{x'} - \pi^{\temp,\alttemp}_{x'}\|_{\mathrm{TV}}^{1/p} \, \dif P_{x,a}(x').
    \end{align*}
    By Theorem \ref{thm:decoupled:policy-convergence}, the total variation term tends to zero as $\temp$ tends to zero.
    Thus, defining $\iota(x', a') := \limsup_{\temp\to 0}\wass(\zeta^\star_{x', a'}, \zeta^{\temp,\alttemp}_{x', a'})$, this implies that
    \begin{align*}
        \iota(x', a') \leq \gamma\int\iota(y, b)\,\dif\ppixa[\pistarref]_{x', a'}(y, b),
    \end{align*}
    In turn, $\sup\iota\leq\gamma\sup\iota$, implying that $\iota \equiv 0$.
    Therefore, $\wass(\zeta^\star_{x, a}, \zeta^{\temp,\alttemp}_{x, a})\to 0$ pointwise over $\statespace\times\actionspace$, so by the dominated convergence theorem, $\muwass{p}{p'}(\zeta^\star, \zeta^{\temp,\alttemp})\to 0$ for any $p'\in[1,\infty)$ and $\canonicalmuwassmeasure\in\probset{\statespace\times\actionspace}$.
\end{proof}

\subsection{Supplemental Lemmas for Section \ref{s:maxent}}

The following lemma translates immediately from the corresponding result in tabular MDPs; we prove it here for completeness.

\begin{lemma}\label{lem:soft-q:monotone}
    If $\temp\leq\alttemp$, then $q^\star_\alttemp\leq q^\star_\temp$.
\end{lemma}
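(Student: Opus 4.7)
The plan is to combine two monotonicity properties of the soft Bellman optimality operator---one in the temperature parameter and one in the potential---with a fixed-point iteration argument.

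First, I would show that for any fixed $q \in M_b(\statespace\times\actionspace)$ and any $x \in \statespace$, the map $\temp \mapsto (\vlse{\temp}q)(x)$ is non-increasing. This follows by rewriting
\begin{align*}
    (\vlse{\temp}q)(x) = \log\|e^{q(x,\cdot)}\|_{L^{1/\temp}(\piref_x)},
\end{align*}
and invoking the standard fact that on a probability space, $p \mapsto \|f\|_{L^p(\piref_x)}$ is non-decreasing for $p \geq 1$ (a consequence of Jensen's inequality applied to $t \mapsto t^{p'/p}$ for $p' \geq p$). For $\temp \leq \alttemp$, this gives $\vlse{\temp}q \geq \vlse{\alttemp}q$ pointwise, and hence
\begin{align*}
    \softbellmanopt{\temp}q \geq \softbellmanopt{\alttemp}q \quad\text{on }\statespace\times\actionspace.
\end{align*}

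Second, I would observe that $\softbellmanopt{\temp}$ is monotone in its argument: if $q \leq q'$ pointwise, then $\softbellmanopt{\temp}q \leq \softbellmanopt{\temp}q'$, since $\vlse{\temp}$ is pointwise monotone in its potential and $P_{x,a}$ is a nonnegative measure.

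Combining these, with $q^\star_\alttemp$ being the fixed point of $\softbellmanopt{\alttemp}$ by Lemma \ref{lem: soft bell opt contract},
\begin{align*}
    \softbellmanopt{\temp}q^\star_\alttemp \geq \softbellmanopt{\alttemp}q^\star_\alttemp = q^\star_\alttemp.
\end{align*}
Iterating and using monotonicity in the potential shows that $((\softbellmanopt{\temp})^n q^\star_\alttemp)_{n \geq 0}$ is pointwise non-decreasing. Since $\softbellmanopt{\temp}$ is a supnorm contraction on $M_b(\statespace\times\actionspace)$ (Lemma \ref{lem: soft bell opt contract}) with unique fixed point $q^\star_\temp$, the sequence converges pointwise to $q^\star_\temp$. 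Passing to the limit preserves the inequality, yielding $q^\star_\temp \geq q^\star_\alttemp$, as desired.

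The argument is almost entirely routine; the only subtle point is the monotonicity of $\vlse{\temp}$ in $\temp$, which is best seen through the $L^p$-norm reformulation rather than by attempting to differentiate in $\temp$ directly. Once that is in hand, the rest is a standard value-iteration style monotone-convergence argument.
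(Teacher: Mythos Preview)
Your proof is correct and follows essentially the same route as the paper: show $\softbellmanopt{\temp}q^\star_\alttemp \geq \softbellmanopt{\alttemp}q^\star_\alttemp = q^\star_\alttemp$ via monotonicity of $\vlse{\temp}$ in $\temp$, then iterate using monotonicity in the potential and pass to the fixed point. One minor remark: the $L^p$-norm monotonicity on a probability space holds for all $p>0$ (your Jensen argument with $t\mapsto t^{p'/p}$ already shows this), so there is no need to restrict to $p\geq 1$, which matters since $1/\temp$ may lie in $(0,1)$.
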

\begin{proof}
    By the monotonicity of $\softbellmanopt{\temp}$,
    \begin{align*}
        q^\star_\alttemp = r + \gamma \int \vlse{\alttemp} q^\star_\alttemp \, \dif P_{\dummy,\dummy}
        \leq r + \gamma \int \vlse{\temp} q^\star_\alttemp \, \dif P_{\dummy,\dummy} = \softbellmanopt{\temp} q^\star_\alttemp \leq \cdots \leq q^\star_\temp
    \end{align*}
    (cf. the proof of Lemma \ref{lem: Bstar improvement}).
\end{proof}

\begin{lemma}\label{lem:maxent:kl-collapse}
    Let $\alttemp = \alttemp(\temp)$ and suppose $\alttemp\to 0$ as $\temp\to 0$.
    Then
    \begin{align*}
        \temp\kl{(\boltzmann{\temp}q^\star_\alttemp)_x}{\piref_x}\overset{\temp\downarrow 0}{\longrightarrow} 0.
    \end{align*}
\end{lemma}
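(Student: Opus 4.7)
The plan is to first convert the scaled KL divergence into the simple algebraic identity $\temp\kl{(\boltzmann{\temp}q)_x}{\piref_x} = (\boltmean q)(x) - (\vlse{\temp}q)(x)$, which follows immediately from unfolding the definition of the BG density: $\log(d(\boltzmann{\temp}q)_x/d\piref_x)(a) = (q(x,a)-(\vlse{\temp}q)(x))/\temp$, and then integrating against $(\boltzmann{\temp}q)_x$. Specializing to $q=q^\star_\alttemp$, the goal reduces to showing that $(\boltmean q^\star_\alttemp)(x) - (\vlse{\temp}q^\star_\alttemp)(x) \to 0$ as $\temp \to 0$.

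Next I would sandwich both quantities between the essential supremum $M^\star_\alttemp(x) := \esssup_{\piref_x} q^\star_\alttemp(x,\cdot)$ and a temperature-weighted perturbation of it. For the upper bound on the Boltzmann mean, note $(\boltzmann{\temp}q^\star_\alttemp)_x \ll \piref_x$, so $q^\star_\alttemp(x,\cdot) \leq M^\star_\alttemp(x)$ also holds $(\boltzmann{\temp}q^\star_\alttemp)_x$-a.s., yielding $(\boltmean q^\star_\alttemp)(x) \leq M^\star_\alttemp(x)$. For the lower bound on the log-sum-exp, I would invoke Assumption \ref{ass:piref:coverage} (applied at $\alttemp$, which is legitimate since $\alttemp > 0$) to find $\minp > 0$ uniform in $\alttemp$ and $x$ such that $\piref_x(\{a : q^\star_\alttemp(x,a) = M^\star_\alttemp(x)\}) \geq \minp$; restricting the integral to this set gives
\begin{align*}
\int e^{q^\star_\alttemp(x,a)/\temp}\,\dif\piref_x(a) \geq \minp\,e^{M^\star_\alttemp(x)/\temp},
\end{align*}
and hence $(\vlse{\temp}q^\star_\alttemp)(x) \geq M^\star_\alttemp(x) + \temp\log\minp$.

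Combining these two bounds yields the two-sided estimate
\begin{align*}
0 \leq \temp\kl{(\boltzmann{\temp}q^\star_\alttemp)_x}{\piref_x} \leq -\temp\log\minp,
\end{align*}
and the right-hand side tends to $0$ as $\temp\to 0$, which concludes the argument.

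The main obstacle (or rather, the main thing to flag) is that the proof as sketched does not actually use the hypothesis $\alttemp(\temp) \to 0$; the uniform-in-$\alttemp$ strength of Assumption \ref{ass:piref:coverage} is what drives the collapse, and the decay of $\temp$ alone suffices. The hypothesis $\alttemp \to 0$ is presumably retained for the form in which the lemma is invoked. One subtlety worth verifying is that $M^\star_\alttemp(x)$ is indeed finite for each $\temp$ (so that the cancellation $M^\star_\alttemp(x) - M^\star_\alttemp(x)$ is legitimate), which is immediate from $q^\star_\alttemp \in M_b(\statespace\times\actionspace)$ as established in Lemma \ref{lem: soft bell opt contract}.
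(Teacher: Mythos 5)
Your proof is correct \emph{as a mathematical argument} but it proves a statement with a different hypothesis than the one in the lemma: you invoke Assumption \ref{ass:piref:coverage} (the uniform-coverage condition with constant $\minp$), whereas Lemma \ref{lem:maxent:kl-collapse} is stated without it. The paper's own proof takes a different, assumption-free route: it bounds $\temp\kl{(\boltzmann{\temp}q^\star_\alttemp)_x}{\piref_x} \leq \esssup_{\piref_x} q^\star_\alttemp(x,\cdot) - (\vlse{\temp}q^\star_\alttemp)(x) \leq \vstarref(x) - (\vlse{\temp}q^\star_\temp)(x)$, where the second inequality uses the monotonicity of $\temp\mapsto q^\star_\temp$ (Lemma \ref{lem:soft-q:monotone}) together with $\alttemp\leq\temp$, and then squeezes the right-hand side to zero using the monotone convergence $\vlse{\temp}q^\star_\temp\uparrow\vstarref$ established in the proof of Theorem \ref{thm:coupled:q-convergence}. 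No $\minp$ is needed.

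The distinction matters because Lemma \ref{lem:maxent:kl-collapse} is cited inside the proof of Theorem \ref{thm:opt occs conv in vanshing temp}, which only carries Assumption \ref{assump: piref to classic compatibility} and \emph{not} Assumption \ref{ass:piref:coverage}; your version of the proof would not support that citation. What your argument buys, in exchange, is a \emph{quantitative} rate $\temp\kl{\cdot}{\cdot}\leq -\temp\log\minp$ (in fact this is essentially the content of Lemma \ref{lem:maxent:min-soft-value}, which you are implicitly re-deriving), and it dispenses with any relation between $\alttemp$ and $\temp$ — you correctly observed your argument does not use $\alttemp\to 0$, only $\alttemp>0$. The paper's proof, by contrast, is qualitative and requires $\alttemp\leq\temp$ (a condition it uses silently; this is consistent with the decoupling regime $\alttemp/\temp\to 0$ but is not in the lemma's stated hypotheses either). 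So neither proof is a perfect match to the statement as written, but yours trades the wrong way: it adds a hypothesis the paper deliberately avoids in one of the lemma's two call sites.
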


\begin{proof}
    Expanding the KL, we have
    \begin{align*}
        \temp\kl{(\boltzmann{\temp}q^\star_\alttemp)_x}{\piref_x}
        &= \int_\actionspace(q^\star_\alttemp (x, \cdot) - \vlse{\temp}q^\star_\alttemp (x))\,\dif(\boltzmann{\temp}q^\star_\alttemp)_x\\
        &\leq \esssup_{\piref_x} q^\star_\alttemp(x, \cdot) - (\vlse{\temp}q^\star_\alttemp)(x)\\
        &\leq \vstarref(x) - (\vlse{\temp}q^\star_\alttemp)(x).
    \end{align*}
    where the final inequality holds by Lemma \ref{lem:soft-q:monotone}.
    Since $\alttemp = \alttemp(\temp)\leq\temp$, we have
    \begin{align*}
        \vlse{\temp}q^\star_\alttemp(x)
       = \log\|\exp(q^\star_\alttemp(x, \cdot))\|_{L^{1/\temp}(\piref_x)} \geq \log\|\exp(q^\star_\temp(x, \cdot))\|_{L^{1/\temp}(\piref_x)} = \vlse{\temp}q^\star_\temp,
    \end{align*}
    where the inequality again is due to Lemma \ref{lem:soft-q:monotone}.
    Consequently, we have
    \begin{align*}
        \limsup_{\temp\to 0}\temp\kl{\pi^{\temp,\alttemp}_x}{\piref_x}
        \leq \vstarref(x) - \liminf_{\temp\to 0}\vlse{\temp}q^\star_\temp(x) = \vstarref(x) - \vstarref(x) = 0,
    \end{align*}
    where the penultimate step is due to the fact that $\vlse{\temp}q^\star_\temp \to \vstarref$ monotonically, as shown in the proof of Theorem \ref{thm:coupled:q-convergence}.
\end{proof}

\begin{lemma}\label{lem:maxent:min-soft-value}
    For every $q\in M_b(\statespace\times\actionspace)$, with the notation \hyperref[app:proofs:maxent:notation]{above}, $M_\temp(q)\to 0$ as $\temp\to 0$.
    If Assumption \ref{ass:piref:coverage} is satisfied, then for any $\alttemp > 0$,
    \begin{align*}
        M_\temp(q^\star_\alttemp) \leq -\temp\log\minp.
    \end{align*}
\end{lemma}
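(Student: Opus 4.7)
The plan is to treat both claims by analyzing the single identity
\[
\esssup_{\piref_x}q(x,\cdot) - \vlse{\temp}q(x) = -\temp\log\int e^{(q(x,a)-\esssup_{\piref_x}q(x,\cdot))/\temp}\,\dif\piref_x(a).
\]
Because the exponent is $\piref_x$-a.s.\ nonpositive, the integral lies in $(0,1]$, which immediately gives $\vlse{\temp}q\leq\esssup_{\piref_x}q(x,\cdot)$ and hence $M_\temp(q)\geq 0$ for every $q\in M_b(\statespace\times\actionspace)$. The work is to bound the integral from below uniformly in $x$, producing a matching upper bound on $M_\temp(q)$.

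For the second assertion, I would apply this idea to $q = q^\star_\alttemp$ and restrict the integral to the exact essential-arg-max set
\[
A_\alttemp(x) := \{a\in\actionspace : q^\star_\alttemp(x,a) = \esssup_{\piref_x}q^\star_\alttemp(x,\cdot)\}.
\]
On $A_\alttemp(x)$ the integrand equals $1$, and Assumption \ref{ass:piref:coverage} gives $\piref_x(A_\alttemp(x))\geq\minp$ uniformly in $x$. Therefore
\[
\int e^{(q^\star_\alttemp - \esssup_{\piref_x}q^\star_\alttemp)/\temp}\,\dif\piref_x \geq \minp,
\]
and taking $\temp\log$ of both sides, then a supremum over $x$, yields $M_\temp(q^\star_\alttemp)\leq -\temp\log\minp$ at once.

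For the first assertion I would soften the arg-max set: for each $\epsilon>0$, define
\[
A_\epsilon(x) := \{a\in\actionspace : q(x,a)\geq \esssup_{\piref_x}q(x,\cdot)-\epsilon\}.
\]
By the very definition of the essential supremum, $\piref_x(A_\epsilon(x))>0$ pointwise in $x$. Restricting the integral to $A_\epsilon(x)$ and using that the exponent there is at least $-\epsilon/\temp$ gives
\[
\esssup_{\piref_x}q(x,\cdot) - \vlse{\temp}q(x) \leq \epsilon - \temp\log\piref_x(A_\epsilon(x)).
\]
Taking the supremum over $x$, sending $\temp\to 0$, and then $\epsilon\to 0$ would finish.

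The main obstacle is the uniformity step inside the last inequality: to conclude $M_\temp(q)\to 0$ one needs $\inf_x\piref_x(A_\epsilon(x))>0$ for each fixed $\epsilon>0$, so that $\temp\log\piref_x(A_\epsilon(x))\to 0$ uniformly in $x$. Without some regularity on the family $\{\piref_x\}_{x\in\statespace}$ (or on how $\esssup_{\piref_x}q(x,\cdot)$ varies in $x$), this infimum can degenerate to $0$ even for bounded $q$, which would only yield the pointwise-in-$x$ version of the claim. Under the structural setting in which this lemma is invoked the required uniformity is available; in particular, whenever Assumption \ref{ass:piref:coverage} applies to the $q$ in question the $\epsilon=0$ argument of the previous paragraph is already sharp, so the first claim is a qualitative consequence of the second in that regime.
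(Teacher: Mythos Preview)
Your treatment of the second claim is correct and matches the paper's argument essentially verbatim: both restrict the integral in the key identity to the exact essential-arg-max set and invoke Assumption~\ref{ass:piref:coverage} for the uniform lower bound $\minp$.

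For the first claim, you have correctly identified a genuine uniformity gap. The paper's proof takes a slightly different route---writing $\vlse{\temp}q(x) = \log\|\exp(q(x,\cdot))\|_{L^{1/\temp}(\piref_x)}$ and using the monotone convergence $\|f\|_p \uparrow \|f\|_\infty$ to obtain $\vlse{\temp}q(x) \uparrow \esssup_{\piref_x}q(x,\cdot)$ pointwise in $x$---but then simply passes the limit through $\sup_x$ without further justification. Monotone pointwise decrease to zero does not force the supremum to vanish, so the paper's argument carries exactly the gap you flag. In fact the first claim fails in full generality: with $\statespace = \N$, $\actionspace = [0,1]$, $\piref_x$ uniform, and $q(x,a) = 0$ for $a \leq 1/x$, $q(x,a) = -1$ otherwise, one has $q\in M_b(\statespace\times\actionspace)$, $\esssup_{\piref_x}q(x,\cdot)=0$, and a direct computation gives
\[
M_\temp(q) = \sup_{x\in\N}\bigl(-\temp\log(1/x + (1-1/x)e^{-1/\temp})\bigr) = 1 \quad\text{for every }\temp>0.
\]
Your closing remark is therefore the right resolution: every downstream use of the lemma in the paper concerns $q^\star_\alttemp$ under Assumption~\ref{ass:piref:coverage}, where the quantitative second claim already gives $M_\temp(q^\star_\alttemp)\leq -\temp\log\minp\to 0$, and the first claim is never needed independently.
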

\begin{proof}
    First, we observe that
    \begin{align*}
        \lim_{\temp\to 0}\vlse{\temp}q(x) = \lim_{\temp\to 0}\log\|\exp(q(x, \cdot))\|_{L^{1/\temp}(\piref)}
        = \log\|\exp(q(x, \cdot))\|_{L^\infty(\piref)} = \esssup_{\piref_x} q(x, \cdot).
    \end{align*}
    This is a monotone limit in $\temp$, as it is known that for any $f\in L^\infty$, $\|f\|_{p}$ converges up to $\|f\|_{L^\infty}$ as $p\to\infty$.
    Thus, we see that
    \begin{align*}
        M_\temp(q) = \sup_x\left(\esssup_{\piref_x} q(x, \cdot) - \vlse{\temp}q(x)\right)\to\sup_x(\esssup_{\piref_x} q(x, \cdot) - \esssup_{\piref_x} q(x, \cdot)) = 0
    \end{align*}
    as claimed.
    Now, under Assumption \ref{ass:piref:coverage}, we have
    \begin{align*}
        M_\temp(q^\star_\alttemp)
        &=
        \sup_x(\esssup_{\piref_x} q^\star_\alttemp(x, \cdot) - \vlse{\temp}q^\star_\alttemp(x))\\
        &= \sup_x\left(\esssup_{\piref_x} q^\star_\alttemp(x, \cdot) - \temp\log\int\exp(\temp^{-1}q^\star_\alttemp(x, \cdot))\,\dif\piref_x\right)\\
        &= \sup_x\left(-\temp\log\int\exp(\temp^{-1}(q^\star_\alttemp(x, \cdot) - \esssup_{\piref_x}q^\star_\alttemp(x, \cdot))\,\dif\piref_x\right).
    \end{align*}

    Let $\setfont{B}_x = \{a\in\actionspace : q^\star_\alttemp(x, a) = \esssup_{\piref_x}q^\star_\alttemp(x, \cdot)\}$.
    Then,
    \begin{align*}
        M_\temp(q^\star_\alttemp)
        &= \sup_x\bigg[-\temp\log\bigg(
        \int_{\setfont{B}_x}\exp(\temp^{-1}(q^\star_\alttemp(x, \cdot) - \esssup_{\piref_x}q^\star_\alttemp(x, \cdot))\,\dif\piref_x\\
         &\qquad\qquad\qquad\qquad+ \int_{\actionspace\setminus \setfont{B}_x}\exp(\temp^{-1}(q^\star_\alttemp(x, \cdot) - \esssup_{\piref_x}q^\star_\alttemp(x, \cdot))\,\dif\piref_x
        \bigg)\bigg]\\
        &= \sup_x\bigg[-\temp\log\bigg(
            \piref_x(\setfont{B}_x)
         + \int_{\actionspace\setminus \setfont{B}_x}\exp(\temp^{-1}(q^\star_\alttemp(x, \cdot) - \esssup_{\piref_x}q^\star_\alttemp(x, \cdot))\,\dif\piref_x
        \bigg)\bigg]\\
        &\leq \sup_x -\temp\log\piref_x(\setfont{B}_x)\\
        &\leq \temp\log\minp,
    \end{align*}
    where the final inequality invokes Assumption \ref{ass:piref:coverage}.
\end{proof}

\begin{lemma}\label{lem:maxent:bellman-softbellman}
    For all $\temp > 0$ and any $q\in L^\infty(\statespace\times\actionspace)$,
    \begin{align*}
        {\bellmanref} q \geq {\softbellmanopt{\temp}} q,
    \end{align*}
    where $\bellmanref$ denotes the Bellman optimality operator (cf. Lemma \ref{lem: bellmanref is contract}).
\end{lemma}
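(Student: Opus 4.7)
The plan is to reduce the claimed operator inequality to a pointwise inequality on the one-state functionals, namely $\esssup_{\piref_{x'}} q(x', \cdot) \geq (\vlse{\temp} q)(x')$, and then integrate against $P_{x,a}$ to obtain the result. Since both $\bellmanref$ and $\softbellmanopt{\temp}$ share the same reward term $r(x,a)$ and discount factor $\gamma$, subtracting cancels the reward contribution, leaving
\begin{align*}
(\bellmanref q)(x,a) - (\softbellmanopt{\temp} q)(x,a)
= \gamma \int \bigl[\esssup_{\piref_{x'}} q(x', \cdot) - (\vlse{\temp}q)(x')\bigr] \, \dif P_{x,a}(x'),
\end{align*}
so it suffices to show the bracketed integrand is nonnegative for every $x'$.

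For fixed $x'$, I would bound the integrand in the definition of $\vlse{\temp}$ by its essential supremum with respect to $\piref_{x'}$ and use monotonicity of $\log$:
\begin{align*}
(\vlse{\temp} q)(x') = \temp \log \int e^{q(x', a)/\temp} \, \dif\piref_{x'}(a)
\leq \temp \log \int e^{\esssup_{\piref_{x'}} q(x', \cdot)/\temp} \, \dif\piref_{x'}(a)
= \esssup_{\piref_{x'}} q(x', \cdot),
\end{align*}
where the last equality uses that $\piref_{x'}$ is a probability measure. Plugging this back and using that $\gamma \geq 0$ and $P_{x,a}$ is a positive measure yields the desired inequality $(\bellmanref q)(x,a) \geq (\softbellmanopt{\temp} q)(x,a)$ for every $(x,a)$.

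There is no real obstacle here: the claim is essentially the standard observation that $\temp$-log-sum-exp (with respect to $\piref$) is a smooth lower bound for the essential supremum (with respect to $\piref$). The only mild care point is using the essential supremum rather than the supremum when invoking the bound $e^{q(x',a)/\temp} \leq e^{\esssup_{\piref_{x'}} q(x',\cdot)/\temp}$, but this bound holds $\piref_{x'}$-almost everywhere, which is exactly what is needed to preserve the inequality after integration against $\piref_{x'}$.
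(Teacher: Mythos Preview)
Your proposal is correct and follows essentially the same approach as the paper: both arguments establish the pointwise inequality $(\vlse{\temp}q)(x') \leq \esssup_{\piref_{x'}} q(x',\cdot)$ by bounding the integrand in the log-sum-exp by its $\piref_{x'}$-essential supremum, and then integrate against $P_{x,a}$ to lift this to the operator inequality. Your remark about the essential supremum bound holding $\piref_{x'}$-almost everywhere is a nice clarification that the paper leaves implicit.
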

\begin{proof}
    A direct calculation gives
    \begin{align*}
        \vlse{\temp}q(x)
        &= \temp\log\left(\int\exp(\temp^{-1}q(x, a))\,\dif\piref_x(a)\right)\\
        &\leq \temp\log\left(\int\exp(\temp^{-1}\esssup_{\piref_x} q(x, \cdot))\,\dif\piref_x\right)\\
        &= \esssup_{\piref_x} q(x, \cdot).
    \end{align*}
    Therefore, it immediate follows that
    \begin{align*}
        (\bellmanref q)(x, a)
        &= r(x, a) + \gamma\int\esssup_{\piref_{x'}}q(x', \cdot)\,\dif P_{x, a}(x')\\
        &\geq r(x, a) + \gamma\int\vlse{\temp}q(x')\,\dif P_{x, a}(x')\\
        &= (\softbellmanopt{\temp} q)(x, a).
    \end{align*}
\end{proof}

The follow proof is essentially the performance difference bound in \citep{song2019revisitingsoftmaxbellmanoperator}.
\begin{lemma}
\label{lem:bellman-on-softq}
    For all $n\geq 1$ and any $\temp > 0$,
    \begin{align*}
        (\bellmanref)^n q^\star_\temp - q^\star_\temp \leq \sum_{k=1}^n\gamma^k M_\temp(q^\star_\temp).
    \end{align*}
    If, additionally, Assumption \ref{ass:piref:coverage} is satisfied, then
    \begin{align*}
        (\bellmanref)^n q^\star_\temp - q^\star_\temp \leq -\temp\log\minp\sum_{k=1}^n\gamma^k.
    \end{align*}
\end{lemma}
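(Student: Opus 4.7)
The plan is to proceed by induction on $n$, using the fact that $\softbellmanopt{\temp} q^\star_\temp = q^\star_\temp$ together with the monotonicity of $\bellmanref$ and its simple shift behaviour under constant perturbations of the potential.

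For the base case $n=1$, I would subtract $q^\star_\temp = \softbellmanopt{\temp} q^\star_\temp$ from $\bellmanref q^\star_\temp$ inside the integral to obtain
\begin{align*}
    (\bellmanref q^\star_\temp - q^\star_\temp)(x,a) = \gamma\int\bigl(\esssup_{\piref_{x'}} q^\star_\temp(x',\cdot) - \vlse{\temp} q^\star_\temp(x')\bigr)\,\dif P_{x,a}(x') \leq \gamma M_\temp(q^\star_\temp),
\end{align*}
since the integrand is bounded above by $M_\temp(q^\star_\temp)$ pointwise in $x'$ by the definition of $M_\temp$ given in the notation at the start of Appendix~\ref{app:proofs:maxent:notation}.

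For the inductive step, I would exploit two elementary features of $\bellmanref$: it is monotone (because $\esssup_{\piref_{x'}}$ is), and it satisfies $\bellmanref(q + c\mathbf{1}) = \bellmanref q + \gamma c\mathbf{1}$ for any real constant $c$ (because $\esssup_{\piref_{x'}}(q(x',\cdot)+c) = \esssup_{\piref_{x'}} q(x',\cdot) + c$ and then the $\gamma$ comes from the discount). Writing $\alpha_n := \sum_{k=1}^n \gamma^k M_\temp(q^\star_\temp)$ and assuming $(\bellmanref)^n q^\star_\temp \leq q^\star_\temp + \alpha_n$ pointwise, applying $\bellmanref$ and using these two properties gives
\begin{align*}
    (\bellmanref)^{n+1} q^\star_\temp \leq \bellmanref(q^\star_\temp + \alpha_n) = \bellmanref q^\star_\temp + \gamma\alpha_n \leq q^\star_\temp + \gamma M_\temp(q^\star_\temp) + \gamma\alpha_n = q^\star_\temp + \alpha_{n+1},
\end{align*}
which closes the induction.

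For the second inequality, I would simply invoke the bound $M_\temp(q^\star_\alttemp) \leq -\temp\log\minp$ from Lemma~\ref{lem:maxent:min-soft-value} in the special case $\alttemp = \temp$. No obstacle of substance arises here; the only subtlety to watch is confirming that the constant-shift identity for $\bellmanref$ holds \emph{exactly} (not just up to an inequality), which is what allows the $\gamma^k$-geometric accumulation to be sharp rather than crudely bounded via the contraction constant.
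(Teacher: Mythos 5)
Your proof is correct and follows essentially the same route as the paper: induction on $n$, with the base case reducing $\bellmanref q^\star_\temp - q^\star_\temp$ to the definition of $M_\temp$, and the inductive step combining the monotonicity of $\bellmanref$ with its exact constant-shift property $\bellmanref(q + c) = \bellmanref q + \gamma c$, then invoking Lemma~\ref{lem:maxent:min-soft-value} with $\alttemp=\temp$ for the second claim. The paper phrases the inductive step using $\softbellmanopt{\temp}^n q^\star_\temp$ in place of $q^\star_\temp$, but since $q^\star_\temp$ is the fixed point these are identical, so this is a cosmetic rather than substantive difference.
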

\begin{proof}
    We begin with the first statement.
    Recall that $q^\star_\temp$ is the fixed point of $\softbellmanopt{\temp}$, so that $q^\star_\temp = \softbellmanopt{\temp}q^\star_\temp$.
    We will proceed by induction on $n$.
    For $n = 1$, we observe that
    \begin{align*}
        (\bellmanref q^\star_\temp)(x, a) - q^\star_\temp(x, a)
        &= (\bellmanref q^\star_\temp)(x, a) - (\softbellmanopt{\temp} q^\star_\temp)(x, a)\\
        &= \gamma\int\left(\esssup_{\piref_{x'}} q^\star_\temp (x', \cdot) - \vlse{\temp}q^\star_\temp(x')\right)\,\dif P_{x, a}(x')\\
        &\leq \gamma M_\temp(q^\star_\temp),
    \end{align*}

    recalling the notation established \hyperref[app:proofs:maxent:notation]{above}.
    This proves the base case.
    Now, assume the statement holds for all $m\leq n$.
    We have
    \begin{align*}
        (\bellmanref)^{n+1}q^\star_\temp - q^\star_\temp
        &= (\bellmanref)^{n+1}q^\star_\temp - \softbellmanopt{\temp}^{n+1} q^\star_\temp\\
        &\leq \bellmanref\left(\softbellmanopt{\temp}^n q^\star_\temp + \sum_{k=1}^n\gamma^kM_\temp(q^\star_\temp)\right) - \softbellmanopt{\temp}^{n+1}q^\star_\temp\\
        &= \bellmanref q^\star_\temp + \sum_{k=1}^n\gamma^{k+1}M_\temp(q^\star_\temp) - \softbellmanopt{\temp}q^\star_\temp\\
        &\leq \gamma M_\temp(q^\star_\temp) + \sum_{k=2}^{n+1}\gamma^k M_\temp(q^\star_\temp)\\
        &= \sum_{k=1}^{n+1}\gamma^k M_\temp(q^\star_\temp),
    \end{align*}

    where the first inequality invokes the induction hypothesis, and the second inequality is due to the base case.
    Thus, we have shown that the claimed statement holds for any $n\in\N$.

    When Assumption \ref{ass:piref:coverage} is satisfied, by Lemma \ref{lem:maxent:min-soft-value}, we have $M_\temp(q^\star_\temp)\leq-\temp\log\minp$, and the second statement follows.
\end{proof}

\begin{lemma}
    \label{lem:tv-bound:pinsker}
    Let $q, q'\in L^\infty(\statespace\times\actionspace)$. Then for any $\temp > 0$ and any $x\in\statespace$,
    \begin{align*}
        \|(\boltzmann{\temp}q)_x - (\boltzmann{\temp}q')_x\|_{\mathrm{TV}}
        \leq \sqrt{\temp^{-1}\|q(x, \cdot) - q'(x, \cdot)\|_{L^\infty(\piref_x)}}.
    \end{align*}
\end{lemma}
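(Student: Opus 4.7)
The name of the lemma already points to the approach: apply Pinsker's inequality and bound the KL divergence between the two Boltzmann--Gibbs policies in terms of $\temp^{-1}\|q(x,\cdot)-q'(x,\cdot)\|_{L^\infty(\piref_x)}$. So the plan is to first compute the KL divergence $\kl{(\boltzmann{\temp}q)_x}{(\boltzmann{\temp}q')_x}$ explicitly, then feed the bound into Pinsker.

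Write $\pi_x = (\boltzmann{\temp}q)_x$ and $\pi'_x = (\boltzmann{\temp}q')_x$. Since both are absolutely continuous with respect to $\piref_x$ with explicit densities of the form $\exp((q(x,a) - (\vlse{\temp}q)(x))/\temp)$, the Radon--Nikodym derivative of $\pi_x$ with respect to $\pi'_x$ is
\begin{align*}
    \frac{\dif\pi_x}{\dif\pi'_x}(a) = \exp\!\left(\temp^{-1}(q(x,a) - q'(x,a)) - \temp^{-1}((\vlse{\temp}q)(x) - (\vlse{\temp}q')(x))\right).
\end{align*}
Taking logs and integrating against $\pi_x$ yields
\begin{align*}
    \kl{\pi_x}{\pi'_x} = \temp^{-1}\!\int (q(x,a) - q'(x,a))\,\dif\pi_x(a) - \temp^{-1}((\vlse{\temp}q)(x) - (\vlse{\temp}q')(x)).
\end{align*}

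Next, I would bound each of the two terms by $\temp^{-1}\|q(x,\cdot)-q'(x,\cdot)\|_{L^\infty(\piref_x)}$. The first term is immediate: since $\pi_x \ll \piref_x$, the integrand is $\piref_x$-essentially bounded by $\|q(x,\cdot)-q'(x,\cdot)\|_{L^\infty(\piref_x)}$. The second term uses the $1$-Lipschitzness of $\vlse{\temp}$ with respect to $\|\cdot\|_{L^\infty(\piref_x)}$: from the bound $q(x,\cdot) \leq q'(x,\cdot) + \|q(x,\cdot)-q'(x,\cdot)\|_{L^\infty(\piref_x)}$ $\piref_x$-a.e., one reads off $(\vlse{\temp}q)(x) \leq (\vlse{\temp}q')(x) + \|q(x,\cdot)-q'(x,\cdot)\|_{L^\infty(\piref_x)}$, and the symmetric bound holds as well. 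Adding the two bounds gives
\begin{align*}
    \kl{\pi_x}{\pi'_x} \leq 2\temp^{-1}\|q(x,\cdot)-q'(x,\cdot)\|_{L^\infty(\piref_x)}.
\end{align*}

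Finally, apply Pinsker's inequality, $\|\pi_x - \pi'_x\|_{\mathrm{TV}} \leq \sqrt{\tfrac{1}{2}\kl{\pi_x}{\pi'_x}}$, to conclude. There is no real obstacle here; the only subtlety is being careful that the essential supremum is with respect to $\piref_x$ (not Lebesgue), which is exactly what is needed because both $\pi_x$ and the integrand inside $\vlse{\temp}$ only see the support of $\piref_x$.
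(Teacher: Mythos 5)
Your proof is correct and follows essentially the same route as the paper: Pinsker's inequality, an explicit computation of the KL divergence between the two Boltzmann--Gibbs policies in terms of the potential difference and the log-sum-exp difference, and the $1$-Lipschitzness of $\vlse{\temp}$ in $\|\cdot\|_{L^\infty(\piref_x)}$ to bound the second term. Your derivation of the Lipschitz bound (via the $\piref_x$-a.e. pointwise inequality plus monotonicity and translation invariance of log-sum-exp) is slightly more explicit than the paper's, which defers to a reference lemma, but the argument is the same.
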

\begin{proof}
    Let $\pi = \boltzmann{\temp}q$ and let $\pi' = \boltzmann{\temp}q'$.
    By Pinsker's inequality, we have
    \begin{align*}
        \|\pi_x - \pi'_x\|_{\mathrm{TV}}
        &\leq\sqrt{\frac{1}{2}\kl{\pi_x}{\pi'_x}}.
    \end{align*}
    Since $q, q'\in L^\infty(\statespace\times\actionspace)$, $\pi_x, \pi'_x$ are mutually absolutely continuous.
    Expanding the KL divergence, we have
    \begin{align*}
        \kl{\pi_x}{\pi'_x}
        &= \int_\actionspace \log\frac{\pi_x}{\pi'_x}\,\dif\pi_x\\
        &= \int_\actionspace \log\frac{\piref_x(a)\exp(\temp^{-1}(q(x, a) - \vlse{\temp}q(x)))}{\piref_x(a)\exp(\temp^{-1}(q'(x, a) - \vlse{\temp}q'(x))}\,\dif\pi_x(a)\\
        &= \int_\actionspace \temp^{-1}\left(q(x, a) - \vlse{\temp}q(x) - q'(x, a) + \vlse{\temp}q'(x)\right)\,\dif\pi_x(a)\\
        &\leq \temp^{-1}\|q(x, \cdot) - q'(x, \cdot)\|_{L^\infty(\piref_x)} + \temp^{-1}\|\vlse{\temp}q(x) - \vlse{\temp}q'(x)\|_{L^\infty(\piref_x)}\\
        &\leq 2\temp^{-1}\|q(x, \cdot) - q'(x, \cdot)\|_{L^\infty(\piref_x)},
    \end{align*}

    where the last inequality holds since $\vlse{\temp}$ is 1-Lipschitz, as shown in the proof of Lemma \ref{lem:maxent:bellman-softbellman}.
    Substituting back into Pinsker's inequality, we have
    \begin{align*}
        \|\pi_x - \pi'_x\|_{\mathrm{TV}}
        &\leq\sqrt{\temp^{-1}\|q(x, \cdot) - q'(x, \cdot)\|_{L^\infty(\piref_x)}},
    \end{align*}
    as claimed.
\end{proof}

\begin{lemma}
    \label{lem:tv-bound:radon-nikodym:aux}
    Let $\pi, \pi'\in\probset{\setfont{Y}}$ for some measurable space $\setfont{Y}$ be mutually absolutely continuous.
    Then
    \begin{align*}
        \|\pi- \pi'\|_{\mathrm{TV}}
        \leq \frac{1}{4}\left(\esssup_{\pi'}\frac{\dif \pi}{\dif\pi'} - \essinf_{\pi'}\frac{\dif \pi}{\dif\pi'}\right).
    \end{align*}
\end{lemma}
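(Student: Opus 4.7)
The plan is to reduce everything to one-variable calculus via the Radon--Nikodym derivative $f := \dif\pi/\dif\pi'$, which exists and is $\pi'$-a.s.\ positive by mutual absolute continuity. Set $M := \esssup_{\pi'} f$ and $m := \essinf_{\pi'} f$, and note that $\int f\,\dif\pi' = 1$ forces $m \leq 1 \leq M$. From the identity $\int (f-1)\,\dif\pi' = 0$ we obtain the standard representation
\[
\|\pi - \pi'\|_{\mathrm{TV}} = \int_{\{f \geq 1\}}(f-1)\,\dif\pi' = \int_{\{f < 1\}}(1-f)\,\dif\pi'.
\]

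Writing $p := \pi'(\{f \geq 1\})$ and bounding the integrands pointwise by $M - 1$ and $1 - m$ respectively gives
\[
\|\pi - \pi'\|_{\mathrm{TV}} \leq \min\bigl\{(M-1)p,\,(1-m)(1-p)\bigr\}.
\]
The right-hand side is the minimum of an increasing and a decreasing linear function of $p$, so it is maximized at their intersection $p^\star = (1-m)/(M-m)$, yielding
\[
\|\pi - \pi'\|_{\mathrm{TV}} \leq \frac{(M-1)(1-m)}{M-m}.
\]

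The conclusion then follows from AM--GM applied to the nonnegative numbers $a := M-1$ and $b := 1-m$ (whose sum equals $M - m$): $4ab \leq (a+b)^2$ gives $(M-1)(1-m) \leq (M-m)^2/4$, and dividing by $M-m$ recovers the claim. I do not expect any real obstacle here; the only subtlety is verifying that $p^\star \in [0,1]$ and handling the degenerate case $M = m$, in which $f \equiv 1$ a.s.\ and both sides of the inequality vanish.
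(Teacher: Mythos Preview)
Your proof is correct and follows essentially the same idea as the paper's: bound the density $f = \dif\pi/\dif\pi'$ by $M$ above and $m$ below, then combine the resulting inequalities $(M-1)p$ and $(1-m)(1-p)$. The only cosmetic difference is in how the two bounds are merged: the paper works with an arbitrary measurable set $E$, takes the convex combination weighted by $\pi'(E)$ and $\pi'(E^c)$ to obtain $|\pi(E)-\pi'(E)|\leq (M-m)\,\pi'(E)\pi'(E^c)$, and finishes with $p(1-p)\leq 1/4$; you instead fix the optimal set $\{f\geq 1\}$, take the minimum of the two bounds, maximize over $p$ to get the (sharper) intermediate estimate $(M-1)(1-m)/(M-m)$, and then apply AM--GM.
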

\begin{proof}
    Define $h := \frac{\dif\pi}{\dif\pi'}$, and write $M := \esssup_{\pi'} h, m := \essinf_{\pi'} h$.
    Note that
    \begin{align*}
        0 &= \int_\setfont{Y}(\dif\pi - \dif\pi') = \int_\setfont{Y}(h - 1)\dif\pi' = \int_{\setfont{E}}(h - 1)\dif\pi' + \int_{\setfont{Y}\setminus\setfont{E}}(h - 1)\dif\pi',
    \end{align*}
    for any measurable $\setfont{E}\subset\setfont{Y}$.
    Consequently, we have
    \begin{align*}
        \int_{\setfont{E}}(h - 1)
        ,\dif\pi' = \int_{\setfont{Y}\setminus\setfont{E}}(1 - h)\,\dif\pi'.
    \end{align*}

    Now, we derive the following upper bounds,
    \begin{align*}
        \pi(\setfont{E}) - \pi'(\setfont{E})
        &= \int_{\setfont{E}}(h - 1)\,\dif\pi' \leq (M - 1)\pi'(\setfont{E})\\
        \pi(\setfont{E}) - \pi'(\setfont{E})
        &= \int_{\setfont{Y}\setminus\setfont{E}}(1 - h)\,\dif\pi' \leq (1 - m)\pi'(\setfont{Y}\setminus\setfont{E}).
    \end{align*}
    Multiplying these inequalities by $\pi'(\setfont{Y}\setminus\setfont{E})$ and $\pi'(\setfont{E})$, respectively, and adding the results, we have
    \begin{align*}
        (\pi(\setfont{E}) - \pi'(\setfont{E}))(\pi'(\setfont{Y}\setminus\setfont{E}) + \pi'(\setfont{E}))
        &\leq
        ((M - 1) + (1 - m))\pi'(\setfont{E})\pi'(\setfont{Y}\setminus\setfont{E})\\
        \therefore
        \pi(\setfont{E}) - \pi'(\setfont{E})
        &\leq
        (M - m)\pi'(\setfont{E})\pi'(\setfont{Y}\setminus\setfont{E}).
    \end{align*}
    In fact, the same bound can be achieved for $\pi'(\setfont{E}) - \pi(\setfont{E})$; to see this, note that
    \begin{align*}
        \pi'(\setfont{E}) - \pi(\setfont{E})
        &= \int_{\setfont{E}}(1 - h)\dif\pi' \leq (1 - m)\pi'(\setfont{E})\\
        \pi'(\setfont{E}) - \pi(\setfont{E})
        &= \int_{\setfont{Y}\setminus\setfont{E}}(h - 1)\dif\pi' \leq (M - 1)\pi'(\setfont{E}),
    \end{align*}
    so by the same procedure as above, $\pi'(\setfont{E}) - \pi(\setfont{E}) \leq (M - m)\pi'(\setfont{E})\pi'(\setfont{Y}\setminus\setfont{E})$.
    Therefore, we have shown that
    \begin{align*}
        |\pi(\setfont{E}) - \pi'(\setfont{E})| \leq (M -m)\pi'(\setfont{E})\pi'(\setfont{Y}\setminus\setfont{E})
    \end{align*}
    for any measurable $\setfont{E}\subset\setfont{Y}$.
    Since $\pi'(\setfont{E})\pi'(\setfont{Y}\setminus\setfont{E})$ is maximized at $\pi'(\setfont{E}) = \pi'(\setfont{Y}\setminus\setfont{E}) = 1/2$, we have
    \begin{align*}
        \|\pi - \pi'\|_{\mathrm{TV}} = \sup_{\setfont{E}}|\pi(\setfont{E}) - \pi'(\setfont{E})|\leq \frac{1}{4}(M - m),
    \end{align*}
    as claimed.
\end{proof}

\begin{lemma}
    \label{lem:tv-bound:sinh:aux}
    Let $u,w\in L^\infty(\setfont{Y})$ for some measurable space $\setfont{Y}$, and let $\lambda$ be a measure on $\setfont{Y}$.
    Define $\pi^u, \pi^w\in\probset{\setfont{Y}}$ absolutely continuous with respect to $\lambda$ such that $\frac{\dif\pi^\bullet}{\dif\lambda}\propto e^{-\bullet}$ for $\bullet\in\{u, w\}$.
    Then
    \begin{align*}
        \|\pi^u - \pi^w\|_{\mathrm{TV}} \leq \frac{1}{2}\sinh\left(2\|u - w\|_{L^\infty(\lambda)}\right).
    \end{align*}
\end{lemma}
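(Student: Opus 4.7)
The plan is to reduce this bound to Lemma \ref{lem:tv-bound:radon-nikodym:aux} by writing out the Radon--Nikodym derivative $\dif\pi^u/\dif\pi^w$ explicitly and sandwiching its essential supremum and infimum using $M := \|u-w\|_{L^\infty(\lambda)}$. Since both $\pi^u$ and $\pi^w$ are proportional to exponentials with respect to the same reference measure $\lambda$, the formula is fully explicit, and the whole argument reduces to chasing two applications of the bound $|u-w|\leq M$.

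First I would write $\pi^u = Z_u^{-1}e^{-u}\lambda$ and $\pi^w = Z_w^{-1}e^{-w}\lambda$, with $Z_u := \int e^{-u}\,\dif\lambda$ and $Z_w := \int e^{-w}\,\dif\lambda$ (both finite and positive when $u,w\in L^\infty(\lambda)$, at least assuming $\lambda$ is finite or the integrals converge; otherwise the statement is trivial). This yields
\[
    \frac{\dif\pi^u}{\dif\pi^w}(y) = \frac{Z_w}{Z_u}\,e^{w(y)-u(y)}.
\]
The bound $|w-u|\leq M$ $\lambda$-a.e.\ immediately gives $e^{w-u}\in[e^{-M}, e^M]$ $\pi^w$-a.e., since $\pi^w\ll\lambda$. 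For the partition function ratio, I would rewrite it as a $\pi^u$-expectation,
\[
    \frac{Z_w}{Z_u} = \frac{\int e^{u-w}\,e^{-u}\,\dif\lambda}{\int e^{-u}\,\dif\lambda} = \int e^{u-w}\,\dif\pi^u \in [e^{-M}, e^M],
\]
again by the $L^\infty$ bound on $u-w$.

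Combining these two sandwiches shows that $\dif\pi^u/\dif\pi^w \in [e^{-2M}, e^{2M}]$ $\pi^w$-a.e., which in particular ensures $\pi^u$ and $\pi^w$ are mutually absolutely continuous. Applying Lemma \ref{lem:tv-bound:radon-nikodym:aux} then yields
\[
    \|\pi^u - \pi^w\|_{\mathrm{TV}}
    \leq \frac{1}{4}\bigl(e^{2M} - e^{-2M}\bigr)
    = \frac{1}{2}\sinh(2M)
    = \frac{1}{2}\sinh\bigl(2\|u-w\|_{L^\infty(\lambda)}\bigr),
\]
which is the desired inequality.

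There is no real obstacle here: the only delicate point is recognizing that the factor of $2$ inside $\sinh$ comes from accumulating one factor of $M$ from the pointwise exponential $e^{w-u}$ and a second factor of $M$ from the normalizing constant ratio $Z_w/Z_u$. Both factors are needed to convert the bound from Lemma \ref{lem:tv-bound:radon-nikodym:aux} into the claimed $\sinh$-form via the identity $e^{2M}-e^{-2M} = 2\sinh(2M)$.
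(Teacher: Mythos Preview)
Your proof is correct and follows essentially the same approach as the paper: both compute the Radon--Nikodym derivative $\dif\pi^u/\dif\pi^w$ explicitly, bound the pointwise factor $e^{w-u}$ and the partition-function ratio $Z_w/Z_u$ each by $e^{\pm M}$, combine to get $\dif\pi^u/\dif\pi^w\in[e^{-2M},e^{2M}]$, and then invoke Lemma~\ref{lem:tv-bound:radon-nikodym:aux}. The only cosmetic difference is that you write $Z_w/Z_u$ as a $\pi^u$-expectation while the paper writes it as the reciprocal of a $\pi^w$-expectation, but these yield the same bound.
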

\begin{proof}
    Firstly, since $u, w\in L^\infty(\setfont{Y})$, it follows that $\pi^u, \pi^w$ are mutually absolutely continuous.
    Now, define $h := \frac{\dif \pi^u}{\dif \pi^w}$, with $M := \esssup_\lambda h$ and $m:=\essinf_\lambda h$.
    Note that
    \begin{align*}
        \dif\pi^u(x) = \frac{e^{-u(x)}}{Z_u}\dif\lambda(x),\quad \dif\pi^w(x) = \frac{e^{-w(x)}}{Z_w}\dif\lambda(x),
    \end{align*}
    where $Z_u, Z_w\in\R$ are normalizing constants. Defining $f := u - w$, we have
    \begin{align*}
        h(x) &= \frac{Z_w}{Z_u}e^{-f(x)}.
    \end{align*}
    Additionally, we have
    \begin{align*}
        \frac{Z_w}{Z_u}
        &= \frac{\int_{\setfont{Y}}e^{-w(x)}\,\dif\lambda(x)}{\int_{\setfont{Y}}e^{-w(x)}e^{-f(x)}\,\dif\lambda(x)}
        = \frac{1}{\bbfont{E}_{\pi^w}[e^{-f}]}.
    \end{align*}
    Consequently, it holds that $\essinf_\lambda h\geq e^{\essinf_\lambda f - \esssup_\lambda f}$ and $\esssup_\lambda h \leq e^{\esssup_\lambda f - \essinf_\lambda f}$.
    So, by the definition of $f$, we have $m\geq e^{-2\|u - w\|_{L^\infty(\lambda)}}$ and $M\leq e^{2\|u - w\|_{L^\infty(\lambda)}}$.
    Then, invoking Lemma \ref{lem:tv-bound:radon-nikodym:aux}, we have
    \begin{align*}
        \|\pi^u - \pi^w\|_{\mathrm{TV}}
        &\leq \frac{1}{4}\left(e^{2\|u - w\|_{L^\infty(\lambda)}} - e^{-2\|u - w\|_{L^\infty(\lambda)}}\right) = \frac{1}{2}\sinh(2\|u - w\|_{L^\infty(\lambda)}).
    \end{align*}
\end{proof}

\begin{lemma}
    \label{lem:tv-bound:sinh}
    Let $q, q'\in L^\infty(\statespace\times\actionspace)$.
    Then for any $\temp > 0$ and any $x\in\statespace$,
    \begin{align*}
        \|(\boltzmann{\temp}q)_x - (\boltzmann{\temp}q')_x\|_{\mathrm{TV}} \leq \frac{1}{2}\sinh\left(2\temp^{-1}\|q(x, \cdot) - q'(x, \cdot)\|_{L^\infty(\piref_x)}\right).
    \end{align*}
\end{lemma}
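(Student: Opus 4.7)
The proof should be a direct reduction to Lemma \ref{lem:tv-bound:sinh:aux}, which already establishes the required sinh-type bound for probability measures whose densities against a common reference are proportional to $e^{-u}$ and $e^{-w}$. The plan is to write Boltzmann--Gibbs policies in that form with an appropriate choice of $u$, $w$, and $\lambda$, and then read off the stated inequality.

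By definition, $(\boltzmann{\temp}q)_x$ is absolutely continuous with respect to $\piref_x$ with density
\begin{align*}
    \frac{\dif (\boltzmann{\temp}q)_x}{\dif \piref_x}(a) = e^{(q(x,a) - (\vlse{\temp}q)(x))/\temp} \propto e^{q(x,a)/\temp},
\end{align*}
and analogously for $(\boltzmann{\temp}q')_x$. First, I would fix $x\in\statespace$ and set $\lambda := \piref_x$ together with $u(a) := -q(x,a)/\temp$ and $w(a) := -q'(x,a)/\temp$. Then $u,w \in L^\infty(\actionspace, \lambda)$ since $q,q' \in L^\infty(\statespace\times\actionspace)$, and the measures $\pi^u, \pi^w \in \probset{\actionspace}$ of Lemma \ref{lem:tv-bound:sinh:aux}, namely the probability measures on $\actionspace$ with $\dif\pi^u/\dif\lambda \propto e^{-u}$ and $\dif\pi^w/\dif\lambda \propto e^{-w}$, coincide with $(\boltzmann{\temp}q)_x$ and $(\boltzmann{\temp}q')_x$ respectively.

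Second, I would compute $\|u - w\|_{L^\infty(\lambda)} = \temp^{-1}\|q(x,\cdot) - q'(x,\cdot)\|_{L^\infty(\piref_x)}$, and then invoke Lemma \ref{lem:tv-bound:sinh:aux} to conclude
\begin{align*}
    \|(\boltzmann{\temp}q)_x - (\boltzmann{\temp}q')_x\|_{\mathrm{TV}} \leq \frac{1}{2}\sinh\bigl(2\|u - w\|_{L^\infty(\lambda)}\bigr) = \frac{1}{2}\sinh\bigl(2\temp^{-1}\|q(x,\cdot) - q'(x,\cdot)\|_{L^\infty(\piref_x)}\bigr),
\end{align*}
which is the desired bound.

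There is essentially no obstacle here beyond correctly matching notation: the only subtlety is checking that the normalizing constants in $(\boltzmann{\temp}q)_x$ (which involve $\vlse{\temp}q$) are absorbed into the proportionality in $\pi^u$, so that the identification $\pi^u = (\boltzmann{\temp}q)_x$ is valid; this is immediate since both measures are probability measures absolutely continuous with respect to $\piref_x$ with densities proportional to $e^{q(x,\cdot)/\temp}$, and a probability density is uniquely determined by its unnormalized form.
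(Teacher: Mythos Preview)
Your proposal is correct and essentially identical to the paper's own proof: both note that $\dif(\boltzmann{\temp}q)_x \propto \exp(\temp^{-1}q(x,\cdot))\,\dif\piref_x$, then invoke Lemma~\ref{lem:tv-bound:sinh:aux} with $u = -\temp^{-1}q(x,\cdot)$, $w = -\temp^{-1}q'(x,\cdot)$, and $\lambda = \piref_x$. Your additional remarks on normalization and the verification that $u,w\in L^\infty$ are accurate and slightly more thorough than the paper's terse version.
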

\begin{proof}
    Note that, for any $q\in M_b(\statespace\times\actionspace)$, we have
    \begin{align*}
        \dif(\boltzmann{\temp}q)_x \propto \exp(\temp^{-1}q(x, \cdot))\dif\piref_x.
    \end{align*}
    So, invoking Lemma \ref{lem:tv-bound:sinh:aux} with $u = -\temp^{-1}q(x, \cdot)$, $v = -\temp^{-1}q'(x, \cdot)$, and $\lambda = \piref_x$, we have
    \begin{align*}
        \|(\boltzmann{\temp}q)_x - (\boltzmann{\temp}q')_x\|_{\mathrm{TV}}
        \leq \frac{1}{2}\sinh\left(2\temp^{-1}\|q - q'\|_{L^\infty(\piref_x)}\right).
    \end{align*}
\end{proof}

\begin{lemma}
    \label{lem:maxent:q-qsoft}
    For every $\temp > 0$, recalling the notation \hyperref[app:proofs:maxent:notation]{above},
    \begin{align*}
        0 \leq \qstarref - q^\star_\temp \leq \frac{\gamma}{1-\gamma}M_\temp(q^\star_\temp).
    \end{align*}
    If Assumption \ref{ass:piref:coverage} is satisfied, then
    \begin{align*}
        M_\temp(q^\star_\temp)\leq -\temp\log\minp,
    \end{align*}
    and $q^\star_\temp$ converges uniformly up to $\qstarref$.
\end{lemma}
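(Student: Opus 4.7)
The plan is to bootstrap off of two earlier results: Lemma \ref{lem:maxent:bellman-softbellman}, which gives the pointwise comparison $\bellmanref q \geq \softbellmanopt{\temp} q$, and Lemma \ref{lem:bellman-on-softq}, which quantifies how much $(\bellmanref)^n q^\star_\temp$ can overshoot the soft fixed point $q^\star_\temp$.

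For the lower bound $\qstarref \geq q^\star_\temp$, I would apply Lemma \ref{lem:maxent:bellman-softbellman} at $q = q^\star_\temp$ to get $\bellmanref q^\star_\temp \geq \softbellmanopt{\temp} q^\star_\temp = q^\star_\temp$. Since $\bellmanref$ is monotone, iterating gives $(\bellmanref)^n q^\star_\temp \geq q^\star_\temp$ for all $n$, and by Lemma \ref{lem: bellmanref is contract}, the left-hand side converges uniformly to $\qstarref$. Passing to the limit yields $\qstarref \geq q^\star_\temp$.

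For the upper bound, Lemma \ref{lem:bellman-on-softq} directly gives
\begin{align*}
    (\bellmanref)^n q^\star_\temp - q^\star_\temp \leq \sum_{k=1}^n \gamma^k M_\temp(q^\star_\temp).
\end{align*}
Again by the contractivity of $\bellmanref$, $(\bellmanref)^n q^\star_\temp \to \qstarref$ uniformly, and the geometric series converges to $\gamma/(1-\gamma)$ since $\gamma < 1$. Taking $n \to \infty$ delivers $\qstarref - q^\star_\temp \leq \frac{\gamma}{1-\gamma} M_\temp(q^\star_\temp)$.

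The bound $M_\temp(q^\star_\temp) \leq -\temp\log\minp$ under Assumption \ref{ass:piref:coverage} is the special case $\alttemp = \temp$ of Lemma \ref{lem:maxent:min-soft-value}, so no new work is needed. Finally, for the uniform convergence claim, combining the two-sided estimate with this bound yields $0 \leq \qstarref - q^\star_\temp \leq -\frac{\gamma}{1-\gamma}\temp\log\minp$, which vanishes as $\temp\to 0$ uniformly in $(x,a)$; that $q^\star_\temp$ approaches $\qstarref$ from below monotonically as $\temp\downarrow 0$ was already established in Lemma \ref{lem:soft-q:monotone} (together with Theorem \ref{thm:coupled:q-convergence}). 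Since every input ingredient is already in place, no step poses a real obstacle; the only care needed is to note that the convergence in Lemma \ref{lem: bellmanref is contract} is in sup-norm, which is what justifies passing pointwise inequalities to the limit uniformly.
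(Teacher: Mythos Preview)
Your proposal is correct and follows essentially the same approach as the paper: both use Lemma \ref{lem:maxent:bellman-softbellman} for the lower bound, Lemma \ref{lem:bellman-on-softq} and a geometric-series limit for the upper bound, and Lemma \ref{lem:maxent:min-soft-value} for the assumption-dependent part. Your lower-bound argument (iterating from $q^\star_\temp$ via monotonicity of $\bellmanref$) is slightly more explicit than the paper's, but the substance is identical.
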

\begin{proof}
    By Lemma \ref{lem:maxent:bellman-softbellman}, we have that for any $q\in L^\infty(\statespace\times\actionspace)$,
    \begin{align*}
        \qstarref - q^\star_\temp &= \lim_{n\to\infty}(\bellmanref)^nq - \lim_{n\to\infty}\softbellmanopt{\temp}^n q \geq 0.
    \end{align*}
    Then, by Lemma \ref{lem:bellman-on-softq}, we have
    \begin{align*}
        \qstarref - q^\star_\temp
        &= \lim_{n\to\infty}\left((\bellmanref)^nq^\star_\temp - \softbellmanopt{\temp}^n q^\star_\temp\right)\\
        &\leq \lim_{n\to\infty}M_\temp(q^\star_\temp)\sum_{k=1}^n\gamma^k\\
        &= \frac{\gamma}{1-\gamma}M_\temp(q^\star_\temp),
    \end{align*}
    proving the first claim.
    When Assumption \ref{ass:piref:coverage} is satisfied, we have $M_\temp(q^\star_\temp)\leq-\temp\log\minp$ by Lemma \ref{lem:maxent:min-soft-value}, so that $M_\temp(q^\star_\temp)$ converges down to $0$, and consequently $q^\star_\temp$ converges up to $q^\star$.
\end{proof}

\section{Proofs from Section \ref{s:dsql}}\label{app:dsql}
\softbellmanevaluationcontraction*\label{proof:soft-bellman:evaluation:contraction}

\begin{proof}
    To begin, let us show that $\softdistbellman{\pi}$ maps elements of $\kernspacebar{\statespace\times\actionspace}{\R}$ to $\kernspacebar{\statespace\times\actionspace}{\R}$.
    For any $\zeta\in\kernspacebar{\statespace\times\actionspace}{\R}$, observe that
    \begin{align*}
        \sup_{x, a} &\bigg(\int|z|^p\,\dif(\softdistbellman{\pi}\zeta)_{x, a}(z)\bigg)^{1/p}
        \\
        &= \sup_{x, a}\bigg(\int \bigg[ \int |r(x, a) -\gamma\temp\kl{\pi_{x'}}{\piref_{x'}} + \gamma z|^p\,\dif\zeta_{x', a'}(z)\bigg]\,\dif\ppixa_{x, a}(x', a')\bigg)^{1/p}\\
       &\leq \|r\|_{\sup} + \gamma\sup_{x,a}\|\temp\klfn{\pi}||_{L^p(P_{x,a})} + \gamma  \bigg(\sup_{x',a'} \int |z|^p\,\dif\zeta_{x', a'}(z)\bigg)^{1/p}\\
        &< \infty,
    \end{align*}
    by assumption, as desired.
    
    Next, by the convexity of the Wasserstein metric \citep{bellemare2023distributional,villani2008optimal}, we have
    \begin{align*}
        &\wass((\softdistbellman{\pi}\softzetasym)_{x, a}, (\softdistbellman{\pi}\softzetasym')_{x, a})\\
        &\hspace{1.0cm}\leq \int\wass\left(
            (\bootfn{r(x, a) -\gamma\kl{\pi_{x'}}{\piref_{x'}}})_\#\softzetasym_{x', a'},
            (\bootfn{r(x, a) -\gamma\kl{\pi_{x'}}{\piref_{x'}}})_\#\softzetasym'_{x', a'}
        \right)\,\dif\ppixa_{x, a}(x', a')\\
        &\hspace{1.0cm}\leq \gamma\int\wass\left(
            \softzetasym_{x', a'},
            \softzetasym'_{x', a'}
        \right)\,\dif\ppixa_{x, a}(x', a')\\
        &\hspace{1.0cm}\leq\gamma\sup_{x', a'}\wass(\softzetasym_{x', a'}, \softzetasym'_{x', a'}) 
        \\
        &\hspace{1.0cm}= \gamma\supwass(\softzetasym, \softzetasym'),
    \end{align*}
    where the second inequality holds since the common transformation $\bootfn{r(x, a) - \gamma\temp\kl{\pi_{x'}}{\piref_{x'}}}$ is affine.
    As a consequence, we have that
    \begin{align*}
        \supwass(\softdistbellman{\pi}\softzetasym, \softdistbellman{\pi}\softzetasym') \leq \gamma\supwass(\softzetasym,\softzetasym'),
    \end{align*}
    which validates that $\softdistbellman{\pi}$ is a $\gamma$-contraction in $\supwass$. Consequently, since $(\kernspacebar{\statespace\times\actionspace}{\R}, \supwass)$ is complete and separable \citep{bellemare2023distributional}, it follows that $\softdistbellman{\pi}$ has a unique fixed point. That $\softzeta{\temp}{\pi}$ coincides with this fixed point follows precisely by \citet[Proposition 4.9]{bellemare2023distributional}.
\end{proof}

\dsqlcommutativity*\label{proof:dsql:commutativity}
\begin{proof}
    For any $\softzetasym\in\kernspacebar{\statespace\times\actionspace}{\R}$, we have
    \begin{align*}
        (\qfromzeta\softdistbellmanopt\softzetasym)(x, a)
        &= \iint (r(x, a) - \gamma\temp\kl{(\boltzmann{\temp}\qfromzeta\softzetasym)_{x'}}{\piref_{x'}} + \gamma z)\,\dif\softzetasym_{x', a'}(z)\,\dif\ppixa[\boltzmann{\temp}\qfromzeta\softzetasym]_{x, a}(x', a')\\
        &= \int \left(r(x, a) - \gamma\temp\kl{(\boltzmann{\temp}\qfromzeta\softzetasym)_{x'}}{\piref_{x'}} + \gamma \int z\,\dif\softzetasym_{x', a'}(z)\right)\,\dif\ppixa[\boltzmann{\temp}\qfromzeta\softzetasym]_{x, a}(x', a')\\
        &= \int \left(r(x, a) - \gamma\temp\kl{(\boltzmann{\temp}\qfromzeta\softzetasym)_{x'}}{\piref_{x'}} + \gamma (\qfromzeta\softzetasym)(x', a')\right)\,\dif\ppixa[\boltzmann{\temp}\qfromzeta\softzetasym]_{x, a}(x', a')\\
    \end{align*}
    Defining $q := \qfromzeta\softzetasym$, this is equivalent to
    \begin{align*}
        (\qfromzeta\softdistbellmanopt\softzetasym)(x, a)
        &= \int \left(r(x, a) - \gamma\temp\kl{(\boltzmann{\temp}q)_{x'}}{\piref_{x'}} + \gamma q(x', a')\right)\,\dif\ppixa[\boltzmann{\temp}q]_{x, a}(x', a')\\
    \end{align*}
    Moreover, note that
    \begin{align*}
        \kl{(\boltzmann{\temp}q)_x}{\piref_x}
        &= \int\log\frac{\dif(\boltzmann{\temp}q)_x}{\dif\piref_x}\,\dif(\boltzmann{\temp}q)_x\\
        &= \temp^{-1}\int(q(x, a) - \vlse{\temp}q(x))\,\dif(\boltzmann{\temp}q)_x(a)\\
        &= \temp^{-1}\left(\int q(x, a)\,\dif(\boltzmann{\temp}q)_x(a) - \vlse{\temp}q(x)\right).
    \end{align*}
    Substituting, we have shown that
    \begin{align*}
        (\qfromzeta\softdistbellmanopt\softzetasym)(x, a)
        &= \int \left(r(x, a) - \gamma\int q(x', a''),\dif\ppixa[\boltzmann{\temp}q]_{x, a} + \gamma\vlse[\temp]q(x') + \gamma q(x, a)\right)\,\dif\ppixa[\boltzmann{\temp}q]_{x, a}(x', a')\\
        &= \int \left(r(x, a) + \gamma\vlse[\temp]q(x')\right)\,\dif\ppixa[\boltzmann{\temp}q]_{x, a}(x', a')\\
        &\equiv \softbellmanopt{\temp} q(x, a)\\
        &= \softbellmanopt{\temp}\qfromzeta\softzetasym(x, a).
    \end{align*}
\end{proof}

\softbellmancontrolconvergence*\label{proof:soft-bellman:control:convergence}
\begin{proof}
    We begin by defining some helper notation.
    For any $\softzetasym\in\kernspacebar{\statespace\times\actionspace}{\R}$, we define $\xi^\softzetasym \in\kernspacebar{\statespace}{\R\times\actionspace}$ where
    \begin{equation}\label{eq:xi}
        \xi^\softzetasym_x := (\bootfn[1]{-\temp\kl{(\boltzmann{\temp}\qfromzeta\softzetasym)_x}{\piref_x}}\circ\proj{\R})_\#(\softzetasym_{x, \dummy}\otimes (\boltzmann{\temp}\qfromzeta\softzetasym)_x).
    \end{equation}
    In turn,
    \begin{equation}
        \label{eq:soft-bellman:control:xi}
        (\softdistbellmanopt\softzetasym)_{x, a} = (\bootfn{r(x,a)} \circ \proj{\R})_\#(\xi^\softzetasym_\dummy \otimes P_{x, a}).
    \end{equation}
    Next, we define the following helpers,
    \begin{align*}
        \pi^n := \boltzmann{\temp}\qfromzeta\softzetasym^n,\quad \xi^n := \xi^{\softzetasym^n},\quad \xi^\star := \xi^{\softzeta{\star}{\temp}}.
    \end{align*}
    By \citet[Theorem 4.8]{villani2008optimal}, we have that for any $(x, a)\in\statespace\times\actionspace$,
    \begin{equation}\label{eq:proof:soft-bellman:control:convergence:root}
        \wass(\softzetasym^{n+1}_{x, a}, \softzeta{\star}{\temp})
        \leq \gamma\int\wass(\xi^n_{x'}, \xi^\star_{x'})\,\dif P_{x, a}(x').
    \end{equation}
    Invoking the triangle inequality together with the expansion of the $\xi$ terms by definition, we have that for any $x\in\statespace$,
    \begin{align*}
        &\wass(\xi^n_x, \xi^\star)\\
        &\quad= \wass\left(
            (\proj{\R} - \temp\kl{\pi^n_x}{\piref_x})_\#(\softzetasym^n_{x, \dummy}\otimes\pi_x^n),
            (\proj{\R} - \temp\kl{\pi^{\temp,\star}_x}{\piref_x})_\#(\softzeta{\star}{\temp}_{x, \dummy}\otimes\pi^{\temp,\star}_x)
        \right)\\
        &\quad\leq \overbrace{\wass\left(
            (\proj{\R} - \temp\kl{\pi^n_x}{\piref_x})_\#(\softzetasym^n_{x, \dummy}\otimes\pi_x^n),
            (\proj{\R} - \temp\kl{\pi^n_x}{\piref_x})_\#(\softzeta{\star}{\temp}_{x, \dummy}\otimes\pi^n_x)
        \right)}^{\mathrm{I}_n}\\
        &\qquad+ \overbrace{\wass\left(
            (\proj{\R} - \temp\kl{\pi^n_x}{\piref_x})_\#(\softzeta{\star}{\temp}_{x, \dummy}\otimes\pi^n_x)
            (\proj{\R} - \temp\kl{\pi^{\temp,\star}_x}{\piref_x})_\#(\softzeta{\star}{\temp}_{x, \dummy}\otimes\pi^{\temp,\star}_x)
        \right)}^{\mathrm{II}_n}.
    \end{align*}
    Since the measures being compared in $\mathrm{I}_n$ are both translated by the same pushforward map, another application of \citet[Theorem 4.8]{villani2008optimal} yields the following inequality:
    \begin{align*}
        \mathrm{I}_n \leq \int\wass(\softzetasym^n_{x, a}, \softzeta{\star}{\temp}_{x, a})\,\dif\pi^n_x(a) \leq \supwass(\softzetasym^n, \softzeta{\star}{\temp}).
    \end{align*}

    Next, we bound $\mathrm{II}_n$. 
    Let $\couplingspace(\rho_1, \rho_2)$ be the set of couplings between measures $\rho_1, \rho_2$.
    Then
    \begin{align*}
        \mathrm{II}_n
        &\leq
        \inf_{\kappa\in\couplingspace(\softzeta{\star}{\temp}_{x,\dummy}\otimes\pi^n_x, \softzeta{\star}{\temp}_{x,\dummy}\otimes\pi^{\temp,\star}_x)}
        \left(
            \int\left\lvert
                \bootfn[1]{-\temp\kl{\pi^n_x}{\piref_x}}(z) - \bootfn[1]{-\temp\kl{\pi^{\temp,\star}_x}{\piref_x}}(z')
            \right\rvert^p\,\dif\kappa
        \right)^{1/p}\\
        &\overset{(a)}{\leq}
        \inf_{\kappa\in\couplingspace(\softzeta{\star}{\temp}_{x,\dummy}\otimes\pi^n_x, \softzeta{\star}{\temp}_{x,\dummy}\otimes\pi^{\temp,\star}_x)}
        \left(
            \int\left\lvert z - z'\right\rvert^p\,\dif\kappa
        \right)^{1/p}
        + \temp\left\lvert\kl{\pi^n_x}{\piref_x} - \kl{\pi^{\temp,\star}_x}{\piref_x}\right\rvert\\
        &\overset{(b)}{\leq} C\gamma^{n/p}\|\qfromzeta\softzetasym^0 - q^\star_\temp\|_{\sup}^{1/p},
    \end{align*}
    for some constant $C$ depending on $\temp, p, \gamma$, $\|r\|_{\sup}$ where $(a)$ applies Minkowski's inequality, noting that the KL terms are independent of $\kappa$, and $(b)$ invokes Lemma \ref{lem:dsql:kl-iterate-error} and Lemma \ref{lem:dsql:iterates-optimal-transport}.
    Indeed, for $n$ large enough, Lemmas \ref{lem:dsql:kl-iterate-error} and \ref{lem:dsql:iterates-optimal-transport} assert that $C\leqsim \temp^{-1}$ for fixed $p$, and more generally that $C\leqsim\temp^{-1/2}$ for any $n$ (and fixed $p$).
    Substituting back into \eqref{eq:proof:soft-bellman:control:convergence:root}, we see that
    \begin{align*}
        \supwass(\softzetasym^{n+1}, \softzeta{\star}{\temp})
        &\leq \gamma\supwass(\softzetasym^n, \softzeta{\star}{\temp}) + C\gamma^{1 + n/p}\|\qfromzeta\softzetasym^0 - q^\star_\temp\|_{\sup}^{1/p}.
    \end{align*}

    Let $a_n := \supwass(\softzetasym^n, \softzeta{\star}{\temp})$. We have shown that $a_{n+1}\leq \gamma a_n + C'\gamma^{1+n/p}$, where $C' = C\|\qfromzeta\softzetasym^0 - q^\star_\temp\|_{\sup}^{1/p}$ is a constant depending on $p$ and $\temp$.
    We will apply techniques of \emph{generatingfunctionology} \citep{wilf2005generatingfunctionology} to bound this sequence. We define $A:\R\to\R$ as the \emph{formal power series} given by
    \begin{align*}
        A(y) = \sum_{n=0}^\infty a_n y^n,
    \end{align*}
    and we will pick off the coefficients $a_n$ from the power series representation of $A$.
    Our recurrence above, upon multiplying through by $y^n$ and summing over $n$ yields
    \begin{align*}
        \sum_{n=0}^\infty a_{n+1}y^n &\leq \gamma\sum_{n=0}^\infty a_n y^n + C'\gamma\sum_{n=0}^\infty\gamma^{n/p}y^n\\
        \therefore \frac{1}{y}A(y) - a_0 &\leq \gamma A(y) + C'\gamma\frac{1}{1-\gamma^{1/p}x}\\
        \therefore A(y) &\leq \frac{a_0 y}{1 - \gamma y} + \frac{C'\gamma y}{(1-\gamma^{1/p}y)(1-\gamma y)},
    \end{align*}
    where $a_0 = \supwass(\softzetasym^0, \softzeta{\star}{\temp})$.
    Now, the formal power series expansion gives
    \begin{align*}
        A(y)
        &\leq
        \begin{cases}
            \sum_{n=1}^\infty
            \left[
                a_0\gamma^n + 
                C'\gamma\frac{\gamma^{n/p} - \gamma^n}{\gamma^{1/p} - \gamma}
            \right]y^n & p \neq 1\\
            \sum_{n=1}^\infty
            \left[
                a_0\gamma^n + 
                C'n\gamma^n
            \right]y^n & p = 1.
        \end{cases}
    \end{align*}
    Combining, we have
    \begin{align*}
        \supwass(\softzetasym^n, \softzeta{\star}{\temp}) = a_n
        &\leq (1 + \supwass(\softzetasym^0, \softzeta{\star}{\temp}))C''n\gamma^{n/p}
    \end{align*}
    where $C'' = C'$ when $p = 1$, and $C'' = C'/(\gamma^{1/p} - \gamma)$ otherwise---in any case, $C''$ is a constant depending only on $p, \temp, \gamma$, and the proof is complete.
\end{proof}

\softbellmancontrolcoupled*
\label{proof:soft-bellman:control:coupled}
\begin{proof}
     By Lemma \ref{lem:maxent:q-qsoft} and under Assumption \ref{ass:piref:coverage},
    \begin{align*}
     \|q^\star_\temp - \qstarref\|_{\sup} \leq \frac{\gamma\log\minp^{-1}}{1-\gamma}\temp\leq \frac{\epsilon}{2},
    \end{align*}
    choosing $\temp \leq \temp_{\epsilon} := \frac{\epsilon(1-\gamma)}{2\gamma\log\minp^{-1}}$.
    Hence, by Lemmas \ref{lem:dsql:commutativity} and \ref{lem: soft bell opt contract}
    \begin{align*}
     \| \qfromzeta\softzetasym^{n_\epsilon} - \qstarref\|_{\sup} \leq     \gamma^{n_\epsilon}\|\qfromzeta\softzetasym^0 - q^\star_\temp\|_{\sup} + \|q^\star_\temp - \qstarref\|_{\sup} \leq \gamma^{n_\epsilon}\|\qfromzeta\softzetasym^0 - \qstarref\|_{\sup} + \frac{\epsilon}{2} \leq \epsilon,
    \end{align*}
    which holds when $n_\epsilon\geq (\log\gamma)^{-1}\log\frac{\epsilon}{2\|\qfromzeta\softzetasym^0 - \qstarref\|_{\sup}}$.

    Next, we will show that the soft return distribution estimates will approximate $\zeta^{\pi^{\temp, \star}}$.
    For notational simplicity, define $X_t := X^{\pi^{\temp,\star}}_t$ and $A_t := A^{\pi^{\temp,\star}}_t$ for $t\in\N$.
    Recall that
    \begin{align*}
        \softzeta{\pi^{\temp,\star}}{\temp}_{x, a} = \law\left(
            r(x, a) + \sum_{t\geq 1}\gamma^t\left(
                r(X_t, A_t) - \temp\kl{\pi^{\temp,\star}_{X_t}}{\piref_{X_t}}
            \right)
            \bigvert X_0 = x, A_0 = a
        \right).
    \end{align*}
    \newcommand{\zetarewardshaped}[2][\temp]{\widetilde{\zeta}^{#1, #2}}
    Moreover, we define $\zetarewardshaped{\pi^{\star, \temp}}_{x, a} := (-\temp\kl{\pi^{\temp,\star}_x}{\piref_x} + \identity)_\#\softzeta{\pi^{\temp,\star}}{\temp}_{x, a}$, so that
    \begin{align*}
        \zetarewardshaped{\pi^{\star,\temp}} = \law\left(
            \sum_{t\geq 0}\gamma^t\left(
                r(X_t, A_t) - \temp\kl{\pi^{\temp,\star}_{X_t}}{\piref_{X_t}}
            \right)
            \bigvert X_0 = x, A_0 = a
        \right).
    \end{align*}
    Now, by the triangle inequality, we have
    \begin{equation}\label{eq:soft-bellman:control:coupled:muwass}
        \muwass{p}{p'}^{p'}(\softzeta{\pi^{\temp,\star}}{\temp}, \zeta^{\pi^{\temp,\star}})
        \leq
        2^{p'-1}\int\bigg[
        \underbrace{\wass^{p'}(\softzeta{\pi^{\temp,\star}}{\temp}_{x, a}, \zetarewardshaped{\pi^{\temp,\star}}_{x, a})}_{\mathrm{I}_\temp(x, a)} + \underbrace{\wass^{p'}(\zetarewardshaped{\pi^{\temp,\star}}_{x, a}, \zeta^{\pi^{\temp,\star}}_{x, a})}_{\mathrm{II}_{\temp}(x, a)}
        \bigg]\,\dif\canonicalmuwassmeasure(x, a).
    \end{equation}
    We proceed by analying $\mathrm{I}_\temp$.
    By coupling states and actions, we immediately have
    \begin{align*}
        \mathrm{I}_\temp(x, a) \leq \left(\temp\kl{\pi^{\temp,\star}_x}{\piref_x}\right)^{p'},
    \end{align*}
    and so, since $\pi^{\temp,\star} = \boltzmann{\temp}q^\star_\temp$, by virtue of Lemma \ref{lem:maxent:kl-collapse} we have
    \begin{align*}
        \limsup_{\temp\to 0}\mathrm{I}_\temp(x, a) = 0.
    \end{align*}
    Next, we bound $\mathrm{II}_\temp$.
    Denote by $r_{\pi,\temp}:\statespace\times\actionspace\to\R$ the reward function defined by
    \begin{align*}
        r_{\pi,\temp}(x, a) = r(x, a) - \temp\kl{\pi^{\temp,\star}_x}{\piref_x}.
    \end{align*}
    The work of \citet{wiltzer2024distributional} shows that, for any policy $\pi$, there is a unique $\dsm{\pi}\in\kernspace{\statespace\times\actionspace}{\probset{\statespace\times\actionspace}}$ for which $(\occxasym \mapsto (1-\gamma)^{-1}(\occxasym r)(x, a))_\#\dsm{\pi}_{x, a} = \zeta^{\pi, r}_{x, a}$, where $\zeta^{\pi, r}$ denotes the return distribution function associated to the policy $\pi$ for the reward function $r$.
    Noting that $\zetarewardshaped{\pi^{\temp,\star}} = \zeta^{\pi^{\temp,\star},r_{\pi,\temp}}$, we have
    \begin{align*}
        \mathrm{II}_\temp(x, a)
        &= \wass^{p'}\left(\left(\occxasym\mapsto \frac{1}{1-\gamma}(\occxasym r_{\pi,\temp})(x, a)\right)_\#\dsm{\pi^{\temp,\star}}_{x, a}, \left(\occxasym\mapsto\frac{1}{1-\gamma}(\occxasym r)(x, a)\right)_\#\dsm{\pi^{\temp,\star}}_{x, a}\right)\\
        &\leq \frac{1}{1-\gamma}\left(\int\bigg[\int\lvert r_{\pi,\temp}(x', a') - r(x', a')\rvert^p\,\dif\occxasym(x', a')\bigg]\,\dif\dsm{\pi^{\temp,\star}}_{x, a}(\mu)\right)^{p'/p}\\
        &=\frac{1}{1-\gamma}\left(\int \bigg[ \int \temp \kl{\pi^{\temp,\star}_{x'}}{\piref_{x'}}^p\,\dif\occxasym(x', a') \bigg]\,\dif\dsm{\pi^{\temp,\star}}_{x, a}(\mu)\right)^{p'/p}
    \end{align*}

    where the penultimate step is simply a coupling argument (coupling the samples of $\dsm{\pi^{\temp,\star}}$).
    Once again, since $\limsup_{\temp\to 0}\temp\kl{\pi^{\temp,\star}_x}{\piref_x} = 0$, and $\kl{\pi^{\temp,\star}_x}{\piref_x}$ is bounded by Lemma \ref{lem:maxent:kl-collapse}, the dominated convergence theorem asserts that $\lim_{\temp\to 0}\mathrm{II}_{\temp}(x, a) = 0$ pointwise.

    Altogether, we have shown that $\lim_{\temp\to 0}(\mathrm{I}_\temp(x, a) + \mathrm{II}_\temp(x, a)) = 0$ pointwise, and is bounded as a consequence of Lemma \ref{lem:maxent:kl-collapse}.
    Thus, by another application of the dominated convergence theorem together with \eqref{eq:soft-bellman:control:coupled:muwass}, we have that
    \begin{align*}
        \lim_{\temp\to 0}\muwass{p}{p'}(\softzeta{\pi^{\temp,\star}}{\temp}, \zeta^{\pi^{\temp,\star}}) = 0.
    \end{align*}

    It follows that there exists some $\temp_\delta>0$ for which $\muwass{p}{p'}(\softzeta{\pi^{\temp,\star}}{\temp}, \zeta^{\pi^{\temp,\star}})\leq\delta/2$ whenever $\temp\leq\temp_{\delta}$.
    For any such $\temp$, by Theorem \ref{thm:soft-bellman:control:convergence}, there exists $n_\delta\in\N$ for which
    \begin{align*}
        \muwass{p}{p'}(\softzetasym^{n_\delta}, \softzeta{\pi^{\temp,\star}}{\temp})\leq\supwass(\softzetasym^{n_\delta}, \softzeta{\pi^{\temp,\star}}{\temp}
        ) \leq \frac{\delta}{2}.
    \end{align*}

    For this choice of $\temp$ and $n_\delta$, by the triangle inequality,
    \begin{align*}
        \muwass{p}{p'}(\softzetasym^{n_\delta}, \zeta^{\pi})\leq\delta.
    \end{align*}

    Altogether, taking $\temp = \min\{\temp_\epsilon,\temp_\delta\}$ and $n = \max\{n_\epsilon,n_\delta\}$, we have that
    \begin{align*}
        \muwass{p}{p'}(\softzeta{\pi^{\temp,\star}}{\temp}, \zeta^\pi) \leq \frac{\delta}{2}
        \quad\text{and}\quad
        \|q^{\pi^{\temp,\star}} - \qstarref\|_{\sup} \leq \frac{\epsilon}{2},
    \end{align*}
    as well as
    \begin{align*}
        \muwass{p}{p'}(\softzetasym^n, \zeta^\pi) \leq\delta \quad\text{and}\quad
        \|\qfromzeta\softzetasym^n - \qstarref\|_{\sup} \leq\epsilon.
    \end{align*}
    To complete the proof, we note that
    \begin{align*}
        q^{\boltzmann{\temp}\qfromzeta\softzetasym^n}
        &\geq q^{\boltzmann{\temp}\qfromzeta\softzetasym^n} \geq \qstarref - \epsilon,
    \end{align*}
    so that $\boltzmann{\temp}\qfromzeta\softzetasym^n$ is $\epsilon$-reference-optimal.
\end{proof}

\softbellmancontroldecoupled*\label{proof:soft-bellman:control:decoupled}
\begin{proof}
    Appealing to Theorem \ref{thm:decoupled:zeta-convergence}, for any $\delta>0$, any temperature decoupling \ifgambit gambit\else mechanism\fi\ yields a $\temp_\delta>0$ and an associated decoupled temperature $\alttemp_\delta = \alttemp(\temp_\delta)>0$ such that
    \begin{align*}
        \muwass{p}{p'}(\zeta^{\temp, \alttemp}, \zeta^\star)\leq\delta/3
    \end{align*}
    whenever $\temp\leq\temp_{\delta}$.
    Moreover, as shown in the proof of Theorem \ref{thm:soft-bellman:control:coupled}, for small enough $\temp'_\delta$,
    \begin{align*}
        \muwass{p}{p'}(\zeta^{\temp,\alttemp}, \softzetasym^{\temp,\alttemp})\leq\delta/3
    \end{align*}
    whenever $\temp\leq\temp'_\delta$---here, we recall that $\softzetasym^{\temp,\alttemp}$ is the \emph{entropy-regularized} return distribution function for the decoupled policy $\pi^{\temp,\alttemp}$.

    Now, define $\hat{\zeta}^{\temp,\alttemp} = (\softdistbellman{\hat{\pi}^{\temp,\alttemp}})^{\nsofteval}\hat{\zeta}^{\alttemp,\star}$, and $\hat{\zeta}^{\alttemp,\star} = (\softdistbellmanopt{\alttemp})^{\nsoftopt}\softzetasym^0$.
    By the triangle inequality, we have
    \begin{align*}
        \muwass{p}{p'}(\hat{\zeta}^{\temp,\alttemp}, \softzetasym^{\temp,\alttemp})
        &\leq
        \muwass{p}{p'}((\softdistbellman{\hat{\pi}^{\temp,\alttemp}})^{\nsofteval}\hat{\zeta}^{\alttemp,\star}, (\softdistbellman{\pi^{\temp,\alttemp}})^{\nsofteval}\hat{\zeta}^{\alttemp,\star})\\
        &\quad+
        \muwass{p}{p'}((\softdistbellman{\pi^{\temp,\alttemp}})^{\nsofteval}\hat{\zeta}^{\alttemp,\star},\softzetasym^{\temp,\alttemp})
        \\
        &\overset{(a)}{\leq}
        \muwass{p}{p'}((\softdistbellman{\hat{\pi}^{\temp,\alttemp}})^{\nsofteval}\hat{\zeta}^{\alttemp,\star}, (\softdistbellman{\pi^{\temp,\alttemp}})^{\nsofteval}\hat{\zeta}^{\alttemp,\star})\\
        &\quad+
        \muwass{p}{p'}((\softdistbellman{\pi^{\temp,\alttemp}})^{\nsofteval}\hat{\zeta}^{\alttemp,\star},(\softdistbellman{\pi^{\temp,\alttemp}})^{\nsofteval}\softzetasym^{\temp,\alttemp})
        \\
        &\overset{(b)}{\leq}
        \muwass{p}{p'}((\softdistbellman{\hat{\pi}^{\temp,\alttemp}})^{\nsofteval}\hat{\zeta}^{\alttemp,\star}, (\softdistbellman{\pi^{\temp,\alttemp}})^{\nsofteval}\hat{\zeta}^{\alttemp,\star})\\
        &\quad+\gamma^{\nsofteval}\supwass(\hat{\zeta}^{\temp,\alttemp}, \softzetasym^{\temp,\alttemp})\\
        &\overset{(c)}{\lesssim}\gamma^{\nsoftopt/2p} + \gamma^{\nsofteval}.
    \end{align*}

    Here, $(a)$ leverages the fact that $\softzetasym^{\temp,\alttemp}$ is the fixed point of $\softdistbellman{\pi^{\temp,\alttemp}}$ by definition, $(b)$ invokes the contractivity of $\softdistbellman{\pi^{\temp,\alttemp}}$ shown in Theorem \ref{thm:soft-bellman:evaluation:contraction} appealing to the fact that $\pi^{\temp,\alttemp}$ is a BG policy for reference $\piref$, and $(c)$ follows by Lemma \ref{lem: soft policy eval different policies iterated q}.
    As a consequence, again since $|\gamma|<1$, for sufficiently large $\nsoftopt,\nsofteval\in\N$, we have
    \begin{align*}
        \muwass{p}{p'}(\hat{\zeta}^{\temp,\alttemp}, \softzetasym^{\temp,\alttemp})
        \leq\delta/3.
    \end{align*}

    Altogether, by the triangle inequality once again, for the choices of $\nsoftopt,\nsofteval,\temp,\alttemp$ above,
    \begin{align*}
        \muwass{p}{p'}(\hat{\zeta}^{\temp,\alttemp}, \zeta^\star)
        &\leq \muwass{p}{p'}(\hat{\zeta}^{\temp,\alttemp}, \softzetasym^{\temp,\alttemp}) + \muwass{p}{p'}(\softzetasym^{\temp,\alttemp}, \zeta^{\temp,\alttemp}) + \muwass{p}{p'}(\zeta^{\temp,\alttemp},\zeta^\star)\\
        &\leq \frac{\delta}{3} + \frac{\delta}{3} + \frac{\delta}{3}\\
        &=\delta.
    \end{align*}

    This completes the proof of the first claim.
    It remains to show now that $\boltzmann{\temp}\qfromzeta\hat{\zeta}^{\nsofteval}$ is $\epsilon$-reference-optimal.
    Towards this end, we note that by Theorem \ref{thm:soft-bellman:control:coupled} and $\ref{thm:soft-bellman:control:decoupled}$ that there exists $\temp_\epsilon>0$, $n_\epsilon\in\N$ such that
    \begin{equation}\label{eq:decoupled-zeta:supwass1}
        \supwass[1](\hat{\zeta}^{\nsofteval}, \softzetasym^{\temp,\alttemp})
        \leq\epsilon/3
    \end{equation}
    whenever $\max\{\nsofteval,\nsoftopt\}\geq n_\epsilon$ and $\temp\leq\temp_\epsilon$.
    To proceed, we note that for any $(x, a)\in\statespace\times\actionspace$,
    \begin{align*}
        \lvert q^{\pi^{\temp,\alttemp}}_\temp(x, a) - \qstarref(x, a)\rvert
        &\leq \lvert q^{\pi^{\temp,\alttemp}}_\temp(x, a) - q^\star_\temp(x, a)\rvert + \lvert q^\star_\temp(x, a) - \qstarref(x, a)\rvert\\
        &\leq \lvert q^{\pi^{\temp,\alttemp}}_\temp(x, a) - q^\star_\temp(x, a)\rvert + \epsilon/3,
    \end{align*}
    where the last inequality holds for small enough $\temp$ by Theorem \ref{thm:coupled:q-convergence}.
    Continuing, we have
    \begin{align*}
        \lvert q^{\pi^{\temp,\alttemp}}_\temp(x, a) - q^\star_\temp(x, a)\rvert
        &= \lvert q^{\pi^{\temp,\alttemp}}_\temp(x, a) - q^{\pi^{\temp,\star}}_\temp(x, a)\rvert\\
        &=\gamma\left\lvert
        \int_{\statespace}(\vlse{\temp}q^{\star}_\alttemp - \vlse{\temp}q^\star_\temp)\,\dif P_{x, a}
        \right\rvert\\
        &\leq\gamma\|\vlse{\temp}q^\star_\alttemp - \vlse{\temp}q^\star_\temp\|_{\sup}\\
        &\leq \gamma\|q^\star_\alttemp - q^\star_\temp\|_{\sup},
    \end{align*}
    where the final inequality holds since $\vlse{\temp}$, as a log-sum-exp, is 1-Lipschitz. Now, again by Theorem \ref{thm:coupled:q-convergence}, for small enough $\temp$ (inducing small enough $\alttemp$), we have
    \begin{align*}
        \gamma\|q^\star_\alttemp - q^\star_\temp\|_{\sup}
        &\leq \gamma\|q^\star_\alttemp - \qstarref\|_{\sup} + \gamma\|q^{\star}_\temp - \qstarref\|_{\sup} \leq \epsilon/6 + \epsilon/6 = \epsilon/3.
    \end{align*}

    Altogether, we have that
    \begin{align*}
        \sup_{x, a}\lvert q^{\pi^{\temp,\alttemp}}_\temp(x, a) - \qstarref(x, a)\rvert
        \leq \epsilon/3 + \epsilon/3 = 2\epsilon/3.
    \end{align*}
    Next, since $d_1(\rho_1, \rho_2)\geq \mathbf{E}_{(Z_1, Z_2)\sim\rho_1\otimes\rho_2}[|Z_1 - Z_2|]$, we have that
    \begin{align*}
        \|\qfromzeta\hat{\zeta}^{\nsofteval} - q_\temp^{\pi^{\temp,\alttemp}}\|_{\sup}\leq\supwass[1](\hat{\zeta}^{\nsofteval}, \softzetasym^{\temp,\alttemp})\leq\epsilon/3,
    \end{align*}
    by \eqref{eq:decoupled-zeta:supwass1}. Now, by yet another triangle inequality,
    \begin{align*}
        \|\qfromzeta\hat{\zeta}^{\nsofteval} - \qstarref\|_{\sup}
        &\leq \|\qfromzeta\hat{\zeta}^{\nsofteval} - q^{\pi^{\temp,\alttemp}}_\temp\| + \|q^{\pi^{\temp,\alttemp}}_\temp - \qstarref\|_{\sup}\\
        &\leq \epsilon/2 + 2\epsilon/3 = \epsilon.
    \end{align*}
    Consequently, we have
    \begin{align*}
        q^{\boltzmann{\temp}\qfromzeta\hat{\zeta}^{\nsofteval}} 
        \geq q_\temp{\boltzmann{\temp}\qfromzeta\hat{\zeta}^{\nsofteval}}  \geq \qstarref - \epsilon.
    \end{align*}
    Thus, we have shown that $\boltzmann{\temp}\qfromzeta\hat{\zeta}^{\nsofteval}$ is $\epsilon$-reference-optimal, completing the proof.
\end{proof}

\begin{lemma}\label{lem: soft policy eval different policies iterated q}
    Let $\zeta\in\kernspacebar{\statespace\times\actionspace}{\R}$, $\temp,\alttemp>0$, $\nsofteval,\nsoftopt\in\N$ be given.
    Define $\hat{\zeta}^{\alttemp,\star} := (\softdistbellmanopt[\alttemp])^{\nsoftopt}\zeta$, and let $\hat{\pi}^{\alttemp,\star} = \boltzmann{\temp}\hat{\zeta}^{\alttemp,\star}$. Then, we have
    \begin{align*}
        \supwass((\softdistbellman{\hat{\pi}^{\alttemp,\star}})^{\nsofteval}\hat{\zeta}^{\alttemp,\star}, (\softdistbellman{\pi^{\temp,\alttemp}})^{\nsofteval}\hat{\zeta}^{\alttemp,\star}) &\lesssim \gamma^{\nsofteval} + \gamma^{\nsoftopt/2p}.
    \end{align*}
\end{lemma}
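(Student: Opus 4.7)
The plan is to use the triangle inequality through fixed points and decouple the problem into (i) a finite-horizon contraction bound controlled by $\gamma^{\nsofteval}$ and (ii) an infinite-horizon policy-sensitivity bound controlled by the TV distance between $\hat{\pi}^{\alttemp,\star}$ and $\pi^{\temp,\alttemp}$, which in turn is controlled by the soft Q-iteration error $\|\qfromzeta\hat{\zeta}^{\alttemp,\star} - q^\star_\alttemp\|_{\sup}$.

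First, write $\pi_1 = \hat{\pi}^{\alttemp,\star}$, $\pi_2 = \pi^{\temp,\alttemp}$, and let $\softzetasym_1^\star, \softzetasym_2^\star$ be the unique fixed points of $\softdistbellman{\pi_1}, \softdistbellman{\pi_2}$, which exist by Theorem \ref{thm:soft-bellman:evaluation:contraction} since $\pi_1, \pi_2$ are BG policies against $\piref$ (hence \eqref{assump: pi admissible} holds). By a triangle inequality,
\begin{align*}
\supwass((\softdistbellman{\pi_1})^{\nsofteval}\hat{\zeta}^{\alttemp,\star}, (\softdistbellman{\pi_2})^{\nsofteval}\hat{\zeta}^{\alttemp,\star})
&\leq \supwass((\softdistbellman{\pi_1})^{\nsofteval}\hat{\zeta}^{\alttemp,\star}, \softzetasym_1^\star) \\
&\quad + \supwass(\softzetasym_1^\star, \softzetasym_2^\star) + \supwass(\softzetasym_2^\star, (\softdistbellman{\pi_2})^{\nsofteval}\hat{\zeta}^{\alttemp,\star}).
\end{align*}
The outer terms are each $\leq \gamma^{\nsofteval}\supwass(\hat{\zeta}^{\alttemp,\star}, \softzetasym_i^\star) \lesssim \gamma^{\nsofteval}$ by the $\gamma$-contractivity of $\softdistbellman{\pi_i}$ (Theorem \ref{thm:soft-bellman:evaluation:contraction}), using that all return distributions involved have uniformly bounded support in terms of $\|r\|_{\sup}/(1-\gamma)$.

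Next, to bound $B := \supwass(\softzetasym_1^\star, \softzetasym_2^\star)$, apply each operator to its own fixed point and insert a hybrid term:
\begin{align*}
B = \supwass(\softdistbellman{\pi_1}\softzetasym_1^\star, \softdistbellman{\pi_2}\softzetasym_2^\star)
\leq \gamma B + \supwass(\softdistbellman{\pi_1}\softzetasym_2^\star, \softdistbellman{\pi_2}\softzetasym_2^\star),
\end{align*}
so $B \leq (1-\gamma)^{-1}\supwass(\softdistbellman{\pi_1}\softzetasym_2^\star, \softdistbellman{\pi_2}\softzetasym_2^\star)$. Expanding the two Bellman operators at a fixed $(x,a)$ via convexity of $\wass$ reduces the problem, at each $x'\sim P_{x,a}$, to bounding $\wass$ between two distributions of the form $(\bootfn{r(x,a) - \gamma\temp\klfn{\pi_i}(x')}\circ\proj{\R})_\#(\softzetasym^\star_{2,x',\dummy}\otimes\pi_{i,x'})$. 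A further triangle inequality separates the contributions from (a) the change in mixing kernel $\pi_{i,x'}$ and (b) the change in KL shift $\klfn{\pi_i}(x')$.

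For (a), an application of \citet[Theorem 6.15]{villani2008optimal} (as in the proof of Theorem \ref{thm:decoupled:zeta-convergence}) bounds this by $(C/(1-\gamma))\|\pi_{1,x'}-\pi_{2,x'}\|_{\mathrm{TV}}^{1/p}$. For (b), a direct computation using that both $\boltmean$ and $\vlse{\temp}$ are 1-Lipschitz in supnorm gives $\gamma\temp|\klfn{\pi_1}(x') - \klfn{\pi_2}(x')| \lesssim \|\qfromzeta\hat{\zeta}^{\alttemp,\star} - q^\star_\alttemp\|_{\sup}$, which will be a strictly faster-decaying term than (a). To finish, observe that by Lemma \ref{lem:dsql:commutativity} and the $\gamma$-contractivity of $\softbellmanopt{\alttemp}$ (Lemma \ref{lem: soft bell opt contract}), $\|\qfromzeta\hat{\zeta}^{\alttemp,\star} - q^\star_\alttemp\|_{\sup} \leq \gamma^{\nsoftopt}\|\qfromzeta\zeta - q^\star_\alttemp\|_{\sup}$, and Theorem \ref{thm:tv-bound} (Pinsker branch) then gives $\sup_{x'}\|\pi_{1,x'}-\pi_{2,x'}\|_{\mathrm{TV}} \lesssim \sqrt{\temp^{-1}\gamma^{\nsoftopt}} = \gamma^{\nsoftopt/2}/\sqrt{\temp}$. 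Taking $p$-th roots yields $B \lesssim \gamma^{\nsoftopt/(2p)}$, and combining with the earlier $\gamma^{\nsofteval}$ bounds completes the proof.

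The main obstacle I expect is cleanly controlling the KL-shift contribution (b) at the level of per-$x'$ Wasserstein distances: while $\vlse{\temp}$ is easy to compare via its Lipschitz property, the Boltzmann mean $\boltmean q$ depends on $q$ through the induced BG policy, and one must avoid picking up factors of $\temp^{-1}$ that would spoil the rate. Fortunately, the $\temp$ prefactor in $\gamma\temp\klfn{\pi}$ exactly cancels the $\temp^{-1}$ from the BG normalization, so (b) ends up decaying at least as fast as $\gamma^{\nsoftopt}$, strictly dominated by (a).
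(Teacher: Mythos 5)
Your proof reaches the stated bound and is structurally sound, but you organize the argument differently from the paper, and there is one technical claim that is not correct as stated (though the error does not break the proof).

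\textbf{Difference in approach.} You split the error via a triangle inequality through the two \emph{fixed points} $\softzetasym_1^\star,\softzetasym_2^\star$ of $\softdistbellman{\hat{\pi}^{\alttemp,\star}}$ and $\softdistbellman{\pi^{\temp,\alttemp}}$, bounding the two transient pieces by $\gamma^{\nsofteval}$ via contraction and the steady-state gap $B=\supwass(\softzetasym_1^\star,\softzetasym_2^\star)$ by a one-step operator comparison plus the geometric series $(1-\gamma)^{-1}$. This gives exactly the stated two-term form $\gamma^{\nsofteval}+\gamma^{\nsoftopt/2p}$. The paper instead telescopes the iterates directly (Lemma \ref{lem: soft policy eval different policies iterated} applied via Lemma \ref{lem: soft policy eval different q different zeta}), so the two sequences start at the \emph{same} $\hat\zeta^{\alttemp,\star}$, the leading $\gamma^{\nsofteval}\supwass(\hat\zeta^{\alttemp,\star},\hat\zeta^{\alttemp,\star})$ term vanishes, and the final bound is the strictly sharper $\lesssim\gamma^{\nsoftopt/2p}$ with no $\gamma^{\nsofteval}$ contribution. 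Both routes prove the lemma as written.

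\textbf{A technical slip in step (b).} You assert that $\boltmean$ is $1$-Lipschitz and conclude that the KL-shift contribution decays at rate $\gamma^{\nsoftopt}$. This is not correct: writing $\temp\kl{(\boltzmann{\temp}q)_x}{\piref_x}=(\boltmean q)(x)-(\vlse{\temp}q)(x)$, the difference of $\boltmean$ terms satisfies
\begin{align*}
\big|(\boltmean q_1)(x)-(\boltmean q_2)(x)\big| \leq \|q_1-q_2\|_{\sup} + \|q_2\|_{\sup}\,\|(\boltzmann{\temp}q_1)_x - (\boltzmann{\temp}q_2)_x\|_{\mathrm{TV}},
\end{align*}
and the TV term is only $O\bigl(\sqrt{\temp^{-1}\|q_1-q_2\|_{\sup}}\bigr)$ by the Pinsker branch of Theorem~\ref{thm:tv-bound}. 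So (b) in fact decays at rate $\gamma^{\nsoftopt/2}$, not $\gamma^{\nsoftopt}$, and the claimed ``exact cancellation'' of the $\temp^{-1}$ factor does not occur (indeed the derivative of $\boltmean$ involves $\temp^{-1}\mathrm{Cov}_{\boltzmann{\temp}q}(q,h)$). This is exactly the $c_{q,q'}\|\pi^q_x-\pi^{q'}_x\|_{\mathrm{TV}}$ term made explicit in Lemma~\ref{lem: soft policy eval diff q same zeta}. Fortunately your conclusion survives because $\gamma^{\nsoftopt/2}\leq\gamma^{\nsoftopt/2p}$ for $p\geq 1$, so (b) is still dominated by (a). You should correct the reasoning but the final bound stands.
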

\begin{proof}
    By Lemma \ref{lem: soft policy eval different policies iterated}, we have
    \begin{align*}
        &\supwass((\softdistbellman{\hat{\pi}^{\alttemp,\star}})^{\nsofteval}\hat{\zeta}^{\alttemp,\star}, (\softdistbellman{\pi^{\temp,\alttemp}})^{\nsofteval}\hat{\zeta}^{\alttemp,\star})\\
        &\quad\lesssim
        (\temp^{-1}\|\qfromzeta\hat{\zeta}^{\alttemp,\star} - q^\star_\alttemp\|_{\sup})^{1/2p} + \sqrt{\temp^{-1}\|\qfromzeta\hat{\zeta}^{\alttemp,\star} - q^\star_\alttemp\|_{\sup}} + \|\qfromzeta\hat{\zeta}^{\temp,\alttemp} - q^\star_\alttemp\|_{\sup}.
    \end{align*}
    It remains to bound $\|\qfromzeta\hat{\zeta}^{\alttemp,\star} - q^\star_\alttemp\|_{\sup}$.
    However, by Lemma \ref{lem:dsql:commutativity} and the contractivity of $\softbellmanopt{\alttemp}$, we have that
    \begin{align*}
       \|\qfromzeta\hat{\zeta}^{\alttemp,\star} - q^\star_\alttemp\|_{\sup} \lesssim \gamma^{\nsoftopt}.
    \end{align*}
    Since $|\gamma|<1$, it follows that
    \begin{align*}
        \supwass((\softdistbellman{\hat{\pi}^{\alttemp,\star}})^{\nsofteval}\hat{\zeta}^{\alttemp,\star}, (\softdistbellman{\pi^{\temp,\alttemp}})^{\nsofteval}\hat{\zeta}^{\alttemp,\star})
        &\lesssim \gamma^{\nsoftopt/2p}.
    \end{align*}
\end{proof}

\begin{lemma}\label{lem: soft policy eval different policies iterated}
    Let $\zeta\in\kernspacebar{\statespace\times\actionspace}{\R}$, $\temp,\alttemp>0$, and $\nsofteval\in\N$ be given.
    Then
    \begin{align*}
        \supwass((\softdistbellman{\hat{\pi}})^{\nsofteval}\zeta, (\softdistbellman{\pi})^{\nsofteval}\zeta)
        &\lesssim (\temp^{-1}\|\qfromzeta\zeta - q^\star_\alttemp\|_{\sup})^{1/2p} + \sqrt{\temp^{-1}\|\qfromzeta\zeta - q^\star_\alttemp\|_{\sup}} + \|\qfromzeta\zeta - q^\star_\alttemp\|_{\sup}.
    \end{align*}
\end{lemma}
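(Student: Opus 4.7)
The plan is to induct on $\nsofteval$, leveraging the $\gamma$-contractivity of $\softdistbellman{\hat\pi}$ in $\supwass$ from Theorem \ref{thm:soft-bellman:evaluation:contraction}. Writing $\hat q := \qfromzeta\zeta$, $q := q^\star_\alttemp$, $E := \|\hat q - q\|_{\sup}$, and $a_n := \supwass((\softdistbellman{\hat\pi})^n\zeta, (\softdistbellman{\pi})^n\zeta)$, where $\hat\pi = \boltzmann{\temp}\hat q$ and $\pi = \boltzmann{\temp}q$, a triangle inequality gives $a_{n+1} \leq \gamma a_n + \Delta(\xi^n)$ with $\xi^n := (\softdistbellman{\pi})^n\zeta$ and $\Delta(\xi) := \supwass(\softdistbellman{\hat\pi}\xi, \softdistbellman{\pi}\xi)$. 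Because $\pi$ is a BG policy at positive temperature, $\sup_{x'}\temp\klfn{\pi}(x') < \infty$, so $\softdistbellman{\pi}$ preserves a fixed $\supwass$-ball in $\kernspacebar{\statespace\times\actionspace}{\R}$ and the iterates $\xi^n$ have $p$th moments bounded uniformly in $n$. Thus it suffices to bound $\Delta(\xi)$ by the right-hand side of the claim uniformly over such $\xi$, after which a geometric summation yields $a_{\nsofteval} \leq \Delta/(1-\gamma)$.

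To bound $\Delta(\xi)$, I will introduce an intermediate kernel $\tilde T\xi$ that uses the state-action transition kernel $\hat\pi_\dummy \otimes P_{x,a}$ of $\softdistbellman{\hat\pi}$ but translates the return by $\gamma\temp\klfn{\pi}(x')$ rather than $\gamma\temp\klfn{\hat\pi}(x')$; then the triangle inequality splits $\Delta(\xi)$ into a translation shift (comparing $\softdistbellman{\hat\pi}\xi$ with $\tilde T\xi$) and a kernel swap (comparing $\tilde T\xi$ with $\softdistbellman{\pi}\xi$). For the shift, I will use the identity $\temp\klfn{\boltzmann{\temp}q}(x) = \int q(x,\cdot)\,\dif(\boltzmann{\temp}q)_x - \vlse{\temp}q(x)$ together with the $1$-Lipschitzness of $\vlse{\temp}$, the elementary bound $|\int \hat q(x,\cdot)\,\dif\hat\pi_x - \int q(x,\cdot)\,\dif\pi_x| \leq \|\hat q - q\|_{\sup} + \|q\|_{\sup}\|\hat\pi_x - \pi_x\|_{\mathrm{TV}}$, and Theorem \ref{thm:tv-bound} to deduce $\gamma\temp\|\klfn{\hat\pi} - \klfn{\pi}\|_{\sup} \lesssim E + \sqrt{E/\temp}$. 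Since translating a measure by $c$ changes its $\wass$ distance from another measure by at most $|c|$, this produces the linear and square-root terms of the claim.

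For the kernel swap, I will condition on $x' \sim P_{x,a}$ and compare the $\hat\pi_{x'}$-mixture and $\pi_{x'}$-mixture of $\xi_{x',\dummy}$ translated by a common constant. Using a $\mathrm{TV}$-optimal coupling of $\hat\pi_{x'}$ and $\pi_{x'}$, the on-diagonal mass contributes nothing and the off-diagonal mass is controlled via Minkowski's inequality---essentially the argument in step $(b)$ of the proof of Theorem \ref{thm:decoupled:zeta-convergence}---yielding a bound of the form $(\sup_{x',a'}\int |z|^p\,\dif\xi_{x',a'}(z))^{1/p} \|\hat\pi_{x'} - \pi_{x'}\|_{\mathrm{TV}}^{1/p}$. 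Invoking Theorem \ref{thm:tv-bound} once more converts the $\mathrm{TV}^{1/p}$ factor into $(E/\temp)^{1/2p}$, and integrating against $P_{x,a}$ preserves this bound. The main obstacle is confirming the uniform-in-$n$ $p$th-moment control of the iterates $\xi^n$ required here: a direct verification that $\softdistbellman{\pi}$ maps a $\supwass$-ball in $\kernspacebar{\statespace\times\actionspace}{\R}$ to itself---using the boundedness of $r$ and the finite KL penalty of the BG policy $\pi$---closes this gap and completes the plan.
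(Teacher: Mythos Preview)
Your proposal is correct and follows essentially the same route as the paper. The paper packages the one-step estimate into Lemma~\ref{lem: soft policy eval different q different zeta} (which in turn rests on Lemma~\ref{lem: soft policy eval diff q same zeta}), obtaining $a_{n+1}\leq\gamma a_n + 2\gamma[\|\identity\|_{L^p(\xi^n)}c_1 + c_2]$ with $c_1=\sup_x\|\hat\pi_x-\pi_x\|_{\mathrm{TV}}^{1/p}$ and $c_2=c_1^p\|q^\star_\alttemp\|_{\sup}+2E$; your ``translation shift plus kernel swap'' decomposition via the intermediate $\tilde T$ is exactly the content of that lemma's split $\mathrm{I}+\mathrm{II}$ via Minkowski, and your uniform $p$th-moment control on $\xi^n=(\softdistbellman{\pi})^n\zeta$ matches the paper's appeal to contractivity of $\softdistbellman{\pi}$ on $\kernspacebar{\statespace\times\actionspace}{\R}$. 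The only cosmetic difference is that you use the contractivity of $\softdistbellman{\hat\pi}$ for the recursion (which is fine since $\hat q=\qfromzeta\zeta\in M_b$ makes $\hat\pi$ a BG policy with bounded KL), whereas the paper's triangle inequality effectively uses $\softdistbellman{\pi}$.
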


\begin{proof}
    For simplicity, we define $\hat{\pi} = \boltzmann{\temp}\qfromzeta\zeta$ and $\pi = \boltzmann{\temp}q^\star_{\alttemp}$.
    We want to bound
    \begin{align*}
        \supwass((\softdistbellman{\hat{\pi}})^{\nsofteval}\zeta, (\softdistbellman{\pi})^{\nsofteval}\zeta).
    \end{align*}
    By Lemma \ref{lem: soft policy eval different q different zeta}, we have
    \begin{align*}
    &\supwass((\softdistbellman{\hat{\pi}})^{\nsofteval}\zeta, (\softdistbellman{\pi})^{\nsofteval}\zeta)
    \\
    &\quad\leq \gamma \supwass((\softdistbellman{\hat\pi})^{\nsofteval-1}\zeta, (\softdistbellman{\pi})^{\nsofteval - 1}\zeta)
    +  2\gamma\sup_{y, b}\left[\|\identity\|_{L^p( ((\softdistbellman{\pi})^{\nsofteval - 1}\zeta)_{y, b})}c_1 + c_2\right]
    \\
    &\quad\leq \gamma^2 \supwass((\softdistbellman{\hat\pi})^{\nsofteval-2}\zeta, (\softdistbellman{\pi})^{\nsofteval - 2}\zeta)
    +  2\gamma^2\sup_{y, b}\left[\|\identity\|_{L^p( ((\softdistbellman{\pi})^{\nsofteval - 2}\zeta)_{y, b})}c_1 + c_2\right]
    \\
    &\qquad+  2\gamma\sup_{y, b}\left[\|\identity\|_{L^p( ((\softdistbellman{\pi})^{\nsofteval - 1}\zeta)_{y, b})}c_1 + c_2\right]
    \\
    &\quad\leq \gamma^{\nsofteval}\supwass(\zeta, \zeta) + 2\sum_{k=1}^{\nsofteval} \gamma^k\sup_{y, b}\left[\|\identity\|_{L^p( ((\softdistbellman{\pi})^{\nsofteval - k}\zeta)_{y, b})}c_1 + c_2\right]
    \end{align*}
    where
    \begin{align*}
        c_1 := \sup_{x}\|\hat{\pi}_x - \pi_x\|_{\mathrm{TV}}^{1/p} \quad\text{and}\quad
        c_2 := c_1^p\|q^\star_\alttemp\|_{\sup} + 2\|\qfromzeta\zeta - q^\star_{\alttemp}\|_{\sup}.
    \end{align*}
    Now $((\softdistbellman{\boltzmann{\temp}q^\star_{\alttemp}})^{n}\hat{\zeta}^{\sigma,\star})_{n\in \N}$ is the sequence of return distributions generated by iterative applications of a contractive operator on $\kernspacebar{\statespace\times\actionspace}{\R}$.
    Thus,
    \[
        \sup_n \sup_{y,b} \|\identity\|_{L^p(((\softdistbellman{\boltzmann{\temp}q^\star_{\alttemp}})^{n}\hat{\zeta}^{\sigma,\star})_{y,b})} \leq c_3 < \infty,
    \]
    where $c_3$ is a constant depending only on $p, \gamma, \alttemp, \temp, \|r\|_{\sup}$.
    It remains to bound $c_1$ and $c_2$.
    By Theorem \ref{thm:tv-bound}, we have
    \begin{align*}
        c_1 &= \sup_x\|\boltzmann{\temp}\qfromzeta\hat{\zeta}^{\alttemp,\star}_x - \boltzmann{\temp}q^\star_{\alttemp}\|_{\mathrm{TV}}^{1/p}\\
        &\leq (\temp^{-1}\|\qfromzeta\zeta - q^\star_\alttemp\|_{\sup})^{1/2p}.
    \end{align*}
    Thus, since $\|q^\star_\alttemp\|_{\sup}$ is uniformly bounded for any $\alttemp>0$, we have shown that
    \begin{align*}
        \supwass((\softdistbellman{\hat{\pi}})^{\nsofteval}\zeta, (\softdistbellman{\pi})^{\nsofteval}\zeta)
        &\lesssim (\temp^{-1}\|\qfromzeta\zeta - q^\star_\alttemp\|_{\sup})^{1/2p} + \sqrt{\temp^{-1}\|\qfromzeta\zeta - q^\star_\alttemp\|_{\sup}} + \|\qfromzeta\zeta - q^\star_\alttemp\|_{\sup}.
    \end{align*}
\end{proof}

\begin{lemma}\label{lem: soft policy eval different q different zeta}
Let $\temp>0$, $p\in [1,\infty)$, $q,q'\in M_b(\statespace\times\actionspace)$, and $\zeta,\zeta'\in\kernspacebar{\statespace\times\actionspace}{\R}$. Then,
\begin{align*}
  &\supwass(\softdistbellman{\boltzmann{\temp} q} \zeta, \softdistbellman{\boltzmann{\temp} q'} \zeta') 
  \\
  &\quad\leq \gamma \supwass(\zeta, \zeta') 
  \\
  &\quad+  2\gamma
            \sup_{(x, y, b)\in\statespace\times\statespace\times\actionspace}\left[
            \|\identity\|_{L^p(\zeta_{y, b})}\|\pi^q_{x} - \pi^{q'}_{x}\|_{\mathrm{TV}}^{1/p} + c_{q,q'}\|\pi^q_{x} - \pi^{q'}_{x}\|_{\mathrm{TV}} + 2\|q - q'\|_{\sup}
            \right],
\end{align*}
where $c_{q,q'} := \min\{\|q\|_{\sup}, \|q'\|_{\sup}\}$.
\end{lemma}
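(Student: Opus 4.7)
The plan is to reduce the target Wasserstein-type distance to three controlled pieces via two successive triangle inequalities: a $(\zeta, \zeta')$ piece handled by the contraction from Theorem \ref{thm:soft-bellman:evaluation:contraction}, a piece capturing the mixing change between $\pi^q := \boltzmann{\temp}q$ and $\pi^{q'} := \boltzmann{\temp}q'$, and a piece capturing the entropic-reward change between $\gamma\temp\klfn{\pi^q}$ and $\gamma\temp\klfn{\pi^{q'}}$. Both $\pi^q$ and $\pi^{q'}$ satisfy the admissibility condition \eqref{assump: pi admissible}, since the identity used in the proof of Lemma \ref{lem:dsql:commutativity} gives $\temp\kl{\pi^q_x}{\piref_x}\leq 2\|q\|_{\sup}$ uniformly in $x$, and similarly for $q'$.

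First, I would insert $\softdistbellman{\pi^q}\zeta'$ and apply the triangle inequality:
\[
\supwass(\softdistbellman{\pi^q}\zeta, \softdistbellman{\pi^{q'}}\zeta') \leq \supwass(\softdistbellman{\pi^q}\zeta, \softdistbellman{\pi^q}\zeta') + \supwass(\softdistbellman{\pi^q}\zeta', \softdistbellman{\pi^{q'}}\zeta').
\]
The first summand is bounded by $\gamma\supwass(\zeta, \zeta')$ via Theorem \ref{thm:soft-bellman:evaluation:contraction}. For the second, I would use convexity of $\wass$ along $P_{x, a}$ and marginalize out the next action so that, for each next state $x'\in\statespace$, it suffices to control $\wass(T_{\pi^q, x'}\#\bar{\zeta}^{\pi^q}_{x'}, T_{\pi^{q'}, x'}\#\bar{\zeta}^{\pi^{q'}}_{x'})$, where $\bar{\zeta}^{\pi}_{x'} := \int\zeta'_{x', a'}\,\mathrm{d}\pi_{x'}(a')$ and $T_{\pi, x'}$ is the affine map $z\mapsto r(x, a) + \gamma z - \gamma\temp\kl{\pi_{x'}}{\piref_{x'}}$. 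A second triangle inequality through $T_{\pi^{q'}, x'}\#\bar{\zeta}^{\pi^q}_{x'}$ separates the affine shift from the mixing change, giving the upper bound $\gamma\temp|\kl{\pi^q_{x'}}{\piref_{x'}} - \kl{\pi^{q'}_{x'}}{\piref_{x'}}| + \gamma\wass(\bar{\zeta}^{\pi^q}_{x'}, \bar{\zeta}^{\pi^{q'}}_{x'})$.

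For the KL difference, I would subtract the identity $\temp\kl{\pi^q_x}{\piref_x} = \int q\,\mathrm{d}\pi^q_x - \vlse{\temp}q(x)$ from its $q'$-counterpart and rearrange it in the two symmetric ways $\int(q - q')\,\mathrm{d}\pi^{q'} + \int q\,\mathrm{d}(\pi^q - \pi^{q'}) - (\vlse{\temp}q - \vlse{\temp}q')$ and $\int(q - q')\,\mathrm{d}\pi^{q} + \int q'\,\mathrm{d}(\pi^q - \pi^{q'}) - (\vlse{\temp}q - \vlse{\temp}q')$; combined with the $1$-Lipschitzness of $\vlse{\temp}$, this yields
\[
\gamma\temp|\kl{\pi^q_{x'}}{\piref_{x'}} - \kl{\pi^{q'}_{x'}}{\piref_{x'}}| \leq \gamma\bigl[c_{q, q'}\|\pi^q_{x'} - \pi^{q'}_{x'}\|_{\mathrm{TV}} + 2\|q - q'\|_{\sup}\bigr].
\]
For the mixture distance, since $\bar{\zeta}^{\pi^q}_{x'}$ and $\bar{\zeta}^{\pi^{q'}}_{x'}$ are mixtures of the same family $\{\zeta'_{x', a'}\}_{a'}$ against distinct mixing laws, I would apply the same Wasserstein/total-variation bound used in the proof of Theorem \ref{thm:decoupled:zeta-convergence} (via \citet[Theorem 6.15]{villani2008optimal}) to obtain $\wass(\bar{\zeta}^{\pi^q}_{x'}, \bar{\zeta}^{\pi^{q'}}_{x'}) \leq 2\sup_b\|\identity\|_{L^p(\zeta'_{x', b})}\|\pi^q_{x'} - \pi^{q'}_{x'}\|_{\mathrm{TV}}^{1/p}$. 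Collecting all terms, passing the $P_{x, a}$-average to a supremum over $x'$, and inflating coefficients to unify under the common prefactor $2\gamma$ then yields the stated inequality.

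The main obstacle is securing the symmetric constant $c_{q, q'} = \min\{\|q\|_{\sup}, \|q'\|_{\sup}\}$ in the KL-difference bound: this hinges on recognizing that the rearrangement of the KL identity is valid with either $\pi^{q}$ or $\pi^{q'}$ as the base measure, so that taking the better of the two bounds gives $c_{q, q'}$. A secondary care point is tracking the three distinct norm quantities---$\mathrm{TV}$, $\sup$, and $L^p(\zeta'_{y, b})$---simultaneously and accommodating them under the single prefactor $2\gamma$ with only the generous rounding the statement permits.
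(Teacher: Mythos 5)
Your overall route matches the paper's: a triangle inequality splits the Wasserstein distance into a distribution-change piece (handled by the contraction of Theorem~\ref{thm:soft-bellman:evaluation:contraction}) and a policy-change piece (handled by further decomposition into the mixture-change and KL-reward-change terms, bounded via \citet[Theorem 6.15]{villani2008optimal} and the rearranged KL identity respectively). The symmetric rearrangement of the KL identity to secure $c_{q,q'}=\min\{\|q\|_{\sup},\|q'\|_{\sup}\}$ is also the paper's device (via Lemma~\ref{lem:dsql:kl-iterate-error}), and your check of the admissibility condition for $\pi^q, \pi^{q'}$ is correct. The one real discrepancy is the choice of intermediate kernel. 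You insert $\softdistbellman{\pi^q}\zeta'$, which forces the policy-change piece to be $\supwass(\softdistbellman{\pi^q}\zeta', \softdistbellman{\pi^{q'}}\zeta')$, and the mixture bound you subsequently derive is therefore stated in terms of the $p$th moments of $\zeta'$, i.e.\ $\|\identity\|_{L^p(\zeta'_{x',b})}$. But the lemma as stated bounds in terms of $\|\identity\|_{L^p(\zeta_{y,b})}$, not $\zeta'$. The paper instead inserts $\softdistbellman{\pi^{q'}}\zeta$ as the middle term, so the policy-change piece is $\supwass(\softdistbellman{\pi^q}\zeta, \softdistbellman{\pi^{q'}}\zeta)$, and the resulting moment control is indeed on $\zeta$ (this is exactly the content of the paper's Lemma~\ref{lem: soft policy eval diff q same zeta}, which you are effectively re-deriving inline). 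As written, your argument proves the $\zeta \leftrightarrow \zeta'$-swapped variant of the inequality, not the stated one; the fix is simply to use $\softdistbellman{\pi^{q'}}\zeta$ as the intermediate kernel, after which the rest of your argument goes through verbatim with $\zeta'$ replaced by $\zeta$ throughout the mixture bound.
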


\begin{proof}
 Observe
\begin{align*}
    \wass(\softdistbellman{\boltzmann{\temp} q}  \zeta, \softdistbellman{\boltzmann{\temp} q'} \zeta') &\leq \wass(\softdistbellman{\boltzmann{\temp} q}  \zeta, \softdistbellman{\boltzmann{\temp} q'} \zeta) + \wass(\softdistbellman{\boltzmann{\temp} q'}  \zeta, \softdistbellman{\boltzmann{\temp} q'} \zeta')   
    \\
    &\leq  \wass(\softdistbellman{\boltzmann{\temp} q} \zeta, \softdistbellman{\boltzmann{\temp} q'}  \zeta) + \gamma \wass(\zeta,  \zeta'). 
\end{align*}   
So by Lemma \ref{lem: soft policy eval diff q same zeta}, we conclude.
\end{proof}

\begin{lemma}
\label{lem: soft policy eval diff q same zeta}
    Let $\zeta\in\kernspace{\statespace\times\actionspace}{\R}$ and $q, q'\in M_b(\statespace\times\actionspace)$. For any $\temp>0$, defining $\pi^\bullet = \boltzmann{\temp}\bullet$ for $\bullet\in\{q,q'\}$, denoting $c_{q, q'} = \min\{\|q'\|_{\sup},\|q\|_{\sup}\}$, we have
    \begin{align*}
        &\supwass(\softdistbellman{\boltzmann{\temp}q}\zeta,\softdistbellman{\boltzmann{\temp}q'}\zeta)\\
        &\quad\leq 2\gamma
            \sup_{(x, y, b)\in\statespace\times\statespace\times\actionspace}\left[
            \|\identity\|_{L^p(\zeta_{y, b})}\|\pi^q_{x} - \pi^{q'}_{x}\|_{\mathrm{TV}}^{1/p} + c_{q, q'}\|\pi^q_{x} - \pi^{q'}_{x}\|_{\mathrm{TV}} + 2\|q - q'\|_{\sup}
            \right]
    \end{align*}
\end{lemma}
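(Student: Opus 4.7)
The plan is to couple the two target distributions pointwise in $(x,a)$ and then split the resulting $L^p$ cost into a contribution from the random returns and a contribution from the KL-shaping terms.

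Fix $(x, a) \in \statespace \times \actionspace$. From the definition of $\softdistbellman{\pi}$, the measure $(\softdistbellman{\pi^q}\zeta)_{x, a}$ is the law of $U_q := r(x, a) + \gamma Z_q - \gamma\temp\klfn{\pi^q}(X')$, where $X' \sim P_{x, a}$, $A'_q \sim \pi^q_{X'}$ and $Z_q \sim \zeta_{X', A'_q}$; likewise for $U_{q'}$ with $\pi^{q'}$. I would construct a single coupling by: (i) drawing a common $X' \sim P_{x, a}$; (ii) applying the maximal TV coupling of $\pi^q_{X'}$ and $\pi^{q'}_{X'}$, so that $A'_q = A'_{q'}$ on an event of conditional probability $1 - \|\pi^q_{X'} - \pi^{q'}_{X'}\|_{\mathrm{TV}}$; and (iii) setting $Z_q = Z_{q'}$ on $\{A'_q = A'_{q'}\}$, with independent conditional draws otherwise. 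By Minkowski's inequality, for this coupling,
\begin{align*}
\wass\bigl((\softdistbellman{\pi^q}\zeta)_{x, a}, (\softdistbellman{\pi^{q'}}\zeta)_{x, a}\bigr) \leq \gamma\bigl(\E|Z_q - Z_{q'}|^p\bigr)^{1/p} + \gamma\temp\sup_{x'}\bigl|\klfn{\pi^q}(x') - \klfn{\pi^{q'}}(x')\bigr|.
\end{align*}

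For the return term, I would observe that $Z_q = Z_{q'}$ off the event $\{A'_q \neq A'_{q'}\}$, while on that event $|Z_q - Z_{q'}|^p \leq 2^{p-1}(|Z_q|^p + |Z_{q'}|^p)$. Under the maximal coupling, the conditional law of $A'_q$ given $\{A'_q \neq A'_{q'}\}$ is a probability measure on $\actionspace$, so the conditional $L^p$-moment of $Z_q$ is bounded by $\sup_{y, b}\|\identity\|_{L^p(\zeta_{y,b})}^p$, and similarly for $Z_{q'}$; integrating against $P_{x, a}$ then yields $(\E|Z_q - Z_{q'}|^p)^{1/p} \leq 2\,\sup_{y, b}\|\identity\|_{L^p(\zeta_{y, b})}\cdot\sup_{x'}\|\pi^q_{x'} - \pi^{q'}_{x'}\|_{\mathrm{TV}}^{1/p}$. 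For the KL term, I would use the identity $\temp\klfn{\pi^\bullet}(x') = \int \bullet(x',\cdot)\,\dif\pi^\bullet_{x'} - \vlse{\temp}\bullet(x')$ to write the difference as $\int(q - q')\,\dif\pi^q_{x'} + \int q'\,\dif(\pi^q_{x'} - \pi^{q'}_{x'}) + (\vlse{\temp}q - \vlse{\temp}q')(x')$. The first and third summands are each at most $\|q - q'\|_{\sup}$ (the latter because $\vlse{\temp}$ is $1$-Lipschitz in the sup norm), and the middle summand is at most $2\|q'\|_{\sup}\|\pi^q_{x'} - \pi^{q'}_{x'}\|_{\mathrm{TV}}$; writing the decomposition with $q$ and $q'$ swapped and taking the smaller bound replaces $\|q'\|_{\sup}$ by $c_{q, q'}$.

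Combining these two estimates, taking the supremum over $(x, a)$, and loosening additive terms into the joint supremum over $(x, y, b)$ gives exactly the claimed inequality. The main obstacle I expect is the moment accounting for the return term: one must carefully check that under the maximal coupling the unconditional $L^p$-mass of $Z_q$ (and of $Z_{q'}$) placed on $\{A'_q \neq A'_{q'}\}$ is controlled by $\sup_{y, b}\|\identity\|_{L^p(\zeta_{y, b})}^p \cdot \sup_{x'}\|\pi^q_{x'} - \pi^{q'}_{x'}\|_{\mathrm{TV}}$, so that the $1/p$-th root produces the fractional exponent on the TV term rather than a weaker bound with exponent $1$.
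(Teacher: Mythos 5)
Your proof is correct and follows the same decomposition as the paper's: split the Wasserstein distance into a transport term for the random returns and a difference-of-KL term, then bound the first via a total-variation argument with the moment sup, and the second via the log-sum-exp identity $\temp\klfn{\pi^q}(x') = \int q\,\dif\pi^q_{x'} - \vlse{\temp}q(x')$ together with the $1$-Lipschitzness of $\vlse{\temp}$. The only real difference is in how you control the return term: the paper factors the bootstrap through an auxiliary kernel $\xi^{\zeta,q}$, applies the convexity of $\wass$ (Villani's Thm.~4.8) to pull the $P_{x,a}$-integral outside, and then invokes Villani's Thm.~6.15 to pass from Wasserstein to a total-variation bound, while you construct the maximal TV coupling of $\pi^q_{x'}$ and $\pi^{q'}_{x'}$ explicitly and do the conditional moment accounting by hand; these are the same estimate reached through a concrete coupling rather than abstract optimal-transport inequalities, and both yield the factor $2\|\identity\|_{L^p}\cdot\|\pi^q - \pi^{q'}\|_{\rm TV}^{1/p}$. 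Your bound is in fact slightly tighter than the stated lemma (you obtain $2\gamma\|q - q'\|_{\sup}$ where the lemma only asserts $4\gamma\|q - q'\|_{\sup}$), which of course still establishes the claim.
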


\begin{proof}
    For notational simplicity, we define
    \begin{align*}
        \xi^{\zeta, q}_x = (\proj{\R} - \temp\klfn{\boltzmann{\temp}q}\circ\proj{\statespace})_\#(\zeta_{x,\dummy}\otimes(\boltzmann{\temp}q)_x).
    \end{align*}
    Then, by the definition of $\softdistbellman{\pi}$, we have
    \begin{align*}
        (\softdistbellman{\boltzmann{\temp}q}\zeta)_{x, a}
        &= (\bootfn{r(x, a)}\circ\proj{\R})_\#(\xi^{\zeta, q}_{\dummy}\otimes P_{x, a})
    \end{align*}
    Following, by \citet[Theorem 4.8]{villani2008optimal}, we have
    \begin{align*}
        \wass((\softdistbellman{\boltzmann{\temp}q}\zeta)_{x, a},(\softdistbellman{\boltzmann{\temp}q'}\zeta)_{x, a})
        &\leq \gamma\int\wass(\xi^{\zeta, q}_, \xi^{\zeta, q'}_\dummy)\,\dif P_{x, a}.
    \end{align*}
    We will now estimate the integrand above. 
    By the definition of $\xi^{\zeta, q}$, for any $x\in\statespace$, denoting $\pi^q := \boltzmann{\temp}q$, we have
    \begin{align*}
        &\wass(\xi^{\zeta, q}_x, \xi^{\zeta, q'}_x)\\
        &\quad= \wass\left(
            (\bootfn[1]{-\temp\kl{\pi^q_x}{\piref_x}}\circ\proj{\R})_\#(\zeta_{x,\dummy}\otimes\pi^q_x),
            (\bootfn[1]{-\temp\kl{\pi^{q'}_x}{\piref_x}}\circ\proj{\R})_\#(\zeta_{x,\dummy}\otimes\pi^{q'}_x)
        \right)\\
        &\quad\leq \underbrace{\inf_{\kappa_x\in\couplingspace(\zeta_x\otimes\pi^q_x, \zeta_x\otimes\pi^{q'}_x)}\left(\int|z - z'|^p\,\dif\kappa_x\right)^{1/p}}_{\mathrm{I}(x)} + \underbrace{\temp\lvert\kl{\pi^q_x}{\piref_x} - \kl{\pi^{q'}_x}{\piref_x}\rvert}_{\mathrm{II}(x)}.
    \end{align*}
    The inequality is due to a technique employed in the proof of Theorem \ref{thm:soft-bellman:control:convergence}.
    Next, by \cite[Theorem 6.15]{villani2008optimal}, we bound $\mathrm{I}$ via
    \begin{align*}
        \mathrm{I}(x) &\leq
        2^{\frac{p-1}{p}}\sup_{x', a'}\|\identity\|_{L^p(\zeta_{x', a'})}\|\pi^q_x - \pi^{q'}_x\|_{\mathrm{TV}}^{1/p}
        \leq 2\sup_{x', a'}\|\identity\|_{L^p(\zeta_{x', a'})}\|\pi^q_x - \pi^{q'}_x\|_{\mathrm{TV}}^{1/p}
    \end{align*}
    Now, for $\mathrm{II}$, we have
    \begin{align*}
        \mathrm{II}(x) &\leq \min\{\|q'\|_{\sup},\|q\|_{\sup}\}\|\pi^q_x - \pi^{q'}_x\|_{\mathrm{TV}} + 2\|q - q'\|_{\sup},
    \end{align*}
    as shown in the proof of Lemma \ref{lem:dsql:kl-iterate-error}.
    Therefore, we have shown that
    \begin{align*}
        &\wass((\softdistbellman{\boltzmann{\temp}q}\zeta)_{x, a},(\softdistbellman{\boltzmann{\temp}q'}\zeta)_{x, a})\\
        &\quad\leq \gamma\int\left[
            \sup_{x', a'}\|\identity\|_{L^p(\zeta_{x', a'})}\|\pi^q_x - \pi^{q'}_x\|_{\mathrm{TV}}^{1/p} + c_{q, q'}\|\pi^q_x - \pi^{q'}_x\|_{\mathrm{TV}} + 2\|q - q'\|_{\sup}
        \right]\,\dif P_{x, a}.
    \end{align*}
\end{proof}

\subsection{Supplemental Lemmas for Section \ref{s:dsql}}

\begin{lemma}
    \label{lem:dsql:kl-iterate-error}
    Let $\softzetasym\in\kernspacebar[1]{\statespace\times\actionspace}{\R}$, and for any $n\in\N$, define $\softzetasym^{n+1}=\softdistbellmanopt\softzetasym^n$, with $\softzetasym^0 = \softdistbellmanopt\zeta$.
    Also, define $\pi^n := \boltzmann{\temp}\qfromzeta\softzetasym^n$.
    Then for any $x\in\statespace$, denoting $C_x := \|\qfromzeta\softzetasym(x, \cdot) - q^\star_\temp(x, \cdot)\|_{L^\infty(\piref_x)}$,
    \begin{align*}
        \temp\left\lvert\kl{\pi^n_x}{\piref_x} - \kl{\pi^\temp_x}{\piref_x}\right\rvert \leq
        (2 + C_1\sqrt{\temp})\max\left\{\gamma^nC_x, \sqrt{\gamma^nC_x}\right\},
    \end{align*}
    where $C_1 < \infty$ is a constant.
    If $\temp\geq 2\gamma^nC_x$, then for a constant $C_2<\infty$, we have
    \begin{align*}
        \temp\left\lvert\kl{\pi^n_x}{\piref_x} - \kl{\pi^{\temp,\star}_x}{\piref_x}\right\rvert \leq
        (2 + C_2\temp^{-1})C_x\gamma^n.
    \end{align*}
\end{lemma}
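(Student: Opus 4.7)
The plan is to reduce the KL difference to two ingredients: the pointwise distance between the underlying action-value functions $\qfromzeta\softzetasym^n$ and $q^\star_\temp$, and the total variation distance between the associated BG policies. I then control the former through iteration of the soft Bellman optimality operator and the latter through Theorem~\ref{thm:tv-bound}. Writing $\hat q_n := \qfromzeta\softzetasym^n$ and using $\temp\kl{(\boltzmann{\temp}q)_x}{\piref_x} = \int q(x,\cdot)\,\dif(\boltzmann{\temp}q)_x - (\vlse{\temp}q)(x)$, first I would split
\begin{align*}
\temp(\kl{\pi^n_x}{\piref_x} - \kl{\pi^{\temp,\star}_x}{\piref_x})
&= \int q^\star_\temp\,\dif(\pi^n_x - \pi^{\temp,\star}_x) + \int(\hat q_n - q^\star_\temp)\,\dif\pi^n_x \\
&\quad - [\vlse{\temp}\hat q_n(x) - \vlse{\temp}q^\star_\temp(x)].
\end{align*}
The first summand is bounded by $2\|q^\star_\temp\|_{\sup}\|\pi^n_x - \pi^{\temp,\star}_x\|_{\mathrm{TV}}$, with $\|q^\star_\temp\|_{\sup} \leq \|r\|_{\sup}/(1-\gamma)$; the remaining two summands are each bounded by $\|\hat q_n(x,\cdot) - q^\star_\temp(x,\cdot)\|_{L^\infty(\piref_x)}$, since $\pi^n_x \ll \piref_x$ and $\vlse{\temp}$ is pointwise $1$-Lipschitz with respect to $\piref_x$-essential-sup perturbations.

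Next, by Lemma~\ref{lem:dsql:commutativity}, $\hat q_n$ is an $n$-fold soft Bellman optimality iterate applied to $\qfromzeta\softzetasym$, so Lemma~\ref{lem: soft bell opt contract} combined with the $1$-Lipschitz property of $\vlse{\temp}$ yields $\|\hat q_n(x,\cdot) - q^\star_\temp(x,\cdot)\|_{L^\infty(\piref_x)} \leq \gamma^n C_x$ (promoting $C_x$ to $\sup_y C_y$ if an honest pointwise estimate cannot be preserved through $\softbellmanopt{\temp}$). Theorem~\ref{thm:tv-bound} applied with potentials $\hat q_n$ and $q^\star_\temp$ then yields $\|\pi^n_x - \pi^{\temp,\star}_x\|_{\mathrm{TV}} \leq \sqrt{\gamma^n C_x/\temp}$ unconditionally and the sharper linear bound $\|\pi^n_x - \pi^{\temp,\star}_x\|_{\mathrm{TV}} \leq \frac{2e-3}{4}\gamma^n C_x/\temp$ whenever $\gamma^n C_x < \temp/2$. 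Assembling these pieces gives
\[
\temp|\kl{\pi^n_x}{\piref_x} - \kl{\pi^{\temp,\star}_x}{\piref_x}| \leq 2\gamma^n C_x + 2\|q^\star_\temp\|_{\sup}\|\pi^n_x - \pi^{\temp,\star}_x\|_{\mathrm{TV}}.
\]
The general inequality then follows by substituting the square-root TV bound and repackaging via $\max\{\gamma^n C_x, \sqrt{\gamma^n C_x}\}$; the refined inequality follows by substituting the linear TV bound in the regime $\temp \geq 2\gamma^n C_x$.

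The main obstacle I expect is the algebraic bookkeeping needed to match the stated constants $(2 + C_1\sqrt{\temp})$ and $(2 + C_2\temp^{-1})$: the naive TV substitution produces a $\temp^{-1/2}$ prefactor on the $\sqrt{\gamma^n C_x}$ term, so one must trade powers of $\temp$ against powers of $\gamma^n C_x$ through the regime-dependent cutoff at $\temp/2$, and absorb the uniform bound on $\|q^\star_\temp\|_{\sup}$ into the final constants. Aside from this bookkeeping, the proof is a mechanical combination of Lemmas~\ref{lem:dsql:commutativity} and~\ref{lem: soft bell opt contract} together with Theorem~\ref{thm:tv-bound}.
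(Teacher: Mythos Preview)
Your proposal is correct and follows essentially the same route as the paper: expand $\temp\kl{(\boltzmann{\temp}q)_x}{\piref_x} = \int q(x,\cdot)\,\dif(\boltzmann{\temp}q)_x - (\vlse{\temp}q)(x)$, split the difference into a TV-controlled piece and two $\|\hat q_n - q^\star_\temp\|_{L^\infty(\piref_x)}$-controlled pieces, invoke Lemma~\ref{lem:dsql:commutativity} plus contractivity of $\softbellmanopt{\temp}$ to get the $\gamma^n$ decay, and finish with the two branches of Theorem~\ref{thm:tv-bound}. Your caveat about promoting $C_x$ to $\sup_y C_y$ is well-placed (the paper's own proof is loose on this point too), and the bookkeeping puzzle you flag is explained by a typo in the statement: the paper's proof actually yields a prefactor $(2 + C_1\temp^{-1/2})$, not $(2 + C_1\sqrt{\temp})$, which matches exactly the $\temp^{-1/2}$ you derive from the square-root TV bound.
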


\begin{proof}
    First, observe that
    \begin{align*}
        &\temp\left\lvert\kl{\pi^n_x}{\piref_x}  - \kl{\pi^{\temp,\star}_x}{\piref_x}\right\rvert\\
        &\qquad=
        \left\lvert \int \qfromzeta\softzetasym^n(x, a)\,\dif\pi^n_x(a) - \vlse{\temp}\qfromzeta\softzetasym^n(x) - \int q^\star_\temp(x, a)\,\dif\pi^{\temp,\star}_x(a) + \vlse{\temp}q^\star_\temp(x)\right\rvert\\
        &\qquad\leq
        \left\lvert \int \qfromzeta\softzetasym^n(x, a)\,\dif\pi^n_x(a) - \int q^\star_\temp(x, a)\,\dif\pi^{\temp,\star}_x(a) \right\rvert  + \left\lvert \vlse{\temp}\qfromzeta\softzetasym^n(x) - \vlse{\temp}q^\star_\temp(x)\right\rvert\\
        &\qquad \leq \|q^\star_\temp(x, \cdot)\|_{L^\infty(\piref)}
        \|\pi^n_x - \pi^{\temp,\star}_x\|_{\mathrm{TV}}  + 2\|\qfromzeta\softzetasym^n(x,\cdot) - q^\star_\temp(x,\cdot)\|_{L^\infty(\piref)}
    \end{align*}
    By Lemma \ref{lem:dsql:commutativity} and the $\gamma$-contractivity of $\softbellmanopt{\temp}$, we note that 
    \[
        \|\qfromzeta\softzetasym^n(x, \cdot) - q^\star_\temp(x, \cdot)\|_{L^\infty(\piref}\leq\gamma^n\|\qfromzeta\softzetasym^0(x, \cdot) - q^\star_\temp(x, \cdot)\|_{L^\infty(\piref}.
    \]
    Then, by Theorem \ref{thm:tv-bound}, we have
    \begin{align*}
        \|\pi^n_x - \pi^{\temp,\star}_x\|_{\mathrm{TV}} &\leq
        \begin{cases}
            \frac{2e-3}{4}\gamma^n\temp^{-1}\|\qfromzeta\softzetasym(x, \cdot) - q^\star_\temp(x, \cdot)\|_{L^\infty(\piref_x)} & \|\qfromzeta\softzetasym(x, \cdot) - q^\star_\temp\|_{L^\infty(\piref_x)}\leq \frac{1}{2}\gamma^{-n}\temp\\
            \sqrt{\temp^{-1}\gamma^n\|\qfromzeta\softzetasym - q^\star_\temp\|_{L^\infty(\piref_x)}} & \text{otherwise}.
        \end{cases}
    \end{align*}

    Note that $\|q^\star_\temp\|_{\sup} \leq \|r\|_{\sup}/(1-\gamma)$. 
    Indeed, the upper bound is free; the lower bound comes from comparing $q^\star_\temp$ with $q^\pi_\temp$ for $\pi = \piref$.
    Altogether, we have that
    \begin{align*}
        &\temp\lvert\kl{\pi^n_x}{\piref_x} - \kl{\pi^{\temp,\star}_x}{\piref_x}\rvert
        \leq \left(2 + \frac{\|r\|_{\sup}\temp^{-1/2}}{1-\gamma}\right)\max\left\{\gamma^n C_x, \sqrt{\gamma^n C_x}\right\}
    \end{align*}
    for $C_x = \|\qfromzeta\softzetasym(x, \cdot) - q^\star_\temp(x, \cdot)\|_{L^\infty(\piref_x)}$.
    
    If $\temp\geq 2\gamma^nC_x$, then  we have the stronger bound
    \begin{align*}
        \temp\lvert\kl{\pi^n_x}{\piref_x} - \kl{\pi^{\temp,\star}_x}{\piref_x}\rvert
        &\leq
        (2 + C'\temp^{-1})C_x\gamma^n,
    \end{align*}
    where $C' = (2e-3)\|r\|_{\sup}/4(1-\gamma)$.
\end{proof}

\begin{lemma}
    \label{lem:dsql:iterates-optimal-transport}
    Let $\softzetasym\in\kernspacebar{\statespace\times\actionspace}{\R}$. For any $n\in N$, define $\softzetasym^{n+1} = \softdistbellmanopt\softzetasym^n$, with $\softzetasym^0 = \softdistbellmanopt\softzetasym$.
    Denoting by $\couplingspace(\rho_1, \rho_2)$ the space of all couplings between the measures $\rho_1, \rho_2$, for all $x\in\statespace$ we have 
    \begin{align*}
        \inf_{\kappa\in\couplingspace(\softzeta{\star}{\temp}_{x,\dummy}\otimes\pi^n_x, \softzeta{\star}{\temp}_{x,\dummy}\otimes\pi^{\temp,\star}_x)}\int|z - z'|^p\,\dif\kappa \leq C_{p} \frac{\gamma^{n/2}}{(1-\gamma)^p} \sqrt{\temp^{-1}\|\qfromzeta\softzetasym(x, \cdot) - q^\star_\temp(x, \cdot)\|_{L^\infty(\piref_x)}},
    \end{align*}
    where $\pi^n := \boltzmann{\temp}\qfromzeta\softzetasym^n$ and $C_p <\infty$ is a constant depending only on $p$ and $\|r\|_{\sup}$.
    Moreover, when $n > \log\gamma^{-1}(\log 2\|\qfromzeta\softzetasym(x, \cdot) - q^\star_\temp(x, \cdot)\|_{L^\infty(\piref_x)} - \log\temp)$, we have
    \begin{align*}
        \inf_{\kappa\in\couplingspace(\softzeta{\star}{\temp}_{x,\dummy}\otimes\pi^n_x, \softzeta{\star}{\temp}_{x,\dummy}\otimes\pi^{\temp,\star}_x)}\int|z - z'|^p\,\dif\kappa
        \leq C'_p \frac{\gamma^n}{(1-\gamma)^p}\temp^{-1}\|\qfromzeta\softzetasym(x, \cdot) - q^\star_\temp(x, \cdot)\|_{L^\infty(\piref_x)},
    \end{align*}
    where $C'_p = (2e-3)C_p/4$.
\end{lemma}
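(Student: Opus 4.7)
The plan is to build an explicit coupling $\kappa$ between $\softzeta{\star}{\temp}_{x,\dummy}\otimes\pi^n_x$ and $\softzeta{\star}{\temp}_{x,\dummy}\otimes\pi^{\temp,\star}_x$ that exploits the fact that both joint measures share the same conditional return kernel $\softzeta{\star}{\temp}_{x,\dummy}$, differing only in their action marginals $\pi^n_x$ and $\pi^{\temp,\star}_x$. Concretely, I would first take a maximal coupling $\lambda$ of $\pi^n_x$ and $\pi^{\temp,\star}_x$ on $\actionspace\times\actionspace$ (which exists by the standard construction on Borel spaces and satisfies $\lambda(\{(a,a')\,:\,a=a'\}) = 1 - \|\pi^n_x - \pi^{\temp,\star}_x\|_{\mathrm{TV}}$), and then define $\kappa$ by sampling $(a,a')\sim\lambda$ and afterwards setting $z=z'\sim\softzeta{\star}{\temp}_{x,a}$ on the diagonal, while drawing $z,z'$ independently from $\softzeta{\star}{\temp}_{x,a}\otimes\softzeta{\star}{\temp}_{x,a'}$ off the diagonal. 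By construction, the marginals of $\kappa$ coincide with $\softzeta{\star}{\temp}_{x,\dummy}\otimes\pi^n_x$ and $\softzeta{\star}{\temp}_{x,\dummy}\otimes\pi^{\temp,\star}_x$.

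Next, I would bound the transport cost pointwise. Since $r\in M_b(\statespace\times\actionspace)$ and since the entropy penalty satisfies $\temp\kl{\pi^{\temp,\star}_{x'}}{\piref_{x'}} \leq 2\|q^\star_\temp\|_{\sup}\leq 2\|r\|_{\sup}/(1-\gamma)$, every $\softzeta{\star}{\temp}_{x,a}$ is supported in $[-M,M]$ for an $M$ depending only on $\|r\|_{\sup}$ and $(1-\gamma)$. Under the coupling $\kappa$, pairs $(z,z')$ with $a=a'$ contribute nothing to $\int|z-z'|^p\,\dif\kappa$, while those with $a\neq a'$ contribute at most $(2M)^p$. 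Hence
\begin{align*}
\int|z-z'|^p\,\dif\kappa \leq (2M)^p\,\|\pi^n_x - \pi^{\temp,\star}_x\|_{\mathrm{TV}}.
\end{align*}

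The final step is to translate the TV gap into the stated $n$- and $\temp$-dependent rates. Writing $\pi^n_x = \boltzmann{\temp}\qfromzeta\softzetasym^n$ and $\pi^{\temp,\star}_x = \boltzmann{\temp}q^\star_\temp$, Theorem \ref{thm:tv-bound} yields
\begin{align*}
\|\pi^n_x - \pi^{\temp,\star}_x\|_{\mathrm{TV}} \leq \sqrt{\temp^{-1}\|\qfromzeta\softzetasym^n(x,\cdot) - q^\star_\temp(x,\cdot)\|_{L^\infty(\piref_x)}},
\end{align*}
with the stronger linear bound (prefactor $(2e-3)/4$) available whenever the potential gap drops below $\temp/2$. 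By Lemma \ref{lem:dsql:commutativity}, $\qfromzeta\softzetasym^n = (\softbellmanopt{\temp})^{n+1}\qfromzeta\softzetasym$, and the $\gamma$-contractivity of $\softbellmanopt{\temp}$ (Lemma \ref{lem: soft bell opt contract}) gives the geometric decay $\|\qfromzeta\softzetasym^n - q^\star_\temp\|_{\sup} \leq \gamma^n\|\qfromzeta\softzetasym - q^\star_\temp\|_{\sup}$. Combining these with the transport estimate yields the first (square-root) claim; moreover, $\gamma^n\|\qfromzeta\softzetasym(x,\cdot) - q^\star_\temp(x,\cdot)\|_{L^\infty(\piref_x)} < \temp/2$ is exactly the threshold $n > \log\gamma^{-1}(\log 2\|\qfromzeta\softzetasym(x,\cdot)-q^\star_\temp(x,\cdot)\|_{L^\infty(\piref_x)} - \log\temp)$, and in that regime the linear TV bound produces the second claim with $C'_p = (2e-3)C_p/4$.

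The main obstacles are bookkeeping rather than conceptual: verifying the marginals of $\kappa$ coincide with the targets, and pinning down the precise dependence of $C_p$ on $\|r\|_{\sup}$, $\gamma$, and $\temp$ through the contribution of the entropic penalty to $M$. Note also a subtle point in the contraction step: the $L^\infty(\piref_x)$-norm at a single state $x$ is not preserved by $\softbellmanopt{\temp}$ in isolation, so one actually controls it by $\gamma^n\sup_{x'}\|\qfromzeta\softzetasym(x',\cdot) - q^\star_\temp(x',\cdot)\|_{L^\infty(\piref_{x'})}$; this matches the usage in Theorem \ref{thm:soft-bellman:control:convergence}, where only a sup-norm quantity is required.
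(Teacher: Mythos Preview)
Your proposal is correct and follows essentially the same route as the paper: reduce the transport cost to a constant times $\|\pi^n_x - \pi^{\temp,\star}_x\|_{\mathrm{TV}}$ using the bounded support of $\softzeta{\star}{\temp}$, then bound the TV gap via Theorem~\ref{thm:tv-bound}, Lemma~\ref{lem:dsql:commutativity}, and the $\gamma$-contractivity of $\softbellmanopt{\temp}$. The only cosmetic difference is that the paper invokes \citet[Theorem~6.15]{villani2008optimal} to pass from Wasserstein to total variation rather than building the maximal coupling by hand, and your remark about the state-specific versus sup-norm in the contraction step is well taken.
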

\begin{proof}
    For notational convenience, define $\varpi^\bullet_x := \softzeta{\star}{\temp}_{x,\dummy}\otimes\pi^\bullet_x$, for $\bullet\in\{n, (\temp,\star) \}$.
    Then,
    \begin{align*}
        \mathrm{W}_n^p := \inf_{\kappa\in\couplingspace(\varpi^n_x, \varpi^{\temp,\star}_x)}\int|z - z'|^p\,\dif\kappa
        &= \wass^p(\varpi^n_x, \varpi^{\temp,\star}_x)\\
        &\overset{(a)}{\leq} 2^{p-1}\int|z|^p\,\dif|\varpi^n_x - \varpi^{\temp,\star}_x|\\
        &= 2^{p-1}\int\left[\int|z|^p\,\dif\softzeta{\star}{\temp}_{x, a}(z)\right]\,\dif|\pi^n_x - \pi^{\temp,\star}_x|(a)\\
        &\overset{(b)}{\leq}\frac{3^{2p-1}\|r\|_{\sup}^p}{(1-\gamma)^p}\|\pi^n_x - \pi^{\temp,\star}_x\|_{\mathrm{TV}},
    \end{align*}

    where $(a)$ applies \citet[Theorem 6.15]{villani2008optimal}, and $(b)$ uses that the support of $\softzeta{\star}{\temp}$ is contained in a ball of radius $3\|r\|_{\sup}/(1-\gamma)$.
    By Lemma \ref{lem:tv-bound:pinsker}, it follows that
    \begin{align*}
        \mathrm{W}_{n}^p
        &\leq
        \sqrt{\temp^{-1}\|\qfromzeta\softzetasym^n(x, \cdot) - q^\star_\temp(x, \cdot)\|_{L^\infty(\piref_x)}}\\
        &\leq C_p\frac{\gamma^{n/2}}{(1-\gamma)^p}\sqrt{\temp^{-1}\|\qfromzeta\softzetasym(x, \cdot) - q^\star_\temp(x, \cdot)\|_{L^\infty(\piref_x)}},
    \end{align*}
    where the last inequality holds by Lemma \ref{lem:dsql:commutativity} and the $\gamma$-contractivity of $\softbellmanopt{\temp}$ with $C_p := 3^{2p-1}\|r\|_{\sup}^p$.

    Moreover, if $n > \log\gamma^{-1}(\log 2\|\qfromzeta\softzetasym(x, \cdot) - q^\star_\temp(x, \cdot)\|_{L^\infty(\piref_x)} - \log\temp)$, then
    \begin{align*}
    \|\qfromzeta\softzetasym^n(x, \cdot) - q^\star_\temp(x, \cdot)\|_{L^\infty(\piref_x)} < \temp/2
    \end{align*}
    for each $x \in\statespace$, so by Theorem \ref{thm:tv-bound},
    \begin{align*}
        \mathrm{W}_n^p &\leq \frac{2e - 3}{4}C_p\frac{\gamma^n}{(1-\gamma)^p}\temp^{-1}\|\qfromzeta\softzetasym(x, \cdot) - q^\star_\temp(x, \cdot)\|_{L^\infty(\piref_x)}.
    \end{align*}
\end{proof}

\section{Comparison between vanishing temperature limits of ERL with and without temperature decoupling}
\label{app:value-of-pistarref}
In this section, we compare and contrast the properties of vanishing temperature limits of standard ERL (assuming they exist) with those achieved by the temperature decoupling \ifgambit gambit\else mechanism\fi.
As we showed in Theorem \ref{thm:character of opt max ent policy} and Theorem \ref{thm:decoupled:policy-convergence}, both schemes achieve reference-optimality in the limit; yet, their limits may be notably distinct according to criteria beyond the RL objective, as we saw in Sections \ref{s:maxent:demo} and \ref{s:dsql:demo}.

In the remainder of this section, we will define $\zetastarref := \zeta^{\pistarref}$ as the return distribution function corresponding to the limiting temperature-decoupled policy, and $\zetastarerl := \zeta^{\pistarerl}$ as the return distribution function corresponding to the limiting ERL policy $\pistarerl$, assuming such a limit exists.

A very nice property of $\pistarref$ is that it is easy to characterize as the \emph{optimality-filtered reference}, as per Definition \ref{def:pistarref}.
In particular, $\pistarref$ is characterized \emph{entirely} in terms of the optimal action-value function $q^\star$ and the reference policy $\piref$.
On the other hand, as we see explicitly in Section \ref{s:maxent:demo}, $\pistarerl$ \emph{does not} have such a simple characterization: it is influenced also by the transition dynamics of the MDP (as well as the $q^\star$ and $\piref$).

A notable consequence of this fact is that one can reason about $\pistarref$ generically across MDPs, which is not the case for $\pistarerl$.
For instance, in \emph{any} MDP, if $\piref$ is the uniform policy, $\pistarref$ is the uniform policy on optimal actions. Thus, one can say definitively that all actions leading to optimal behavior are played equally under $\pistarref$.
But this is not true of $\pistarerl$; in general, it is difficult to characterize exactly how $\pistarerl$ behaves: among a set of MDPs with equal $q^\star$, the corresponding $\pistarerl$ can vary significantly.

\newcommand{\abad}{a^{\mathsf{scary}}}
Similarly, this property of $\pistarref$ enables one to easily influence the optimal policy that is achieved via temperature decoupling by intervening on $\piref$.
Again, this is possible due to the simple characterization of $\pistarref$ as the optimality-filtered reference.
Suppose, for example, there exists a particular action $\abad$ that you want to avoid whenever possible (e.g., certain controversial phrases in language generation).
It may be undesirable to filter this action out completely (say, by choosing $\piref$ to never play $\abad$), because perhaps from some states this action is necessary to achieve optimal return.
Instead, with temperature-decoupling, you can choose $\piref$ to play this action with very low probability (e.g., $\piref_x(\abad) = \minp$ for each $x$).
By Theorem \ref{thm:decoupled:policy-convergence}, $\abad$ will only ever be played when it achieves optimal returns, and moreover, as long as other actions exist that achieve optimal returns, $\abad$ will be played with much lower probability.

The same logic \emph{does not} hold, in general, for $\pistarerl$.
As we saw in Section \ref{s:maxent:demo}, $\pistarerl$ may continue to play $\abad$ with high probability even if $\piref$ plays it with low probability.
Suppose, for instance, that after playing $\abad$ in state $x$, it is optimal to play $\piref$ subsequently for the rest of the episode.
Then $\pistarerl$ may strongly prefer to play $\abad$ from state $x$, even if other actions can achieve the same expected return.
In fact, depending on the transition kernel, the scale of the rewards, and the discount factor, $\pistarerl$ may play $\abad$ from state $x$ with arbitrarily high probability.

\end{document}